\documentclass[11pt]{article}
\usepackage{enumerate}
\usepackage{pdfsync}
\usepackage[OT1]{fontenc}

\usepackage{url}           
\usepackage{booktabs}    
\usepackage{amsfonts}      
\usepackage{nicefrac}      
\usepackage{microtype}    
\usepackage{color}
\usepackage[colorlinks,
            linkcolor=red,
            anchorcolor=blue,
            citecolor=blue,
            pagebackref=true
           ]{hyperref}

\usepackage{fullpage}
\usepackage{setspace}
\usepackage{tabularx}
\usepackage{float}
\usepackage{wrapfig,lipsum}
\usepackage{enumitem}
\usepackage{natbib}
\usepackage{multirow} 
\usepackage[dvipsnames]{xcolor}

\makeatletter
\newcommand*{\rom}[1]{\expandafter\@slowromancap\romannumeral #1@}
\makeatother

\usepackage{mmll}


\def \dd{\text{d}}

\newcommand{\kl}{ {\rm{kl}} }
\newcommand{\nop}[1]{}

\def \algname{\text{ExpTS}}%
\def \hmu{\widehat{\mu}}

\renewcommand*{\backref}[1]{}
\renewcommand*{\backrefalt}[4]{%
\ifcase #1 %
    No citations.%
\or
    (p. #2.)%
\else
    (pp. #2.)%
\fi}%

\allowdisplaybreaks

\begin{document}
\title{\huge Finite-Time Regret of Thompson Sampling Algorithms for Exponential Family Multi-Armed Bandits}
\author
{
  Tianyuan Jin\thanks{National University of Singapore; e-mail: {\tt tianyuan@u.nus.edu}} 
  ,~
  Pan Xu\thanks{California Institute of Technology; e-mail: {\tt pan.xu@duke.edu}} 
  ,~
  Xiaokui Xiao\thanks{National University of Singapore; e-mail: {\tt xkxiao@nus.edu.sg}} 
  ,~
  Anima Anandkumar\thanks{California Institute of Technology; e-mail: {\tt anima@caltech.edu}} 
}
\date{}
\maketitle

\begin{abstract}
We study the regret of Thompson sampling (TS) algorithms for exponential family bandits, where the reward distribution is from a one-dimensional exponential family, which covers many common reward distributions including Bernoulli, Gaussian, Gamma, Exponential, etc. We propose a Thompson sampling algorithm, termed ExpTS, which uses a novel sampling distribution to avoid the under-estimation of the optimal arm. We provide a tight regret analysis for ExpTS, which simultaneously yields both the finite-time regret bound as well as the asymptotic regret bound. In particular, for a $K$-armed bandit with exponential family rewards, ExpTS over a horizon $T$ is sub-UCB (a strong criterion for the finite-time regret that is problem-dependent), minimax optimal up to a factor $\sqrt{\log K}$, and asymptotically optimal, for exponential family rewards. Moreover, we propose ExpTS$^+$, by adding a greedy exploitation step in addition to the sampling distribution used in ExpTS, to avoid the over-estimation of sub-optimal arms. ExpTS$^+$ is an anytime bandit algorithm and achieves the minimax optimality and asymptotic optimality simultaneously for exponential family reward distributions. Our proof techniques are general and conceptually simple and can be easily applied to analyze standard Thompson sampling with specific reward distributions.
\end{abstract}

\section{Introduction}

The Multi-Armed Bandit (MAB) problem is centered around a fundamental model for balancing the exploration versus exploitation trade-off in many online decision problems. In this problem,  the agent is given an environment with a set of $K$ arms $[K]=\{1,2,\cdots, K\}$.  At each time step $t$, the agent pulls an arm $A_t\in [K]$ based on observations of previous $t-1$ time steps, and then a reward $r_t$ is revealed at the end of the step. In real-world applications, reward distributions often  have different forms such as Bernoulli, Gaussian, etc. As suggested by \citet{auer2002finite,auer2002nonstochastic,agrawal2017near,lattimore2018refining,garivier2018kl}, a good bandit strategy should be general enough to cover a sufficiently rich family of reward distributions.  In this paper, we assume the reward $r_t$ is independently generated from some  canonical one-parameter exponential family of distributions with a mean value $\mu_{A_t}$. It is a rich family that covers many common distributions including Bernoulli, Gaussian, Gamma, Exponential, and others. 

The goal of a bandit strategy is usually to maximize the cumulative reward over $T$ time steps, which is equivalent to minimizing the regret, defined as the expected cumulative difference between playing the best arm and playing the arm according to the strategy:
$R_{\mu}(T)=T \cdot \max_{i\in [K]}\mu_{i}-\EE[\sum_{t=1}^{T} r_t ]$. 
We assume, without loss of generality, $\mu_1=\max_{i\in [K]} \mu_i$ is the best arm throughout this paper. For a fixed bandit instance (i.e., mean rewards $\mu_1,\cdots,\mu_K$ are fixed), \citet{lai1985asymptotically} shows that for  distributions that are continuously parameterized by their means,
\begin{align}
    \label{eq:def-asym}
    \lim_{T\rightarrow \infty} \frac{R_{\mu}(T)}{\log T}\geq \sum_{i>1}\frac{\mu_1-\mu_i}{\kl(\mu_i,\mu_1)},
\end{align}
where $\kl(\mu_i,\mu_1)$ is the Kullback-Leibler divergence between two distributions with mean $\mu_i$ and $\mu_1$.  A bandit strategy satisfying $ \lim_{T\rightarrow \infty}R_{\mu}(T)/\log T = \sum_{i>1}\frac{\mu_1-\mu_i}{\kl(\mu_i,\mu_1)}$ is said to be \textit{asymptotically optimal} or achieve the asymptotic optimality in regret. The asymptotic optimality is one of the most important statistical properties in regret minimization, which shows that an algorithm is consistently good when it is played for infinite steps and thus should be a basic theoretical requirement of any good bandit strategy \citep{auer2002finite}.  

In practice, we can only run the bandit algorithm for a finite number $T$ steps, which is the time horizon of interest in real-world applications. Therefore, the finite-time regret  is the ultimate property of a practical bandit strategy in regret minimization problems.  A strong notion of finite-time regret bounds is called \emph{the sub-UCB criteria} \citep{lattimore2018refining},  where the regret  satisfies
\begin{align}\label{def:problem-dependent-regret}
    R_{\mu}(T)=O\Bigg(\sum_{i\in [K]: \Delta_i>0}\bigg(\Delta_i+\frac{ \log T}{\Delta_i}\bigg) \Bigg),
\end{align}
where $\Delta_i=\mu_1-\mu_i$ is the sub-optimal gap between arm $1$ and arm $i$. Note that the regret bound in \eqref{def:problem-dependent-regret} is a problem-dependent bound since it depends on the bandit instance and the sub-optimal gaps. Sub-UCB is an important metric for finite-time regret bound and has been adopted by recent work of \citet{lattimore2018refining,bian2021maillard}. Another special type of finite-time bounds is called the worst-case regret, which is defined as the finite-time regret of an algorithm on any possible bandit instance within a bandit class. Specifically, for a finite time horizon $T$,  \citet{auer2002finite} proves that any strategy has at least worst-case regret $\Omega(\sqrt{KT})$ for a $K$-armed bandit. We say the strategy that achieves a worst-case regret $O(\sqrt{KT})$ is \textit{minimax optimal} or achieves the minimax optimality. Different from the asymptotic optimality, the minimax optimality characterizes the worst-case performance of the bandit strategy in finite steps. 

A vast body of literature in multi-armed bandits \citep{audibert2009minimax,agrawal2017near,kaufmann2016bayesian,menard2017minimax,garivier2018kl,lattimore2018refining} have been pursing the aforementioned theoretical properties of bandit algorithms: generality, asymptotic optimality,  problem-dependent finite-time regret, and minimax optimality.  However, most of them focus on one or two properties and sacrifice the others. Moreover, many of existing theoretical analyses of bandit strategies are for optimism-based algorithm. The theoretical analysis of Thompson sampling (TS) is much less understood until recently, which has been shown to exhibit superior practical performances compared to the state-of-the-art methods \citep{chapelle2011empirical,wang2018thompson}. Specifically, its finite-time regret, asymptotic optimality, and near minimax optimality have been studied by \citet{agrawal2012analysis,agrawal2013further,agrawal2017near} for Bernoulli rewards. \citet{jin2021mots} proved the minimax optimality of TS for sub-Gaussian rewards. For exponential family reward distributions, the asymptotic optimality is shown by \citet{korda2013thompson}, but no finite-time regret of TS is provided. See Table \ref{table:regret_comparison-ts} for a comprehensive comparison of these results.

In this paper, we study the regret of Thompson sampling for exponential family reward distributions and address all the theoretical properties of TS. We propose a  variant of TS algorithm with a general sampling distribution and a tight analysis for frequentist regret bounds. Our analysis simultaneously yields both the finite-time regret bound and  the asymptotic regret bound. Specifically, the \textbf{main contributions} of this paper are summarized as follows:
\begin{itemize}[leftmargin=*]
    \item We propose $\algname$, a general variant of Thompson Sampling, that uses a novel sampling distribution with a tight anti-concentration bound to avoid the under-estimation of the optimal arm and a tight concentration bound to avoid the over-estimation of sub-optimal arms. For exponential family of reward distributions, we prove that $\algname$ is the first Thompson sampling algorithm achieving the sub-UCB criteria, which is a strong notion of problem-dependent finite-time regret bounds. We further show that $\algname$ is also simultaneously minimax optimal up to a factor of  $\sqrt{\log K}$,  as well as asymptotically optimal,  where $K$ is the number of arms. 
    \item We also propose $\algname^{+}$, which explores between the sample generated in $\algname$ and the empirical mean reward for each arm, to get rid of the extra $\sqrt{\log K}$ factor in the worst-case regret. Thus $\algname^{+}$ is the first Thompson sampling algorithm that is simultaneously minimax and asymptotically optimal for exponential family of reward distributions.
    \item Our regret analysis of ExpTS can be easily extended to analyze Thompson sampling with common reward distributions in the exponential family. For Gaussian reward with known variance, we prove that TS with Gaussian posteriors (Gaussian-TS) is asymptotically optimal and minimax optimal up to a factor of $\sqrt{\log K}$. For Bernoulli reward, we prove that TS with Beta posteriors (Bernoulli-TS) is  asymptotically optimal and minimax optimal up to a factor of $\sqrt{\log K}$. Similar to the idea of \algname$^+$, we can add a greedy exploration step to the posterior distributions used in these variants of TS, and then the algorithms are simultaneously minimax and asymptotically optimal. 
\end{itemize}

Our techniques are novel and conceptually simple. First, we introduce a lower confidence bound in the regret decomposition to avoid the under-estimation of the optimal arm, which is important in obtaining the finite-time regret bound. Specifically, \citet{jin2021mots} (Lemma 5 in their paper) shows that for Gaussian reward distributions, Gaussian-TS has a regret bound at least in the order of $\Omega(\sqrt{KT\log T})$ if the standard regret decomposition in existing analysis of Thompson sampling \citep{agrawal2017near,lattimore2018bandit,jin2021mots} is adopted. With our new regret decomposition that is conditioned on the lower confidence bound introduced in this paper, we improve the worst-case regret of Gaussian-TS for Gaussian reward distributions to $O(\sqrt{KT\log K})$.  

Second, we do not require the closed form of the reward distribution, but only make use of the corresponding concentration bounds. This means our results can be readily extended to other reward distributions. For example, we can extend ExpTS$^+$ to sub-Gaussian reward distributions and the algorithm is simultaneously minimax and asymptotically optimal\footnote{Note that sub-Gaussian is a non-parametric family and thus the lower bound \eqref{eq:def-asym} by \citet{lai1985asymptotically} does not directly apply to a general sub-Gaussian distribution. Following similar work in the literature \citep{jin2021mots}, in this paper, when we say an algorithm achieves the asymptotic optimality for sub-Gaussian rewards, we mean its regret matches the asymptotic lower bound for Gaussian rewards, which is a stronger notion.}, which improve the results of MOTS proposed by \citet{jin2021mots} (see Table \ref{table:regret_comparison-ts}).

Third, the idea of ExpTS$^+$ is simple and can be used to remove the extra $\sqrt{\log K}$ factor in the worst-case regret.  We note that MOTS \citep{jin2021mots} can also achieve the minimax optimal via the clipped Gaussian. However, it is not clear how to generalize the clipping idea to the  exponential family of reward distribution. Moreover, it uses the MOSS \citep{audibert2009minimax} index for clipping, which needs to know the horizon $T$  in advance and thus cannot be extended to the anytime setting, while ExpTS$^+$ is an anytime bandit algorithm which does not need to know the horizon length in advance.

\begin{table*}[t]
{\footnotesize
    \centering
    \caption{Comparisons of different Thompson sampling algorithms on $K$-armed bandits over a horizon $T$. For any algorithm, \emph{Asym. Opt} is the indicator whether it is asymptotically optimal,  \emph{minimax ratio} is the scaling of its worst-case regret w.r.t. the minimax optimal regret $O(\sqrt{VKT})$, where $V$ is the variance of reward distributions, and sub-UCB is the indicator if it satisfies the sub-UCB criteria. 
    \label{table:regret_comparison-ts}}    
    \begin{tabular}{ccccccl}
    \toprule
     \multirow{2}{*}{Algorithm}  & \multirow{2}{*}{Reward Type}&  Asym. &   \multicolumn{2}{c}{Finite-Time Regret}  & \multirow{2}{*}{Anytime} & \multirow{2}{*}{Reference}\\
     \cline{4-5}
     &  & Opt &  Minimax Ratio  &Sub-UCB & &\\
    \midrule
    TS & Bernoulli & yes & $\sqrt{\log T}$ & yes & yes & \citep{agrawal2013further} \\
    TS & Bernoulli & -- & $\sqrt{\log K}$  & yes & yes & \citep{agrawal2017near} \\ 
    TS & Exponential Family & yes & -- & -- & yes &\citep{korda2013thompson} \\
    MOTS & sub-Gaussian  & no & 1 & no & no & \citep{jin2021mots} \\
    MOTS-$\mathcal{J}$ & Gaussian & yes & 1 & no & no & \citep{jin2021mots} \\
    $\algname$ & Exponential Family  & yes & $\sqrt{\log K}$ & yes & yes & This paper \\
    $\algname^{+}$ & Exponential Family & yes & 1 & no & yes & This paper \\
    \bottomrule
    \end{tabular}
}
\end{table*}

\noindent\textbf{Notations.} 
We let $T$ be the total number of time steps, $K$ be the total number of arms, and $[K]=\{1,2,\cdots,K\}$. For simplicity, we assume arm 1 is the optimal throughout this paper, i.e., $\mu_1=\max_{i\in [K]} \mu_i$. We denote $\log^+(x)=\max\{0, \log x\}$ and $\Delta_i:=\mu_1-\mu_i$, $i\in [K]\setminus \{1\}$ for 
the gap between arm 1 and arm $i$. We let $T_{i}(t):=\sum_{j=1}^{t}\ind\{A_t=i\}$ be the number of pulls of arm $i$ at the time step $t$, $\hmu_{i}(t):=1/T_{i}(t)\sum_{j=1}^{t} \big[r_j\cdot \ind\{ A_t=i \}\big]$ be the average reward of arm $i$ at the time step $t$, and $\hmu_{is}$  be the average reward of arm $i$ after its $s$-th pull.

\section{Related Work}
There are series of works pursuing the asymptotic regret bound and worst-case regret bound for MAB. For asymptotic optimality, UCB algorithms \citep{garivier2011kl,maillard2011finite,agrawal2017near,lattimore2018refining}, Thompson sampling \citep{kaufmann2012thompson,korda2013thompson,agrawal2017near,jin2021mots}, Bayes-UCB \citep{kaufmann2016bayesian}, and other methods \citep{jin2021double,bian2021maillard} are all shown to be asymptotically optimal. Among them, only a few \citep{garivier2011kl,agrawal2017near,korda2013thompson} can be extended to  exponential families of distributions. For the worst-case regret, MOSS \citep{audibert2009minimax} is the first algorithm proved to be minimax optimal.  Later, KL-UCB$^{++}$ \citep{agrawal2017near}, AdaUCB \citep{lattimore2018refining}, MOTS \citep{jin2021mots} also join the family. The anytime version of MOSS is studied by \citet{degenne2016anytime}. There are also some works that focus on the near optimal problem-dependent regret bound  \citep{lattimore2015optimally,lattimore2016regret}. As far as we know, no algorithm has been proved to achieve the sub-UCB criteria,  asymptotic optimality,  and minimax optimality simultaneously for exponential family reward distributions.

For Thompson sampling, \citet{russo2014learning} studied the Bayesian regret. They show that the Bayesian regret of Thompson sampling is never worse than the regret of UCB. \citet{bubeck2013prior} further showed the Bayesian regret of Thompson sampling is optimal using the regret analysis of MOSS. There are also a line of works focused on the frequentist regret of TS. \citet{agrawal2012analysis} proposed the first finite time regret analysis for TS. \citet{kaufmann2012thompson,agrawal2013further} proved that TS with beta posteriors is asymptotically optimal for Bernoulli reward distributions. \citet{korda2013thompson} extended the asymptotic optimality to the exponential family of reward distributions. Subsequently, for Bernoulli rewards, \citet{agrawal2017near} proved that TS with Beta prior is asymptotically optimal and has worst-case regret $O(\sqrt{KT\log T})$. Besides, they showed that TS with Gaussian posteriors can achieve a better worst-case regret bound  $O(\sqrt{KT\log K})$.  They also proved that for Bernoulli rewards, TS with Gaussian posteriors has a worst-case regret at least $\Omega(\sqrt{KT\log K})$.
Very recently, \citet{jin2021mots} proposed the MOTS algorithm that can achieve the minimax optimal regret $O(\sqrt{KT})$ for multi-armed bandits with sub-Gaussian rewards but at the cost of losing the asymptotic optimality by a multiplicative factor of $1/\rho$, where $0<\rho<1$ is an arbitrarily fixed constant. For bandits with Gaussian rewards, \citet{jin2021mots} proved that MOTS combined with a Rayleigh distribution can achieve the minimax optimality and the asymptotic optimality simultaneously. We refer readers to Tables \ref{table:regret_comparison-ts} and \ref{table:regret_comparison} for more details.

\section{Preliminary on Exponential Family Distributions}\label{sec:preliminary}
A one-dimensional canonical exponential family \citep{garivier2011kl,harremoes2016bounds,menard2017minimax} is a parametric set of probability distributions with respect to some reference measure, with the density function given by 
\begin{equation*}
    p_{\theta}(x)=\exp(x\theta-b(\theta)+c(x)),
\end{equation*}
where $\theta$ is the model parameter, and $c$ is a real function.   Denote the measure $p_{\theta}(x)$ as $\nu_{\theta}$. Then, the above definition can be rewritten as  
\begin{align*}
    \frac{\dd \nu_{\theta}}{\dd \rho}(x)=\exp(x\theta-b(\theta)),
\end{align*}
for some measure $\rho$ and $b(\theta)=\log ( \int e^{x\theta} \dd \rho(x))$.  We make the classic assumption used by  \citet{garivier2011kl,menard2017minimax} that $b(\theta)$ is twice differentiable with a continuous second derivative. Then, we can verify that exponential families have the following properties:
\begin{align*}
    b'(\theta)=\EE[\nu_{\theta}] \qquad \text{and} \qquad b''(\theta)=\Var[\nu_{\theta}]>0.
 \end{align*}
Let $\mu=\EE[\nu_{\theta}]$. The above equality  means that the mapping between the mean value $\mu$ of $\nu(\theta)$ and the parameter $\theta$ is one-to-one. 
Hence,  exponential family of distributions can also be parameterized by the mean value $\mu=b'(\theta)$. Note that $b''(\theta)>0$ for all $\theta$, which implies $b'(\cdot)$ is invertible and its inverse function $b'^{-1}$ satisfies $\theta=b'^{-1}(\mu)$. In this paper, we will use the notion of Kullback-Leibler (KL) divergence. The KL divergence between two exponential family distributions with parameter $\theta$ and $\theta'$ respectively is defined as follows:
\begin{align}
\label{eq:KL-def}
    {\rm KL}(\nu_{\theta},\nu_{\theta'})=b(\theta')-b(\theta)-b'(\theta)(\theta'-\theta).
\end{align}
Recall that the mapping $\theta \mapsto\mu$ is one-to-one. We can define an equivalent notion of the KL divergence between random variables $\nu_{\theta}$ and $\nu_{\theta'}$ as a function of the mean values $\mu$ and $\mu'$ respectively: 
\begin{align*}
    \kl(\mu,\mu')= {\rm KL}(\nu_{\theta},\nu_{\theta'}),
\end{align*}
where $\EE[\nu_{\theta}]=\mu$ and $\EE[\nu_{\theta'}]=\mu'$. Similarly, we define $V(\mu)=\Var(\nu_{b'^{-1}(\mu)})$ as the variance of an exponential family random variable $\nu_{\theta}$ with mean $\mu$. We assume the variances of  exponential family distributions used in this paper are bounded by a constant $V>0$: 
\begin{equation*}
     0<V(\mu)\leq V< +\infty.
\end{equation*} 
We have the following property of the KL divergence between exponential family distributions.
\begin{proposition}[\citet{harremoes2016bounds}]
\label{lem:kl2016}
Let $\mu$ and $\mu'$ be the mean values of two exponential family distributions. The Kullback-Leibler divergence between them can be calculated as follows:
\begin{align}
\label{eq:pro-eq1-1}
    \kl(\mu,\mu')=\int_{\mu}^{\mu'}\frac{x-\mu}{V(x)} \dd x.
\end{align}
\end{proposition}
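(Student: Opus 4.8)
The plan is to reduce the claimed integral identity to the algebraic definition of the KL divergence in \eqref{eq:KL-def} by a single change of variables. The right-hand side of \eqref{eq:pro-eq1-1} is written in the mean parametrization, whereas the definition \eqref{eq:KL-def} lives in the natural parameter $\theta$; the entire proof amounts to translating between the two and watching a factor cancel.

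First I would recall the dictionary between the two parametrizations set up just before the statement: if $\mu=b'(\theta)$ then $V(\mu)=b''(\theta)$, and since $b''>0$ the map $\theta\mapsto b'(\theta)$ is a strictly increasing bijection with inverse $b'^{-1}$. Writing $\mu=b'(\theta)$ and $\mu'=b'(\theta')$ for the natural parameters attached to the two means, I would substitute $x=b'(s)$ in the integral on the right-hand side of \eqref{eq:pro-eq1-1}. Then $\dd x=b''(s)\,\dd s$, the limits $x=\mu,\mu'$ become $s=\theta,\theta'$, and crucially $V(x)=V(b'(s))=b''(s)$, so the $b''(s)$ produced by the substitution cancels the $V(x)$ in the denominator.

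After this cancellation the integral collapses, using that $\mu=b'(\theta)$ is constant in $s$ and applying the fundamental theorem of calculus to $b'$:
\begin{align*}
    \int_{\mu}^{\mu'}\frac{x-\mu}{V(x)}\,\dd x
    =\int_{\theta}^{\theta'}\big(b'(s)-\mu\big)\,\dd s
    =\big(b(\theta')-b(\theta)\big)-\mu(\theta'-\theta).
\end{align*}
Substituting $\mu=b'(\theta)$ back into the last expression recovers exactly $b(\theta')-b(\theta)-b'(\theta)(\theta'-\theta)$, which by \eqref{eq:KL-def} equals $\kl(\mu,\mu')$, completing the argument. Note the substitution preserves orientation whether $\mu<\mu'$ or $\mu>\mu'$, since $b'$ is strictly increasing, so the signed integral is handled uniformly.

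There is no genuine analytic obstacle here: the twice-differentiability of $b$ with continuous second derivative, assumed in the preliminaries, is precisely what licenses the change of variables and the use of the fundamental theorem of calculus. The one point that deserves a moment's care is the identity $V(x)=b''(b'^{-1}(x))$, i.e.\ that the variance written as a function of the mean equals $b''$ evaluated at the corresponding natural parameter; this follows at once from the defining property $b''(\theta)=\Var[\nu_{\theta}]$ together with $V(\mu)=\Var(\nu_{b'^{-1}(\mu)})$, but it is the hinge on which the whole cancellation turns.
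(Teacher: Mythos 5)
Your proof is correct. Note, however, that the paper itself gives no proof of this proposition: it is imported verbatim from \citet{harremoes2016bounds} as a cited fact, so there is no in-paper argument to compare against. Your derivation is the standard one and is fully consistent with the setup in Section \ref{sec:preliminary}: the substitution $x=b'(s)$ turns $V(x)=b''(s)$ into the Jacobian factor, the integral collapses to $b(\theta')-b(\theta)-b'(\theta)(\theta'-\theta)$, and this is exactly ${\rm KL}(\nu_{\theta},\nu_{\theta'})$ as defined in \eqref{eq:KL-def}. You correctly isolate the one nontrivial hinge, the identity $V(b'(s))=b''(s)$, and you handle the orientation of the signed integral when $\mu'<\mu$. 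The only caveat worth recording is that the argument implicitly assumes the interval of means $[\mu\wedge\mu',\mu\vee\mu']$ lies in the image of $b'$ (equivalently, that both means correspond to parameters in the natural parameter space), which is exactly the regularity the paper already assumes; with that, the proof is complete and self-contained.
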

It is worth noting that exponential families cover many of the most common distributions used in practice such as  Bernoulli, exponential, Gamma, and Gaussian distributions. In particular, for two Gaussian distributions with the same known variance $\sigma^2$  and different means $\mu$ and $\mu'$, we can choose $V(\cdot)=\sigma^2$, and recover the results in Proposition \ref{lem:kl2016} as $\kl(\mu,\mu')=(\mu-\mu')/(2\sigma^2)$. For two Bernoulli distributions with mean values $\mu$ and $\mu'$ respectively, we can choose $V(\cdot)=1/4$, and recover the result in Proposition \ref{lem:kl2016} as $\kl(\mu,\mu')=\mu\log(\mu/\mu')+(1-\mu)\log((1-\mu)/(1-\mu'))$.

Based on Proposition \ref{lem:kl2016}, we can also verify the following properties. 
\begin{proposition}[\citet{jin2021almost}]\label{prop:kl_ineq}
For all $\mu$ and $\mu'$, we have
\begin{equation}
\label{eq:pinsk}
    \kl(\mu,\mu')\geq (\mu-\mu')^2/(2V).
\end{equation}
In addition, for $\epsilon>0$ and $\mu\leq\mu'-\epsilon$,  we can obtain that
\begin{align}\label{eq:property-2}
    &\kl(\mu,\mu')\geq \kl(\mu,\mu'-\epsilon) \quad \text{and} \quad  \kl(\mu,\mu')\leq\kl(\mu-\epsilon,\mu').
\end{align}
\end{proposition}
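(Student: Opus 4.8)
The plan is to derive all three inequalities directly from the integral representation of the KL divergence in Proposition~\ref{lem:kl2016}, namely $\kl(\mu,\mu')=\int_{\mu}^{\mu'}\frac{x-\mu}{V(x)}\,\dd x$, combined with the uniform variance bound $0<V(x)\le V$. The virtue of this formula is that it reduces statements about KL divergences to elementary pointwise comparisons of integrands, so no closed form of the density is required and the argument is uniform across the exponential family.

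For the Pinsker-type bound \eqref{eq:pinsk} I would bound the integrand pointwise. When $\mu'>\mu$, the factor $x-\mu$ is nonnegative on the interval of integration, so dividing by $V(x)\le V$ only increases it, giving $\frac{x-\mu}{V(x)}\ge\frac{x-\mu}{V}$ and hence $\kl(\mu,\mu')\ge\frac{1}{V}\int_{\mu}^{\mu'}(x-\mu)\,\dd x=\frac{(\mu-\mu')^2}{2V}$. When $\mu'<\mu$ the same estimate applies after flipping the limits: there $x-\mu$ is nonpositive and the inequalities reverse consistently, and the resulting quantity $(\mu-\mu')^2/(2V)$ is symmetric in $\mu,\mu'$, so the bound holds in both orderings.

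For the two monotonicity claims in \eqref{eq:property-2} I would exploit that the constraint $\mu\le\mu'-\epsilon$ keeps the relevant endpoints ordered. The first inequality follows by subtracting the two integrals, which leaves $\kl(\mu,\mu')-\kl(\mu,\mu'-\epsilon)=\int_{\mu'-\epsilon}^{\mu'}\frac{x-\mu}{V(x)}\,\dd x$; on this interval $x\ge\mu'-\epsilon\ge\mu$, so the integrand is nonnegative and the difference is $\ge 0$. Equivalently, $g(b):=\kl(\mu,b)$ has $g'(b)=\frac{b-\mu}{V(b)}\ge 0$ for $b\ge\mu$, so $g$ is nondecreasing.

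The second inequality is the one needing slightly more care, since lowering the first argument shifts both a limit of integration and the integrand to $x-(\mu-\epsilon)$. I would handle it by the additive split $\kl(\mu-\epsilon,\mu')=\int_{\mu-\epsilon}^{\mu}\frac{x-\mu+\epsilon}{V(x)}\,\dd x+\int_{\mu}^{\mu'}\frac{x-\mu+\epsilon}{V(x)}\,\dd x$; the first piece is nonnegative because $x\ge\mu-\epsilon$ there, and the second equals $\kl(\mu,\mu')+\epsilon\int_{\mu}^{\mu'}\frac{1}{V(x)}\,\dd x\ge\kl(\mu,\mu')$ since the added integral is nonnegative. Cleanly, this says $f(a):=\kl(a,\mu')$ has derivative $f'(a)=-\int_{a}^{\mu'}\frac{1}{V(x)}\,\dd x\le 0$, so $f$ is nonincreasing and $f(\mu-\epsilon)\ge f(\mu)$. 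The only mild obstacle is bookkeeping the Leibniz differentiation (or, equivalently, the split) correctly so that the boundary term vanishes; everything else follows from sign considerations and $V(x)>0$.
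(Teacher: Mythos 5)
Your proof is correct, and it follows exactly the route the paper indicates (the paper states these properties as following from the integral representation in Proposition~\ref{lem:kl2016} and defers the details to \citet{jin2021almost}): pointwise bounding of the integrand $\frac{x-\mu}{V(x)}$ using $0<V(x)\le V$ for \eqref{eq:pinsk}, and sign/monotonicity arguments on the integral for \eqref{eq:property-2}. All three sign checks, including the reversed-orientation case $\mu'<\mu$ and the additive split for the second monotonicity claim, are handled correctly.
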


\section{Thompson Sampling for Exponential Family Reward Distributions} 
\begin{algorithm}[t]
   \caption{Exponential Family Thompson Sampling ($\algname$)}
   \label{alg:rots}
\begin{algorithmic}[1]
   \STATE\textbf{Input:} Arm set $[K]$
   \STATE\textbf{Initialization:}
    Play each arm once and set $T_i(K)=1$; let $\hmu_{i}(K)$ be the observed  reward of playing arm $i$
   \FOR{$t=K+1,K+2,\cdots$}
   \STATE For all $i\in[K]$, sample $\theta_{i}(t)$  independently from  $\cP(\hmu_{i}(t), T_{i}(t))$
   \STATE Play arm $A_t= \arg \max_{i\in[K]} \theta_i(t)$ and observe the reward $r_t$
   \STATE For all $i\in[K]$, update the mean reward estimator and the number of pulls: $$\hat\mu_{i}(t)=\frac{T_{i}(t-1) \cdot \hmu_{i}(t-1)+r_t\ind\{i=A_t\}}{T_{i}(t-1)+\ind\{i=A_t\}},\quad  T_i(t)=T_i(t-1)+\ind\{i=A_t\}$$ 
    \ENDFOR
\end{algorithmic}
\end{algorithm}

We present a general variant of Thompson sampling for exponential family rewards in Algorithm \ref{alg:rots}, which is termed $\algname$. At round $t$, $\algname$ maintains an estimate of a sampling distribution for each arm, denoted as $\cP$. The algorithm generates a sample parameter $\theta_i(t)$ for each arm $i$ independently from their sampling distribution and chooses the arm that attains the largest sample parameter. For each arm $i\in[K]$, the sampling distribution $\cP$ is usually defined as a function of the total number of pulls $T_i(t)$ and the empirical average reward $\hmu_i(t)$. After pulling the chosen arm, the algorithm updates $T_i(t)$ and $\hmu_i(t)$ for each arm based on the reward $r_t$ it receives and proceeds to the next round. 

It is worth noting that we study the frequentist regret bound of Algorithm \ref{alg:rots} and thus $\algname$ is not restricted as a Bayesian method. As pointed out by \citet{abeille2017linear,jin2021mots,kim2021doubly,zhang2021feel}, the sampling distribution does not have to be a posterior distribution derived from a pre-defined prior distribution. Therefore, we call $\cP$ the sampling distribution instead of the posterior distribution as in Bayesian regret analysis of Thompson sampling \citep{russo2014learning,bubeck2013prior}. To obtain the finite-time regret bound of $\algname$ for exponential family rewards, we will discuss the choice of a general sampling distribution and a new proof technique. 

\subsection{Challenges in Regret Analysis for Exponential Family Bandits}
\label{sec:challenge}
Before we choose a specific sampling distribution $\cP$ for $\algname$, we first discuss the main challenges in the finite-time regret analysis of Thompson sampling, which is the main motivation for our design of $\cP$ in the next subsection.

\textbf{Under-Estimation of the Optimal Arm.} Denote  $\hmu_{is}$ as the average reward of arm $i$ after its $s$-th pull, $T_{i}(t)$ as the number of pulls of arm $i$ at time $t$,  and $\cP(\hmu_{is},s)$ as the sampling distribution of arm $i$ after its $s$-th pull. The regret of the algorithm contributed by pulling arm $i$ is $\Delta_i\EE[T_{i}(T)]$, where $T_{i}(T)$ is the total number of pulls of arm $i$. All existing analyses of finite-time regret bounds for TS \citep{agrawal2012analysis,agrawal2013further,agrawal2017near,jin2021mots} decompose this regret term as $\Delta_i\EE[T_{i}(T)]\leq D_i+h_i(\Delta_i, T, \theta_i(1),\ldots,\theta_i(T))$, where $h_i()$ is a quantity characterizing the over-estimation of arm $i$ which can be easily dealt with by some concentration properties of the sampling distribution (see Lemma \ref{lem:mini-overest-1} for more details). The term $D_i$ characterizes the under-estimation of the optimal arm $1$, which is usually bounded as follows in existing work:
\begin{align}
\label{eq:challenge0}
 D_i=\Delta_i\sum_{s=1}^{T} \EE_{\hmu_{1s}}\bigg[\frac{1}{G_{1s}(\epsilon)}-1\bigg],
\end{align}
where $G_{1s}(\epsilon)=1-F_{1s}(\mu_1-\epsilon)$, $F_{1s}$ is the CDF of the sampling distribution $\cP(\hmu_{1s},s)$, and $\epsilon=\Theta(\Delta_i)$. In other words, $G_{1s}(\epsilon)=\PP(\theta_{1}(t)>\mu_1-\epsilon)$ is the probability that the best arm will \emph{not be under-estimated} from the mean reward by a margin $\epsilon$. Furthermore, we can interpret the quantity in \eqref{eq:challenge0} as the result of a union bound indicating how many samples TS requires to ensure that at least one sample of the best arm $\{\theta_{1}(t)\}_{t=1}^{T}$ is larger than $\mu_1-\epsilon$. If $G_{1s}(\epsilon)$ is too small, arm $1$ could be significantly under-estimated, and thus $D_i$ will be unbounded. In fact, as shown in Lemma 5 by \citet{jin2021mots}, for MAB with Gaussian rewards, TS using Gaussian posteriors will unavoidably suffer from a regret $D_i=\Omega(\sqrt{KT\log T})$. 

To address the above issue, we introduce a lower confidence bound for measuring the under-estimation problem. We use a new decomposition of the regret that bounds $D_i$ with the following term
\begin{align}
\label{eq:challenge-lows}
    \Delta_i\sum_{s=1}^{T} \EE_{\hmu_{1s}} \left[\bigg( \frac{1}{G_{1s}(\epsilon)}-1\bigg) \cdot \ind\{\hmu_{1s}\geq Low_s \} \right],
\end{align}
where $Low_s$ is a lower confidence bound of $\hmu_{1s}$. Intuitively, due to the concentration of arm $1$'s rewards, the probability of $\hmu_{1s}\leq Low_s$ is very small. Thus, even when $G_{1s}(\epsilon)$ is small, the overall regret can be well controlled. In the regret analysis of TS, we can bound \eqref{eq:challenge-lows} from two facets: (1) the lower confidence bound can be proved using the concentration property of the reward distribution; and (2) the term $G_{1s}(\epsilon)=\PP(\theta_{1}(t)>\mu_1-\epsilon)$ can be upper bounded by the anti-concentration property for the sampling distribution $\cP$. To achieve an optimal regret, one needs to carefully balance the interplay between these two bounds. For a specific reward distribution (e.g., Gaussian, Bernoulli) as is studied by \citet{agrawal2017near,jin2021mots}, there are already tight anti-concentration inequalities for the reward distribution, and thus the  lower confidence bound is tight. Therefore, by choosing Gaussian or Bernoulli as the prior (which leads to a Gaussian or Beta sampling distribution $\cP$), we can use existing anti-concentration bounds for Gaussian \citep[Formula 7.1.13]{abramowitz1964handbook} or Beta \citep[Prop. A.4]{jevrabek2004dual} distributions to obtain a tight bound of $G_{1s}(\epsilon)$. 

In this paper, we study the general exponential family of reward distributions, which has no closed form. Thus we cannot obtain a tight concentration bound for $\hat{\mu}_{1s}$ as in special cases such as Gaussian or Bernoulli rewards. This increases the hardness of tightly bounding term \eqref{eq:challenge-lows} and it is imperative for us to design a sampling distribution $\cP$ with a tight anti-concentration bound that can carefully control $G_{1s}(\epsilon)$  without any knowledge of the closed form distribution of the average reward $\hat{\mu}_{1s}$. Due to the generality of exponential family distributions, it is challenging and nontrivial to find such a sampling distribution to obtain a tight finite-time regret bound.

\subsection{Sampling Distribution Design in Exponential Family Bandits}
\label{subsec:expon}
In this subsection, we show how to choose a sampling distribution $\cP$ that has a tight anti-concentration bound to overcome the under-estimation of the optimal arm and concentration bound to overcome the over-estimation of the suboptimal arms. 

For the simplicity of notation, we denote $\cP(\mu,n)$ as the sampling distribution, where $\mu$ and $n$ are some input parameters. In particular, for $\algname$, we will choose $\mu=\hmu_i(t)$ and $n=T_i(t)$ for arm $i\in[K]$ at round $t$. We define $\cP(\mu,n)$ as a distribution with PDF
\begin{align}\label{def:general_posterior}
f(x;\mu,n)&=1/2|(nb_n\cdot \kl(\mu,x))'| e^{-n b_n \cdot \kl(\mu,x)} = \frac{nb_n\cdot |x-\mu|}{2V(x)} e^{-nb_n\cdot \kl(\mu,x)},
\end{align}
where $(\kl(\mu,x))'$ denotes the derivative of $\kl(\mu,x)$ with respect to $x$, and $b_n$ is a function of $n$ and will be chosen later.  
 
We assume the reward is supported in $[R_{\min},R_{\max}]$. Note that $R_{\min}=0$, and $R_{\max}=1$ for Bernoulli rewards, and $R_{\min}=-\infty$, and $R_{\max}=\infty$ for Gaussian rewards. Let $p(x)$ and $q(x)$ be the density functions of two exponential family distributions with mean values $\mu_p$ and $\mu_q$ respectively. By the definition in Section \ref{sec:preliminary}, we have $\kl(\mu_p,\mu_q)=\text{KL}(p(x),q(x))=\int_{R_{\min}}^{R_{\max}} p(x) \log \frac{p(x)}{q(x)}\dd x$. 
\begin{proposition}
If the mean reward of $q(x)$ is equal to the maximum value in its support, i.e.,  $\mu_{q}=R_{\max}$, we will have $\kl(\mu,R_{\max})=\infty$ for any $\mu<R_{\max}$.
\end{proposition}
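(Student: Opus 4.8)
The plan is to prove that if $\mu_q = R_{\max}$, then $\kl(\mu, R_{\max}) = \infty$ for any $\mu < R_{\max}$, using the integral representation of KL divergence from Proposition \ref{lem:kl2016}. By \eqref{eq:pro-eq1-1}, we have $\kl(\mu, R_{\max}) = \int_{\mu}^{R_{\max}} \frac{x - \mu}{V(x)}\, \dd x$, so the statement reduces to showing this integral diverges. The key observation is that as $x \to R_{\max}$, the variance $V(x) = \Var(\nu_{b'^{-1}(x)})$ must tend to $0$, because a distribution whose mean coincides with the supremum of its support must be a point mass at that endpoint, which has zero variance. The heart of the argument is therefore to show that $V(x) \to 0$ fast enough as $x \uparrow R_{\max}$ that the integrand $\frac{x-\mu}{V(x)}$ is non-integrable near $R_{\max}$.

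First I would make the degeneracy-at-the-boundary claim precise. For an exponential family distribution with mean $x$ supported in $[R_{\min}, R_{\max}]$, the mean can never exceed $R_{\max}$, and it can equal $R_{\max}$ only in a degenerate limit. I would argue that $V(x) = \EE[(X - x)^2]$ for a random variable $X$ supported on $[R_{\min}, R_{\max}]$ with mean $x$; as $x \uparrow R_{\max}$, essentially all mass must concentrate near $R_{\max}$, forcing $V(x) \to 0$. To get a quantitative rate, I would use the elementary bound that any random variable on $[R_{\min}, R_{\max}]$ with mean $x$ satisfies $V(x) \le (R_{\max} - x)(x - R_{\min})$, since the variance of a bounded random variable is maximized (for fixed mean) by a two-point distribution on the endpoints. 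This gives $V(x) \le (R_{\max} - x)(R_{\max} - R_{\min})$, so
\begin{align*}
    \kl(\mu, R_{\max}) = \int_{\mu}^{R_{\max}} \frac{x - \mu}{V(x)}\, \dd x \geq \int_{\mu}^{R_{\max}} \frac{x - \mu}{(R_{\max} - x)(R_{\max} - R_{\min})}\, \dd x.
\end{align*}

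Next I would evaluate (or lower-bound) the right-hand side to exhibit divergence. Fix any point $\mu' \in (\mu, R_{\max})$; on the interval $[\mu', R_{\max})$ the numerator $x - \mu$ is bounded below by the positive constant $\mu' - \mu$, so
\begin{align*}
    \kl(\mu, R_{\max}) \geq \frac{\mu' - \mu}{R_{\max} - R_{\min}} \int_{\mu'}^{R_{\max}} \frac{1}{R_{\max} - x}\, \dd x = +\infty,
\end{align*}
since $\int_{\mu'}^{R_{\max}} (R_{\max} - x)^{-1}\, \dd x = \lim_{y \uparrow R_{\max}} \big[-\log(R_{\max} - x)\big]_{\mu'}^{y}$ diverges. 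This completes the argument. When $R_{\max} = +\infty$ (as for Gaussian rewards), the statement is vacuous since there is no $\mu < R_{\max}$ boundary to worry about in the same sense, or one interprets it via the corresponding limiting behaviour; I would handle the finite-$R_{\max}$ case as the substantive one.

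The main obstacle I anticipate is justifying the variance bound $V(x) \le (R_{\max} - x)(x - R_{\min})$ rigorously within the exponential family framework and confirming it holds for the specific mean-parametrization at play, rather than merely for an arbitrary bounded random variable. The extremal two-point characterization of maximum variance under fixed mean and bounded support is standard, but I would need to verify that the exponential family member with mean $x$ indeed has support contained in $[R_{\min}, R_{\max}]$ so the bound applies; this follows from the assumption that the reward (hence the reference measure $\rho$) is supported in $[R_{\min}, R_{\max}]$, since every $\nu_\theta$ is absolutely continuous with respect to $\rho$ and thus shares the same support. Once the support containment is in hand, the variance bound and the resulting logarithmic divergence of the integral are routine.
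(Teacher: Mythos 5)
Your argument for the case $R_{\min}>-\infty$ and $R_{\max}<\infty$ is correct and genuinely different from the paper's. The paper argues directly from the definition of KL divergence: a distribution whose mean equals the supremum of its support must be the point mass at $R_{\max}$, hence $q(x)=0$ for all $x<R_{\max}$, and $\int p\log(p/q)=\infty$ by failure of absolute continuity. You instead route through the integral representation \eqref{eq:pro-eq1-1} together with the Bhatia--Davis bound $V(x)\le (R_{\max}-x)(x-R_{\min})$, obtaining a logarithmic divergence of $\int_{\mu}^{R_{\max}}(x-\mu)/V(x)\,\dd x$. This is more quantitative (it gives a rate) but also more roundabout, and it is where your two gaps live.

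First, you dismiss the case $R_{\max}=\infty$ as vacuous, but it is not: the proposition is exactly what makes the tail identities \eqref{eq:perpty1} and \eqref{eq:perpty2} valid (one needs $e^{-nb_n\kl(\mu,R_{\max})}=0$ for the sampling density to integrate to one), and for Gaussian rewards $R_{\max}=\infty$. The paper treats this case separately via \eqref{eq:pinsk} and the standing assumption $V(\cdot)\le V<\infty$, which gives $\kl(\mu,\mu')\ge(\mu'-\mu)^2/(2V)\to\infty$ as $\mu'\to\infty$; the same conclusion follows from your own integral representation since $\int_{\mu}^{\infty}(x-\mu)/V\,\dd x=\infty$, but you need to actually say this rather than wave it off. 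Second, the Bhatia--Davis inequality requires both endpoints finite. If $R_{\min}=-\infty$ while $R_{\max}<\infty$ (e.g., a reflected exponential family supported on $(-\infty,0]$), a random variable with mean $x$ and support bounded only from above can have arbitrarily large variance, so the bound $V(x)\le(R_{\max}-x)(R_{\max}-R_{\min})$ is vacuous and your lower bound on the integral collapses. The conclusion is still true in that regime (the paper's point-mass argument covers it unchanged, and in concrete examples $V(x)$ in fact vanishes quadratically at the boundary), but your route does not reach it without an additional argument controlling $V(x)$ as $x\uparrow R_{\max}$ for one-sided-bounded supports.
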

\begin{proof}
First consider the case that $R_{\max}<\infty$. Since the mean value concentrates on the maximum value, we must have $q(x)=0$ for all $x<R_{\max}$, which immediately implies $\kl(\mu,R_{\max})=\infty$ for any $\mu<R_{\max}$. For the case that $R_{\max}=\infty$, from \eqref{eq:pinsk} and the assumption that $V<\infty$, we also have $\kl(\mu,\infty)=(\infty-\mu)^2/V=\infty$.
\end{proof}

Similarly, we can also prove that $\kl(\mu,R_{\min})=\infty$ for $\mu>R_{\min}$. Based on these properties, we can easily verify that a sample from the proposed sampling distribution $\theta\sim\cP$ has the following tail bounds: for $z\in [\mu,R_{\max})$, it holds that
\begin{align}\label{eq:perpty1}
\small
\PP(\theta\geq z)&=\int_{z}^{R_{\max}} f(x;\mu,n) \dd x =- 1/2e^{-nb_n\cdot \kl(\mu,x)} \bigg|^{R_{\max}}_{z} 
= 1/2e^{-nb_n\cdot \kl(\mu,z)},
\end{align} 
and for $z\in (R_{\min},\mu]$, it holds that
\begin{align}\label{eq:perpty2}
\small
\PP(\theta\leq z)&=\int_{R_{\min}}^{z} f(x;\mu,n) \dd x =1/2e^{-nb_n\cdot \kl(\mu,x)} \bigg|_{R_{\min}}^{z} 
= 1/2e^{-nb_n\cdot \kl(\mu,z)}.
\end{align} 
Note that  $\int_{R_{\min}}^{R_{\max}}f(x;\mu,n) \dd x=\int_{R_{\min}}^{\mu} f(x;\mu,n) \dd x+ \int_{\mu}^{R_{\max}}f(x;\mu,n) \dd x =1$, which indicates the PDF of $\cP$ is well-defined. 

\noindent \textbf{Intuition for the Design of the Sampling Distribution.}  The tail bounds in \eqref{eq:perpty1} and \eqref{eq:perpty2} provide proper anti-concentration and  concentration bounds for the sampling distribution $\cP$ as long as we have corresponding lower and upper bounds of $e^{-nb_n\cdot \kl(\mu,z)}$. When $n$ is large, we will choose $b_n$ to be close to $0$, and thus \eqref{eq:perpty1} and \eqref{eq:perpty2}
ensure that the sample of the corresponding arm concentrates in the interval $(\mu-\epsilon,\mu+\epsilon)$ with an exponentially small probability $e^{-\kl(\mu-\epsilon,\mu+\epsilon)}$, which is crucial for achieving a tight finite-time regret.

\noindent \textbf{How to Sample from $\cP$.} We show that sampling from $\cP$ is tractable since the CDF of $\cP$ is invertible. And thus we do not need to use approximate sampling methods such as Monte Carlo Markov Chain and Hastings-Metropolis \citep{korda2013thompson}. In particular, according to \eqref{eq:perpty1} and  \eqref{eq:perpty2}, the CDF of $\cP(\mu,n)$ is 
\begin{align*}
F(x)=
    \begin{cases}
    1- 1/2e^{-nb_n\cdot \kl(\mu,x)} &  x\geq \mu,\\
    1/2e^{-nb_n\cdot \kl(\mu,x)} & x\leq \mu.
    \end{cases}
\end{align*}
To sample from $\cP(\mu,n)$, we can first pick $y$ uniformly random from $[0,1]$. Then, for $y\geq 1/2$, we solve the equation $y=1-1/2e^{-nb_n\cdot \kl(\mu,x)}$ for $x$ ($x\geq \mu$), which is equivalent to solving $\log (1/(2(1-y)))/(nb_n)=\kl(\mu,x)$. For $y\leq 1/2$, we solve the equation $y=1/2e^{-nb_n\cdot \kl(\mu,x)}$ for $x$ ($x\leq \mu$), which is equivalent to solving $\log (1/(2y))/(nb_n)=\kl(\mu,x)$. In this way, $x$ would be an exact sample from distribution $\cP$.

\begin{table*}[t]
{\footnotesize
    \centering
    \caption{Comparisons of different  algorithms on $K$-armed bandits over a horizon $T$. For any algorithm, \emph{Asym. Opt} is the indicator whether it is asymptotically optimal, \emph{minimax ratio} is the scaling of its worst-case regret w.r.t. the minimax optimal regret $O(\sqrt{VKT})$, \emph{sub-UCB} indicates whether it satisfies the sub-UCB criteria, and \emph{Anytime} indicates whether it needs the knowledge of the horizon length $T$ in advance.  \label{table:regret_comparison}} 
    \begin{tabular}{ccccccl}
    \toprule
    \multirow{2}{*}{Algorithm}  & \multirow{2}{*}{Reward Type}&  Asym. &  \multicolumn{2}{c}{Finite-Time Regret} & \multirow{2}{*}{Anytime} & \multirow{2}{*}{References}\\
    \cline{4-5}
    & & Opt &  Minimax Ratio  &Sub-UCB & & \\
    \midrule
    MOSS & $[0,1]$ & no & 1 & no & no  & \citep{audibert2009minimax} \\
    Anytime MOSS  & $[0,1]$ & no & 1  & no & yes &  \citep{degenne2016anytime} \\
    KL-UCB$^{++}$ & Exponential Family  & yes & 1 & no &no & \citep{menard2017minimax} \\
    OCUCB   & sub-Gaussian   & no & $\sqrt{\log \log T}$ & yes &yes  &\citep{lattimore2016regret} \\
    AdaUCB  & Gaussian & yes & 1 & yes & no & \citep{lattimore2018refining} \\
    MS & sub-Gaussian & yes & $\sqrt{\log K}$  & yes &yes &\citep{bian2021maillard}\\
    $\algname$ & Exponential Family &  yes& $\sqrt{\log K}$  & yes & yes &This paper \\
    $\algname^{+}$ & Exponential Family &  yes& 1  & no & yes & This paper \\
    \bottomrule
    \end{tabular}
}
\end{table*}

\subsection{Regret Analysis of $\algname$ for Exponential Family Rewards}
Now we present the regret bound of $\algname$ for general exponential family bandits. The sampling distribution used in Algorithm \ref{alg:rots} is defined in \eqref{def:general_posterior}.
\begin{theorem}\label{thm:regret_rots}
Let $b_n=(n-1)/n$. Let $\cP$ be the sampling distribution defined in Section \ref{subsec:expon}. The regret of  
Algorithm \ref{alg:rots} satisfies the following finite-time bounds: 
\begin{align}
    R_{\mu}(T)&=\sum_{i\in [K]:\Delta_i>\lambda} O\bigg(\Delta_i+\frac{V\log(T\Delta_i^2/V)}{\Delta_i} \bigg)+\max_{i\in [K],\Delta_i\leq \lambda} \Delta_i \cdot T,\label{eq:regret-expts-1}\\
    R_{\mu}(T)&= O\bigg(\sum_{i=2}^{K}\Delta_i+\sqrt{VKT\log K} \bigg),\label{eq:regret-expts-2}
\end{align}
where $\lambda \geq 16\sqrt{V/T}$, and also satisfies the following asymptotic bound simultaneously:
\begin{align}
\label{eq:regret-expts-3}
\lim_{T\rightarrow \infty}\frac{R_{\mu}(T)}{\log T}= \sum_{i=2}^{K}\frac{\Delta_i}{\kl(\mu_i,\mu_1)}.
\end{align}
\end{theorem}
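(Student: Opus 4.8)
The plan is to prove all three bounds from one per-arm decomposition of $\EE[T_i(T)]$ and then specialize. Writing $R_\mu(T)=\sum_{i>1}\Delta_i\,\EE[T_i(T)]$, I fix a suboptimal arm $i$, a margin $\epsilon$ with $0<\epsilon<\Delta_i$, the threshold $y_i=\mu_1-\epsilon$, and a lower confidence sequence $Low_s$ for $\hmu_{1s}$, and split
\[
\EE[T_i(T)]\le 1+\underbrace{\sum_{t}\PP\big(A_t=i,\ \theta_1(t)>y_i\big)}_{U_i}+\underbrace{\sum_{t}\PP\big(A_t=i,\ \theta_1(t)\le y_i\big)}_{D_i}.
\]
For the over-estimation term $U_i$: on $\{A_t=i,\ \theta_1(t)>y_i\}$ we have $\theta_i(t)\ge\theta_1(t)>y_i$, so I bound $U_i$ by the expected number of pulls $s$ of arm $i$ with $\theta_i>y_i$; conditioning on $\hmu_{is}$ and applying the tail identity \eqref{eq:perpty1}, $\PP(\theta_i>y_i\mid\hmu_{is})=\frac{1}{2}e^{-sb_s\kl(\hmu_{is},y_i)}$, and then splitting on whether $\hmu_{is}$ exceeds an intermediate level (handled by the KL--Chernoff concentration of the empirical mean). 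This is the content of Lemma~\ref{lem:mini-overest-1}, which I would invoke as a black box to obtain $\Delta_i U_i=O\big(\Delta_i+V\log^+(T\Delta_i^2/V)/\Delta_i\big)$, the leading logarithmic constant being governed by $\kl(\mu_i,y_i)$.

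The heart of the argument is the under-estimation term $D_i$. By the standard change-of-measure step, $D_i\le\sum_s \EE_{\hmu_{1s}}[1/G_{1s}(\epsilon)-1]$ with $G_{1s}(\epsilon)=\PP(\theta_1>y_i\mid\hmu_{1s})$; inserting the indicator as in \eqref{eq:challenge-lows} yields
\[
D_i\le\sum_{s=1}^{T}\EE_{\hmu_{1s}}\Big[\big(\tfrac{1}{G_{1s}(\epsilon)}-1\big)\ind\{\hmu_{1s}\ge Low_s\}\Big]+\sum_{s=1}^{T}\PP(\hmu_{1s}<Low_s).
\]
On the event $\hmu_{1s}\ge Low_s$, \eqref{eq:perpty1} together with the monotonicity of $\kl$ from Proposition~\ref{prop:kl_ineq} gives $G_{1s}(\epsilon)\ge\frac{1}{2}e^{-sb_s\kl(Low_s,y_i)}$, so the first sum is controlled once $\sum_s e^{-sb_s\kl(Low_s,y_i)}$ is summable; the second sum is bounded by the KL--Chernoff inequality $\PP(\hmu_{1s}<Low_s)\le e^{-s\kl(Low_s,\mu_1)}$. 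I would take $Low_s$ a deviation of order $\sqrt{V/s}$ below $\mu_1$ and $\epsilon=\Theta(\Delta_i)$ so that both sums are $O(1)$, giving $\Delta_i D_i=O(\Delta_i)$. Combining with $U_i$ yields the per-arm bound $\Delta_i\EE[T_i(T)]=O\big(\Delta_i+V\log(T\Delta_i^2/V)/\Delta_i\big)$; summing over arms with $\Delta_i>\lambda$ and bounding the remainder by $\max_{\Delta_i\le\lambda}\Delta_i T$ proves \eqref{eq:regret-expts-1}.

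The other two bounds follow by specialization. For the minimax bound \eqref{eq:regret-expts-2} I set the threshold $\delta=\sqrt{VK\log K/T}$: arms with $\Delta_i\le\delta$ contribute at most $\delta\sum_i\EE[T_i(T)]\le\delta T=\sqrt{VKT\log K}$ using the total-budget constraint $\sum_i\EE[T_i(T)]\le T$, while for $\Delta_i>\delta$ the per-arm term $V\log(T\Delta_i^2/V)/\Delta_i$ is maximized at $\Delta_i=\delta$ (where $T\delta^2/V=K\log K$, so the logarithm is $\Theta(\log K)$), and summing over at most $K$ arms again gives $O(\sqrt{VKT\log K})$; the $\sqrt{\log K}$ is exactly the value of $\log(T\Delta_i^2/V)$ at the critical scale $\delta$. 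For the asymptotic bound \eqref{eq:regret-expts-3} I let $\epsilon=\epsilon_T\to0$ and use $b_n=(n-1)/n\to1$, so the over-estimation term gives $\EE[T_i(T)]\le(1+o(1))\log T/\kl(\mu_i,\mu_1-\epsilon_T)\to(1+o(1))\log T/\kl(\mu_i,\mu_1)$ while $D_i=o(\log T)$; the lower bound \eqref{eq:def-asym} then forces equality.

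The main obstacle is the tight control of $D_i$ without a closed form for the reward law: a larger $Low_s$ shrinks the anti-concentration sum but inflates the concentration-failure sum, and the two must be balanced using only the KL--Chernoff estimate and the anti-concentration identity \eqref{eq:perpty1}. The delicate point is that a single choice of $(Low_s,\epsilon,b_n)$ must simultaneously deliver an $O(1)$ under-estimation contribution (needed for the finite-time and minimax bounds) and the sharp constant $1/\kl(\mu_i,\mu_1)$ (needed asymptotically, which forces $b_n\to1$ and $\epsilon_T\to0$); reconciling these non-asymptotic and asymptotic requirements is where the care lies.
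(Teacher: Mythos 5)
Your overall architecture matches the paper's: the same split of $\EE[T_i(T)]$ into an over-estimation term (handled by Lemma~\ref{lem:mini-overest-1}) and an under-estimation term conditioned on a lower confidence bound, and the same specializations to get \eqref{eq:regret-expts-2} and \eqref{eq:regret-expts-3}. However, there is a genuine gap in your treatment of the under-estimation term $D_i$, which is the heart of the result. You propose to bound $G_{1s}(\epsilon)$ \emph{uniformly} from below on the event $\{\hmu_{1s}\ge Low_s\}$ by $\tfrac12 e^{-sb_s\kl(Low_s,\mu_1-\epsilon)}$ and then sum $e^{+sb_s\kl(Low_s,\mu_1-\epsilon)}$ over $s$. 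This worst-case substitution does not work: if $Low_s=\mu_1-\Theta(\sqrt{V/s})$ the concentration-failure sum $\sum_s\PP(\hmu_{1s}<Low_s)$ is not even summable (each term is a constant), and if you enlarge the deviation to $\Theta(\sqrt{V\log(T/s)/s})$ to fix that (as the paper does with $\alpha_s$ in \eqref{eq:alphas}), then $e^{sb_s\kl(Low_s,\mu_1-\epsilon)}$ becomes polynomially large in $T$ for small $s$ and the first sum blows up. Your claim that $D_i=O(1)$ is also stronger than what is true: in the paper the under-estimation term contributes $O(V\log(T\epsilon^2/V)/\epsilon^2)$, the same order as the over-estimation term, not $O(\Delta_i)$ to the regret.

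The missing idea is that one must integrate $1/G_{1s}(\epsilon)$ against the actual law of $\hmu_{1s}$ over the window $[\mu_1-\epsilon-\alpha_s,\mu_1-\epsilon]$ rather than take the worst case over the window. The paper's Lemma~\ref{lem:minimax-undetestimation-rots} splits the expectation into regions $A_1,A_2,A_3$; in the critical region $A_3$ it dominates the (unknown) density $p(x)$ of $\hmu_{1s}$ by the explicit function $q(x)=|(s\,\kl(x,\mu_1))'|e^{-s\,\kl(x,\mu_1)}$ via Lemma~\ref{lem:exptofunction}, and then exploits the cancellation $-\kl(x,\mu_1)+\kl(x,\mu_1-\epsilon)\le-\kl(\mu_1-\epsilon,\mu_1)$ (inequality \eqref{eq:kl-underest}) so that the product $q(x)e^{sb_s\kl(x,\mu_1-\epsilon)}$ carries a net factor $e^{-s\kl(\mu_1-\epsilon,\mu_1)}\le e^{-s\epsilon^2/(2V)}$ times only a polynomial term $1+4\log(T/s)$. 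Without this density-level cancellation the bound is unobtainable from \eqref{eq:perpty1} alone. A second, smaller gap: for the asymptotic bound you set $\epsilon_T\to0$ and assert $D_i=o(\log T)$, but the finite-time estimate $O(V\log(T\epsilon_T^2/V)/\epsilon_T^2)$ is \emph{not} $o(\log T)$ when $\epsilon_T=1/\log\log T$; the paper needs the separate, finer peeling argument of Lemma~\ref{lem:rots-asym-A} (partitioning by level sets of $\kl(x,\mu_1)$) to show the under-estimation term is $O(\mathrm{poly}(\log\log T))$. You correctly identify where the difficulty lies, but the mechanism you offer to resolve it would fail.
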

\begin{remark}
Similar to the argument by \citet{auer2010ucb}, we can see that the logarithm term in \eqref{eq:regret-expts-1} is the main term for suitable $\lambda$. For instance, if we choose $\lambda=16\sqrt{V/T}$, we will have $\max_{i\in [K],\Delta_i\leq \lambda}\Delta_i T\leq\sqrt{VT}$, which is in the order of $O(V/\Delta_i)$ due to $\Delta_i\leq\lambda$. Thus it is obvious to see that the regret in \eqref{eq:regret-expts-1} satisfies the sub-UCB criteria.
\end{remark}

It is worth highlighting that ExpTS is an anytime algorithm and simultaneously satisfies the sub-UCB criteria in \eqref{def:problem-dependent-regret}, the minimax optimal regret up to a factor $\sqrt{\log K}$, and the asymptotically optimal regret.  ExpTS is also the first Thompson sampling algorithm that provides finite-time regret bounds for exponential family of rewards.  Compared with state-of-the-art MAB algorithms listed in Table \ref{table:regret_comparison}, ExpTS is comparable to the best known UCB algorithms and no algorithms can dominate ExpTS.  In particular, compared with MS \citep{bian2021maillard} and OCUCB \citep{lattimore2016regret},  ExpTS is asymptotically optimal for exponential family of rewards, while MS is only asymptotically optimal for sub-Gaussian rewards and OCUCB is not asymptotically optimal. We note that Exponential Family does not cover the sub-Gaussian rewards. However, since we only use the tail bound to approximate the reward distribution, ExpTS can  also be extended to solve sub-Gaussian reward bandits, which we leave as a future open direction.

\subsection{Gaussian and Bernoulli Reward Distributions} 
The choice of $\cP$ in \eqref{def:general_posterior} seems complicated for a general exponential family reward distribution, even though we only need the sampling distribution to satisfy a nice tail bound derived from this reward distribution. Nevertheless, when the reward distribution has a closed form such as Gaussian and Bernoulli distributions, we can directly use the posterior in standard Thompson sampling and obtain the asymptotic and finite-time regrets in the previous section. 
\begin{theorem}
\label{thm:Gaussian}
If the reward follows a Gaussian distribution with a known variance $V$, we can set the sampling distribution in Algorithm \ref{alg:rots} as $\cN(\hmu_{i}(t),V/T_{i}(t))$. The resulting algorithm (denoted as Gaussian-TS) enjoys the same regret bounds presented in Theorem \ref{thm:regret_rots}.
\end{theorem}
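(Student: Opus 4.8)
The plan is to exploit the modularity emphasized in the paper's second proof technique: the argument behind Theorem~\ref{thm:regret_rots} never uses the closed form of $\cP$, only its two-sided tail behaviour, namely (i) an anti-concentration lower bound on $\PP(\theta\ge z)$ that keeps the under-estimation quantity $G_{1s}(\epsilon)=\PP(\theta_1(t)>\mu_1-\epsilon)$ in \eqref{eq:challenge-lows} from collapsing, and (ii) a concentration upper bound on $\PP(\theta\ge z)$ that controls the over-estimation of the suboptimal arms through Lemma~\ref{lem:mini-overest-1}. So the theorem reduces to showing that the Gaussian posterior $\cN(\mu,V/n)$ obeys the same tail estimates as the generic $\cP(\mu,n)$ of \eqref{def:general_posterior}; once the tails are matched, the regret decomposition and all three bounds \eqref{eq:regret-expts-1}--\eqref{eq:regret-expts-3} carry over essentially line for line.

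First I would translate the Gaussian tail into the $\kl$-language of \eqref{eq:perpty1}--\eqref{eq:perpty2}. For Gaussian rewards the preliminaries give $\kl(\mu,z)=(z-\mu)^2/(2V)$, so for $\theta\sim\cN(\mu,V/n)$ and $z\ge\mu$ one has $\PP(\theta\ge z)=Q\big(\sqrt{n/V}\,(z-\mu)\big)$, where $Q$ is the standard-normal tail and the exponent is exactly $\tfrac12 n (z-\mu)^2/V = n\,\kl(\mu,z)$. The concentration direction is then immediate: the elementary bound $Q(x)\le \tfrac12 e^{-x^2/2}$ gives $\PP(\theta\ge z)\le \tfrac12 e^{-n\,\kl(\mu,z)}$, which is no weaker than the generic upper tail $\tfrac12 e^{-n b_n \kl(\mu,z)}$ with $b_n=(n-1)/n<1$. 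Hence the over-estimation analysis of the suboptimal arms through Lemma~\ref{lem:mini-overest-1}, and the lower confidence bound $Low_s$ (which for Gaussian rewards comes from the same sub-Gaussian concentration of $\hmu_{1s}$), both transfer unchanged.

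The anti-concentration direction is the only place where genuine work is needed, and it is the main obstacle. The Gaussian tail lacks the clean constant prefactor $\tfrac12$ that $\cP$ enjoys: the classical lower bound $Q(x)\ge \tfrac{1}{\sqrt{2\pi}}\,\tfrac{x}{1+x^2}\,e^{-x^2/2}$ \citep[Formula 7.1.13]{abramowitz1964handbook} reproduces the correct exponential rate $e^{-n\,\kl(\mu,z)}$ but carries an extra polynomial factor of order $1/x$, so one cannot quote the generic anti-concentration lemma as a black box with a matched constant. On the event $\{\hmu_{1s}\ge Low_s\}$ the relevant argument is $x=\sqrt{s/V}\,(\mu_1-\epsilon-\hmu_{1s})=O(\sqrt{\log T})$, so the prefactor inflates $1/G_{1s}(\epsilon)$ by an $O(\sqrt{\log T})$ factor pointwise. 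The remedy is to refrain from bounding $G_{1s}$ pointwise and instead keep the expectation over $\hmu_{1s}$ in \eqref{eq:challenge-lows} intact: the polynomial factor appears multiplied by $e^{s\,\kl(\hmu_{1s},\mu_1-\epsilon)}$ and is then integrated against the Gaussian concentration of $\hmu_{1s}$, whose density decays like $e^{-s\,\kl(\hmu_{1s},\mu_1)}$; the resulting Gaussian integral absorbs the polynomial factor at only a constant multiplicative cost, leaving the bound on the under-estimation term $D_i$ of the same order as in Theorem~\ref{thm:regret_rots}. With this one careful step done, substituting the matched Gaussian tails for \eqref{eq:perpty1}--\eqref{eq:perpty2} into the rest of the proof reproduces \eqref{eq:regret-expts-1}, \eqref{eq:regret-expts-2}, and the asymptotic identity \eqref{eq:regret-expts-3}.
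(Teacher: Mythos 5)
Your proposal is correct and follows essentially the same route as the paper: the concentration side transfers verbatim since the Gaussian upper tail dominates the generic bound with $b_s\le 1$ (so Lemma \ref{lem:mini-overest-1} and Lemma \ref{lem:underest-decomp} carry over), and the only real work is the anti-concentration side, where the polynomial prefactor in the Gaussian lower tail bound of Lemma \ref{lem:gaussian-tail} must be absorbed by integrating $1/G_{1s}(\epsilon)$ against the density of $\hmu_{1s}$ restricted to $L_s$ rather than bounding it pointwise. This is precisely the content of the paper's replacement Lemmas \ref{lem:Gau-finite} and \ref{lem:Gau-asym} (the terms $I_1$ and $I_2$ there carry out the Gaussian integral you describe, with the $\sum_s \alpha_s$ peeling bound supplying the final $O(V\log(T\epsilon^2/V)/\epsilon^2)$ order).
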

\begin{remark}
\citet{jin2021mots} shows in Lemma 5 of their paper that for Gaussian rewards, Gaussian-TS has a regret bound at least $\Omega(\sqrt{VKT\log T})$ if the standard regret decomposition discussed in Section~\ref{sec:challenge} is adopted in the proof \citep{agrawal2017near,lattimore2018bandit,jin2021mots}. With our new regret decomposition and the lower confidence bound introduced in \eqref{eq:challenge-lows}, we improve the worst-case regret of Gaussian-TS for Gaussian rewards to $O(\sqrt{VKT\log K})$. 

\citet{jin2021mots} also shows that their algorithms MOTS/MOTS-$\cJ$ can overcome the under-estimation issue of \eqref{eq:challenge0}. However, they are either at the cost of sacrificing the asymptotic optimality or not generalizable to exponential family bandits. In specific, (1) For Gaussian rewards, MOTS \citep{jin2021mots} enlarges the variance of Gaussian posterior by a factor of $1/\rho$, where $\rho\in (0,1)$, which loses the asymptotic optimality by a factor of $1/\rho$ resultantly. (2) For Gaussian rewards, MOTS-$\cJ$  \citep{jin2021mots} introduces the  Rayleigh posterior to overcome the under-estimation while maintaining the asymptotic optimality. However, it is not clear whether the idea can be generalized to exponential family rewards. Interestingly, their experimental results show that compared with Rayleigh posterior, Gaussian posterior actually has a smaller regret empirically. Therefore, to use a Gaussian sampling distribution, the new regret decomposition and the novel lower confidence bound in our paper is a better way to overcome the under-estimation issue of Gaussian-TS. 
\end{remark}

\begin{theorem}
\label{thm:Bernoulli}
If the reward distribution is Bernoulli, we can set the sampling distribution $\cP$ in Algorithm \ref{alg:rots} as Beta posterior  $\cB(S_{i}(t)+1, T_{i}(t)-S_{i}(t)+1)$, where $S_{i}(t)$ is the number of successes among the $T_{i}(t)$ plays of arm $i$. We denote the resulting algorithm as Bernoulli-TS, which enjoys the same regret bounds as in Theorem \ref{thm:regret_rots}.
\end{theorem}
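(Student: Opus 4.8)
The plan is to exploit the modularity of the proof of Theorem \ref{thm:regret_rots}: that argument never uses the closed form of the sampling distribution \eqref{def:general_posterior}, but only the two tail estimates \eqref{eq:perpty1} and \eqref{eq:perpty2} --- a \emph{concentration} bound of the form $\PP(\theta\ge z)\le\tfrac12 e^{-(n-1)\kl(\hmu,z)}$ for $z\ge\hmu$, which drives the over-estimation control of suboptimal arms (cf.\ Lemma \ref{lem:mini-overest-1}), and a matching \emph{anti-concentration} lower bound on $\PP(\theta\ge z)$, which enters through $G_{1s}(\epsilon)=\PP(\theta_1(t)>\mu_1-\epsilon)$ in the under-estimation decomposition \eqref{eq:challenge-lows}. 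Hence it suffices to show that the Beta posterior $\cB(S_i(t)+1,T_i(t)-S_i(t)+1)$, written with $n=T_i(t)$, $S=S_i(t)$, and $\hmu=S/n$, obeys tail bounds of the same exponential-in-$\kl$ shape up to a $\mathrm{poly}(n)$ factor that is harmless in the regret sums; then every step of the proof of Theorem \ref{thm:regret_rots} carries over verbatim with $\cP$ replaced by the Beta posterior, yielding \eqref{eq:regret-expts-1}--\eqref{eq:regret-expts-3}.

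To extract these tail bounds I would first reduce the Beta distribution to a Binomial one via the Beta--Binomial identity
\begin{align*}
\PP_{\theta\sim\cB(S+1,\,n-S+1)}(\theta\ge z)=\PP\big(\mathrm{Bin}(n+1,z)\le S\big),
\end{align*}
together with its mirror image for the lower tail. For the concentration direction, take $z\ge\hmu=S/n$, so that $S<(n+1)z$ places us in the lower tail of $\mathrm{Bin}(n+1,z)$; the Chernoff--Hoeffding bound in $\kl$ form gives $\PP(\mathrm{Bin}(n+1,z)\le S)\le e^{-(n+1)\kl(S/(n+1),\,z)}$, where the Bernoulli $\kl$ coincides with the exponential-family $\kl$. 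Since $S/(n+1)<S/n=\hmu<z$, the monotonicity of $\kl(\cdot,z)$ in its first argument from Proposition \ref{prop:kl_ineq} (the second inequality in \eqref{eq:property-2}) gives $\kl(S/(n+1),z)\ge\kl(\hmu,z)$, and combined with $n+1\ge n-1$ this yields $\PP(\theta\ge z)\le e^{-(n-1)\kl(\hmu,z)}$, matching \eqref{eq:perpty1} with the same effective exponent $nb_n=n-1$. (This is why the choice $b_n=(n-1)/n$ in Theorem \ref{thm:regret_rots} makes $\cP$ mimic the Beta posterior.)

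For the anti-concentration direction I would lower bound the same Binomial lower tail, either by a reverse Chernoff estimate $\PP(\mathrm{Bin}(n+1,z)\le S)\ge \tfrac{1}{\sqrt{2(n+1)}}\,e^{-(n+1)\kl(S/(n+1),z)}$ or directly by the Beta tail lower bound of \citet{jevrabek2004dual} (Prop.~A.4) already invoked in the main analysis. This bounds $G_{1s}(\epsilon)$ from below by $\tfrac{1}{\mathrm{poly}(n)}\,e^{-c\,n\,\kl(\hmu_{1s},\,\mu_1-\epsilon)}$, which is exactly the quantity controlled in \eqref{eq:challenge-lows} once $\hmu_{1s}$ is restricted to lie above the lower confidence bound $Low_s$; when $\hmu_{1s}\ge\mu_1-\epsilon$ the Beta median argument makes $G_{1s}(\epsilon)=\Theta(1)$ and the bound is trivial. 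Feeding the concentration and anti-concentration estimates into the proof of Theorem \ref{thm:regret_rots} then reproduces all three regret bounds.

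The main obstacle is the off-by-one mismatch between the two distributions: the Beta posterior behaves like an empirical mean $S/(n+1)$ with $n+1$ effective samples, whereas $\cP$ is built from $(n-1)\,\kl(\hmu,\cdot)$ with $\hmu=S/n$. The $O(1/n)$ discrepancy in the first $\kl$ argument and the $[(n+1)-(n-1)]\kl(\hmu,z)$ gap in the exponent must be controlled uniformly; for the concentration bound this is automatic (the exponent lands on $(n-1)\kl(\hmu,z)$, so the multiplicative constant $1$ in front of $\kl(\mu_i,\mu_1)$ --- and thus the asymptotic optimality \eqref{eq:regret-expts-3}, not merely the minimax rate --- is preserved), but for anti-concentration these corrections and the $\mathrm{poly}(n)$ prefactor must be shown to contribute only lower-order terms to $D_i$. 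This is delicate precisely near the boundary cases $S\in\{0,n\}$ (i.e.\ $\hmu\in\{0,1\}$), where $\kl$ diverges and the Chernoff-type bounds must be applied with care, and is the one place where the Bernoulli-specific structure, rather than pure tail-bound matching, is genuinely used.
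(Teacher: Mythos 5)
Your overall architecture is the same as the paper's: the paper also observes that only the under-estimation lemma (Lemma \ref{lem:minimax-undetestimation-rots}) needs to be replaced, also passes from the Beta posterior to a Binomial CDF via the identity \eqref{eq:ber-fact} (so that $G_{1s}(\epsilon)=F^{B}_{s+1,y}(j)$ with $y=\mu_1-\epsilon$, $j=S_1(t)$), and also invokes the Binomial CDF estimates of \citet{jevrabek2004dual}. Your treatment of the concentration direction (the analogue of \eqref{eq:perpty1} feeding into Lemma \ref{lem:mini-overest-1}) via Chernoff in $\kl$ form plus the monotonicity in \eqref{eq:property-2} is correct and matches what the paper needs. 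The gap is in the anti-concentration direction, which is precisely where all of the work in the paper's Lemma \ref{lem:Ber-finite} lives, and which you defer with the claim that a lower bound of the form $G_{1s}(\epsilon)\geq \tfrac{1}{\mathrm{poly}(s)}e^{-cs\,\kl(\hmu_{1s},\mu_1-\epsilon)}$ has a ``harmless'' prefactor. That claim is not automatic and, taken generically, is false: if you lower bound $F^{B}_{s+1,y}(j)$ by a reverse Chernoff estimate with a $1/\sqrt{s+1}$ prefactor and separately upper bound $f_{s,\mu_1}(j)$, the resulting bound on $\sum_{j}f_{s,\mu_1}(j)/F^{B}_{s+1,y}(j)$ over the window $j\in[\gamma_s,\lfloor ys\rfloor]$ (which contains $\Theta(\sqrt{s\log(T/s)})$ terms) picks up extra polynomial factors in $s$, and after summing $s$ up to $M=\Theta(V\log(T\epsilon^2/V)/\epsilon^2)$ these translate into extra powers of $1/\epsilon$ beyond the target $O(\log(T\epsilon^2)/\epsilon^2)$ — enough to destroy both the sub-UCB and the $\sqrt{KT\log K}$ bounds when $\epsilon=\Theta(\sqrt{K/T})$.

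The paper avoids this by never decoupling the numerator from the denominator: it uses the two-sided $\Theta(\cdot)$ characterization of $F^{B}_{s+1,y}(j)$ so that the binomial coefficients and the powers of $\mu_1$ and $y$ in $f_{s,\mu_1}(j)/F^{B}_{s+1,y}(j)$ cancel exactly, leaving $\Theta\big(\tfrac{e^{-s\kl(y,\mu_1)}}{y(1-y)(s+1)}\,(y(s+1)-j)R^{j-ys}\big)+\Theta(1)f_{s,\mu_1}(j)$ with $R=\mu_1(1-y)/(y(1-\mu_1))$. The heart of the proof is then the uniform bound $(y(s+1)-j)R^{j-ys}\leq \tfrac{2R}{R-1}+e/\ln 2$, established by a case analysis on $R$ relative to $1/\ln R$ and the thresholds $2$ and $e^2$; together with the Bernoulli-specific facts $\mu_1/y\leq 4/3$ and $y(1-y)\geq\epsilon/2$ (which hold only because $\epsilon=\Delta_i/4$ and $\mu_1\geq\Delta_i$), this yields the summable bound $O(\sqrt{\log(T/s)/s}/\epsilon)$ per $s$. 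The small-sample regime $s<8/\epsilon$ is handled separately by Lemma 2.9 of \citet{agrawal2017near} as in \eqref{eq:case<8}, another step your outline omits. So while your plan identifies the right reduction and the right references, the step you label ``delicate'' is the actual theorem; as written, the proposal does not contain an argument for it, and the most natural way to instantiate your generic-prefactor strategy loses factors that the stated regret bounds cannot absorb. (For the asymptotic bound, note the paper does not rederive it at all but cites \citet{agrawal2017near} directly.)
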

\begin{remark}
\citet{agrawal2017near} proved that for Bernoulli rewards, Thompson sampling with Beta posterior  is asymptotically optimal and has a worst-case regret in the order of $O(\sqrt{KT \log T})$. Our regret analysis improve the worst-case regret to $O(\sqrt{KT \log K})$. Besides, \citet{agrawal2017near} proved that Gaussian-TS applied to the Bernoulli reward setting has a regret $O(\sqrt{KT\log K})$. However, no asymptotic regret was guaranteed in this setting.
\end{remark}

\section{Minimax Optimal Thompson Sampling for Exponential Family Rewards}
In this section, in order to remove the extra logarithm term in the worst-case regret of \algname, we introduce a new sampling distribution that adds a greedy exploration step to the sampling distribution used in \algname. Specifically, the new algorithm $\algname^+$ is the same as $\algname$ but uses a new sampling distribution $\cP^+(\mu,n)$. A sample $\theta$ is generated from $\cP^+(\mu,n)$ in the following way: $\theta=\mu$ with probability $1-1/K$ and  $\theta\sim \cP(\mu,n)$ with probability $1/K$. 

\noindent\textbf{Over-Estimation of Sub-Optimal Arms.} We first elaborate the over-estimation issue of sub-optimal arms, which results in the extra $\sqrt{\log K}$ term in the worst-case regret of Thompson sampling. To explain, suppose that the sample of each arm $i$ has a probability $p=\PP(\theta_i(t)\geq \theta_{1}(t))$ to become larger than the  sample of arm 1. Note that when this event happens, the algorithm chooses the wrong arm and thus incurs a regret. Intuitively, the probability of making a mistake will be $K-1$ times larger due to the union bound over $K-1$ sub-optimal arms, which leads to an additional $\sqrt{\log K}$ factor in the worst-case regret.  To reduce the probability $\PP(\theta_i(t)\geq \theta_{1}(t))$, ExpTS$^+$ adds a greedy step that chooses the ExpTS sample with probability $1/K$ and chooses the arm with the largest empirical average reward with probability $1-1/K$. Then we can prove that for sufficiently large $s$, with high probability we have $\hat{\mu}_{is}<\theta_1(t)$ and in this case it holds that $\PP(\theta_{i}(t)\geq \theta_1(t))= p/K$. Thus the extra factor $\sqrt{\log K}$ in regret is removed.

In specific, we have the following theorem showing that $\algname^+$ is asymptotically optimal and minimax optimal simultaneously.
\begin{theorem}\label{thm:expTS_plus_minimax}
Let $b_n=(n-1)/n$. The regret of $\algname^+$ satisfies
\begin{align*}
    R_{\mu}(T)=O\bigg(\sum_{i=2}^K\Delta_i+\sqrt{VKT}\bigg), 
  \quad\text{and} \quad  \lim_{T\rightarrow \infty}\frac{R_{\mu}(T)}{\log T}=\sum_{i=2}^K\frac{\Delta_i}{\kl(\mu_i,\mu_1)}.
\end{align*}
\end{theorem}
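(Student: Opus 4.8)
The plan is to control the expected number of pulls $\EE[T_i(T)]$ of each suboptimal arm $i$ and then assemble the regret through the identity $R_\mu(T)=\sum_{i=2}^K\Delta_i\,\EE[T_i(T)]$, so that both claimed bounds follow from a single per-arm estimate. I would fix a suboptimal arm $i$ and a threshold $x_i\in(\mu_i,\mu_1)$ and split each pull according to whether the drawn sample lies below or above $x_i$:
\begin{align*}
\EE[T_i(T)]\le 1+\underbrace{\EE\Big[\sum_{t>K}\ind\{A_t=i,\ \theta_i(t)\le x_i\}\Big]}_{(\mathrm{A}):\ \text{under-estimation of arm }1}+\underbrace{\EE\Big[\sum_{t>K}\ind\{A_t=i,\ \theta_i(t)> x_i\}\Big]}_{(\mathrm{B}):\ \text{over-estimation of arm }i}.
\end{align*}
On the event in $(\mathrm{A})$, pulling arm $i$ forces $\theta_1(t)\le\theta_i(t)\le x_i$, so arm $1$ has been under-estimated below $x_i$; on the event in $(\mathrm{B})$, the sample of arm $i$ itself exceeds $x_i$. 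I would reuse as much of the analysis of Theorem~\ref{thm:regret_rots} as possible and sharpen only the over-estimation term, which is where the greedy step pays off.

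For term $(\mathrm{A})$ I would invoke the lower-confidence-bound device of \eqref{eq:challenge-lows}, splitting the under-estimation of arm $1$ into the typical event $\hmu_{1s}\ge Low_s$ and its rare complement. In $\algname^+$ the sample of arm $1$ equals its empirical mean $\hmu_{1s}$ with probability $1-1/K$, and $\hmu_{1s}$ concentrates around $\mu_1$ with large-deviation rate $\kl(x_i,\mu_1)\ge(\mu_1-x_i)^2/(2V)$ (Proposition~\ref{prop:kl_ineq}), so the greedy branch essentially never under-estimates arm $1$ below $x_i$; the residual $1/K$ ExpTS branch is handled by the anti-concentration tail \eqref{eq:perpty2}. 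This term therefore contributes only an $O(V/\Delta_i)$-type quantity after multiplication by $\Delta_i$ and, crucially, introduces no $\sqrt{\log K}$ factor.

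Term $(\mathrm{B})$ is the crux. I would re-index by the pull count $s=T_i(t)$, condition on $\hmu_{is}$, and split on whether $\hmu_{is}$ has concentrated below $x_i$. When $\hmu_{is}\ge x_i-\delta$ the exponential-family concentration of the reward average bounds the number of such $s$ by $O(V/\Delta_i^2)$. When $\hmu_{is}<x_i$, the greedy sample $\theta_i(t)=\hmu_{is}$ cannot exceed $x_i$, so over-estimation occurs only on the ExpTS branch, which is taken with probability $1/K$; on that branch the sampling tail \eqref{eq:perpty1} gives $\PP(\theta_i(t)>x_i\mid\hmu_{is})=\frac{1}{2} e^{-s b_s\,\kl(\hmu_{is},x_i)}$. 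Hence in the dominant regime the over-estimation probability carries the extra factor $1/K$, and summing the resulting geometric-type series in $s$ yields a per-arm bound of the form
\begin{align*}
\EE[T_i(T)]=O\!\left(\frac{V}{\Delta_i^2}\log^+\!\Big(\frac{T\Delta_i^2}{VK}\Big)+\frac{V}{\Delta_i^2}\right),
\end{align*}
in which the greedy $1/K$ has been absorbed as the factor $K$ inside the logarithm (for the asymptotic regime one keeps the exact rate $\kl(\mu_i,\mu_1)$ rather than its Pinsker relaxation).

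Finally I would assemble the two bounds. For the asymptotic statement, let $x_i\uparrow\mu_1$ and $\delta\downarrow0$; since $K$ is fixed as $T\to\infty$, the leading term of $(\mathrm{B})$ is $\log T/\kl(\mu_i,\mu_1)$ while $(\mathrm{A})$ and the remaining pieces are $o(\log T)$, which matches the lower bound \eqref{eq:def-asym} and yields the claimed limit. For the minimax statement, split the arms at $\Delta^\star=\sqrt{VK/T}$: for $\Delta_i\le\Delta^\star$ the total-pull budget $\sum_{i}\EE[T_i(T)]\le T$ gives $\sum_{\Delta_i\le\Delta^\star}\Delta_i\,\EE[T_i(T)]\le\Delta^\star T=\sqrt{VKT}$, whereas for $\Delta_i>\Delta^\star$ the map $\Delta\mapsto\Delta^{-1}\log^+(T\Delta^2/(VK))$ is maximized when $T\Delta^2/(VK)=\Theta(1)$, giving each arm a contribution $O(\sqrt{VT/K})$ and hence $\sum_{i}O(\sqrt{VT/K})=O(\sqrt{VKT})$; the initial pulls and additive constants are collected into the $\sum_{i=2}^K\Delta_i$ term. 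The $K$ inside the logarithm, furnished by the greedy factor $1/K$, is precisely what cancels the union bound over the $K-1$ suboptimal arms and removes the $\sqrt{\log K}$ of $\algname$. The main obstacle is the analysis of $(\mathrm{B})$ in the small-$s$ regime: before $\hmu_{is}$ concentrates below $x_i$ the greedy savings is unavailable, and because the exponential family has no closed-form density one cannot appeal to an explicit anti-concentration inequality for $\hmu_{is}$ but must balance the sampling tail \eqref{eq:perpty1} against the reward concentration implied by Proposition~\ref{prop:kl_ineq}; the delicate point is to show those early $O(V/\Delta_i^2)$ pulls are lower order, so that the $1/K$ factor converts the union-bound $\log K$ cleanly into a $K$ inside the logarithm.
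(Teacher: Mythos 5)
Your treatment of the over-estimation term $(\mathrm{B})$ matches the paper's Lemma \ref{lem:+mini-overest-1}: once $\hmu_{is}$ has concentrated, the greedy branch cannot over-estimate, the $1/K$ from the ExpTS branch turns into a $K$ inside the logarithm, and that part of your argument is sound. The gap is in term $(\mathrm{A})$. You claim the greedy branch means arm $1$ is ``essentially never'' under-estimated, so that $(\mathrm{A})$ contributes only $O(V/\Delta_i^2)$ pulls with no $K$ and no logarithm. This is backwards: the problematic regime is precisely when $\hmu_{1s}$ has dipped below $\mu_1-\epsilon$ (a rare but, for $s\lesssim V/\epsilon^2$, non-negligible event). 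There the greedy branch \emph{guarantees} $\theta_1(t)=\hmu_{1s}<\mu_1-\epsilon$, so $G_{1s}(\epsilon)=\tfrac{1}{2K}e^{-sb_s\,\kl(\hmu_{1s},\mu_1-\epsilon)}$ and the conditional waiting time $1/G_{1s}$ is inflated by a factor of $K$ relative to ExpTS. This is exactly why the paper's bound for the under-estimation term of $\algname^+$ (Lemma \ref{lem:+minimax-undetestimation-rots}, via the factor $K$ in \eqref{eq:exp+A_3}) is $O\big(VK\log(T\epsilon_j^2/(KV))/\epsilon_j^2\big)$ --- it carries both a factor $K$ and a logarithm, neither of which appears in your per-arm estimate.

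With that corrected per-arm cost, your assembly breaks: charging $O\big(VK\log(T\Delta_i^2/(KV))/\Delta_i^2\big)$ to each of up to $K$ arms with $\Delta_i=\Theta(\sqrt{VK/T})$ yields a regret contribution of order $K\sqrt{VKT}$, not $\sqrt{VKT}$. The paper avoids this not by making the under-estimation term small per arm, but by abandoning the per-arm decomposition altogether: it peels the arms into gap classes $S_j=\{i:2^{-(j+1)}\le\Delta_i<2^{-j}\}$ (see \eqref{eq:++decomp_num_pull_sj+}) and bounds $\sum_{i\in S_j}\EE[\sum_t\ind\{A_t=i,E_{i,\epsilon_j}(t)\}]$ for the whole class at once (Lemma \ref{lem:thom-bound}, using the disjoint union over $i\in S_j$ in \eqref{eq:mini-decom-22}), so the inflated waiting time $1/G_{1s}$ is paid once per class rather than once per arm; summing $VK\log(T\epsilon_j^2/(KV))/\epsilon_j$ over the geometric grid of $\epsilon_j$ down to $\Theta(\sqrt{VK/T})$ then gives $O(\sqrt{VKT})$, with the $K$ inside the logarithm (furnished both by the greedy step and by taking $b=K$ in Lemma \ref{lem:book-minimax}) killing the would-be $\sqrt{\log K}$. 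To repair your proof you would need either to justify the group-wise accounting of the under-estimation cost or to prove a genuinely $K$-free per-arm under-estimation bound, and the latter appears false for $\cP^+$.
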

This is the first time that the Thompson sampling algorithm achieves the minimax and asymptotically optimal regret for exponential family of reward distributions. Moreover, $\algname^+$ is also an anytime algorithm since it does not need to know the horizon $T$ in advance. 
\begin{remark}[Sub-Gaussian Rewards]
In the proof of Theorem \ref{thm:expTS_plus_minimax}, we do not need the strict form of the PDF of the empirical mean reward  $\hmu_{is}$, but only need the maximal inequality (Lemma \ref{lem:maximal-inequality}). This means that the proof can be straightforwardly extended to sub-Gaussian reward distributions, where similar maximal inequality  holds \citep{jin2021double}. 

It is worth noting that MOTS proposed by \citet{jin2021mots} (Thompson sampling with a clipped Gaussian posterior) also achieves the minimax optimal regret for sub-Gaussian rewards, but it can not keep the asymptotic optimality simultaneously with the same algorithm parameters. In particular, to achieve the minimax optimality, MOTS will have an additional $1/\rho$ factor in the asymptotic regret with $0<\rho<1$. Moreover, different from $\algname^+$, MOTS is only designed for fixed $T$ setting and thus is not an anytime algorithm. 
\end{remark}

\begin{remark}[Gaussian and Bernoulli Rewards]
Following the idea of \algname$^+$, we can easily obtain new algorithms Gaussian-TS$^+$ and Bernoulli-TS$^+$ for Gaussian and Bernoulli rewards. Using a similar proof, we can show that they are simultaneously minimax and asymptotically optimal.
\end{remark}

\section{Conclusions}
We studied Thompson sampling for exponential family of reward distributions. We proposed the ExpTS algorithm and proved it satisfies the sub-UCB criteria for problem-dependent finite-time regret, as well as achieves the asymptotic optimality and the minimax optimality up to a factor of $\sqrt{\log K}$ for exponential family rewards. Furthermore, we proposed a  variant of $\algname$, dubbed \algname$^+$, that adds a greedy exploration step to balance between the sample generated in ExpTS and the empirical mean reward for each arm. We proved that $\algname^+$ is simultaneously minimax and asymptotically optimal. We also extended our proof techniques to standard Thompson sampling with common posterior distributions and improved existing results. For Gaussian/Bernoulli rewards, we proved Gaussian-TS/Bernoulli-TS satisfies the sub-UCB criteria and enjoys the asymptotically optimal regret and minimax optimal regret $O(\sqrt{KT\log K})$. Although \algname$^+$ is simultaneously minimax and asymptotically optimal, it does not satisfies the sub-UCB criteria anymore. It would be an interesting future direction to design a sampling distribution such that TS is asymptotically optimal, minimax optimal, and matches the sub-UCB criteria at the same time.

\appendix

\section{Proof of the Finite-Time Regret Bound of $\algname$}\label{sec:proof_finite}
In this section, we prove the finite-time regret bound of $\algname$ presented in Theorem \ref{thm:regret_rots}. Specifically, we prove the sub-UCB property of $\algname$ in \eqref{eq:regret-expts-1} and the nearly minimax optimal regret of $\algname$ in \eqref{eq:regret-expts-2}.

\subsection{Proof of the Main Results}
We first focus on bounding the number of pulls of arm $i$ for the case that $\Delta_i> 16\sqrt{V/T}$. We start with the decomposition. Note that due to the warm start of Algorithm \ref{alg:rots}, each arm has been pulled once in the first $K$ steps. For any $\epsilon>8\sqrt{V/T}$, define event $E_{i,\epsilon}(t)=\{\theta_i(t)\leq \mu_1-\epsilon \}$, for all $i\in[K]$, which indicates that the estimate of arm $i$ at time step $t$ is smaller than the lower bound of the true mean reward of arm $1$ ($\mu_1-\epsilon\leq\mu_1$). The expected number of times that Algorithm~\ref{alg:rots} plays arms $i$ is bounded as follows.
\begin{align}\label{eq:decomp_num_pull_i}
    \mathbb{E}[T_i(T)]
    & = 1+ \EE\left[\sum_{t=K+1}^T \ind \{A_t=i, E_{i,\epsilon} (t) \}+ \sum_{t=K+1}^T \ind \{A_t=i, E_{i,\epsilon}^c (t) \} \right] \notag \\
    & = 1+ \underbrace{\EE\left[ \sum_{t=K+1}^T \ind \{A_t=i, E_{i,\epsilon} (t) \} \right]}_{A}   +\underbrace{\EE\left[ \sum_{t=K+1}^T \ind \{A_t=i, E_{i,\epsilon}^c (t) \} \right]}_{B},
\end{align}
where $E^c$ is the complement of an event $E$, $\epsilon>8\sqrt{V/T}$ is an arbitrary constant, and we used the fact $T_i(T)=\sum_{t=1}^{T}\ind\{A_t=i\}$. In what follows, we bound these terms individually.

\paragraph{Bounding Term $A$:}
Let us define
\begin{align}
\label{eq:alphas}
    \alpha_s=\sup_{x\in [0,\mu_1-\epsilon-R_{\min})} \kl(\mu_1-\epsilon-x,\mu_1)\leq 4\log(T/s)/s.
\end{align}
We decompose the term $\EE\left[ \sum_{t=K+1}^T \ind \{A_t=i, E_{i,\epsilon} (t) \} \right]$ by the following lemma.
\begin{lemma}
\label{lem:underest-decomp}
Let  $M=\lceil 16V\log(T\epsilon^2/V)/\epsilon^2 \rceil$ and $\alpha_s$ be the same as defined in \eqref{eq:alphas}. Then,
\begin{align*}
   \EE\left[ \sum_{t=K+1}^T \ind \{A_t=i, E_{i,\epsilon} (t) \} \right]& \leq  \sum_{s=1}^{M}\mathbb{E}\Bigg[\left(\frac{1}{G_{1s}(\epsilon)}-1\right) \cdot \ind\{\hmu_{1s}\in L_s \} \Bigg]+\Theta\bigg(\frac{V}{\epsilon^2}\bigg), 
\end{align*}
where $G_{is}(\epsilon)=1-F_{is}(\mu_1-\epsilon)$, $F_{is}$ is the CDF of $\cP(\hmu_{is},s)$, and   $L_s=\big(\mu_1-\epsilon-\alpha_s, R_{\max} \big]$. 
\end{lemma}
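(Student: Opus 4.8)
The plan is to first reduce the term $A$ to the familiar inverse-probability sum $\sum_s \EE[1/G_{1s}(\epsilon)-1]$ using the standard Thompson-sampling argument, and then to peel off from that sum the contributions coming either from a large number of pulls of arm $1$ (the range $s>M$) or from a large downward deviation of $\hmu_{1s}$ (the event $\hmu_{1s}\notin L_s$), showing that together these amount to at most $\Theta(V/\epsilon^2)$.

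For the reduction, observe that on $\{A_t=i,E_{i,\epsilon}(t)\}$ arm $i$ is the argmax, so $\theta_1(t)\le\theta_i(t)\le\mu_1-\epsilon$; pulling suboptimal arm $i$ under $E_{i,\epsilon}(t)$ thus forces arm $1$ to be under-estimated below $\mu_1-\epsilon$. Conditioning on $\mathcal{F}_{t-1}$ and on $\{\theta_j(t)\}_{j\neq 1}$, the only remaining randomness is $\theta_1(t)\sim\cP(\hmu_{1s},s)$ with $s=T_1(t-1)$, and a coupling between the events $\{A_t=i,E_{i,\epsilon}(t)\}$ and $\{A_t=1,\theta_1(t)>\mu_1-\epsilon\}$ (which differ only through where $\theta_1(t)$ falls relative to $\mu_1-\epsilon$ and $\max_{j\neq 1}\theta_j(t)$) gives the pointwise bound $\PP(A_t=i,E_{i,\epsilon}(t)\mid\mathcal{F}_{t-1})\le\frac{1-G_{1s}(\epsilon)}{G_{1s}(\epsilon)}\,\PP(A_t=1,\theta_1(t)>\mu_1-\epsilon\mid\mathcal{F}_{t-1})$. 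Summing over $t$ and grouping the right-hand side by $s=T_1(t-1)$, and using that arm $1$ is pulled exactly once per increment of $s$, collapses this to $A\le\sum_s\EE[1/G_{1s}(\epsilon)-1]$.

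Next I would split this sum into the main term $\sum_{s=1}^{M}\EE[(1/G_{1s}(\epsilon)-1)\ind\{\hmu_{1s}\in L_s\}]$ and two remainders: (I) the far-deviation part with $s\le M$ and $\hmu_{1s}\le\mu_1-\epsilon-\alpha_s$, and (II) the large-$s$ part with $s>M$. For both I would use the explicit tails \eqref{eq:perpty1}--\eqref{eq:perpty2}, which give $1/G_{1s}(\epsilon)-1\le e^{-sb_s\kl(\hmu_{1s},\mu_1-\epsilon)}$ when $\hmu_{1s}\ge\mu_1-\epsilon$ and $1/G_{1s}(\epsilon)-1\le 2e^{sb_s\kl(\hmu_{1s},\mu_1-\epsilon)}$ otherwise, combined with the Chernoff control of $\hmu_{1s}$. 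The radius $\alpha_s$ is chosen so that $\kl(\mu_1-\epsilon-\alpha_s,\mu_1)\le 4\log(T/s)/s$, hence $\PP(\hmu_{1s}\le\mu_1-\epsilon-\alpha_s)\le(s/T)^4$; in (II) the concentration of $\hmu_{1s}$ near $\mu_1$ together with $s>M=\Theta((V/\epsilon^2)\log(T\epsilon^2/V))$ makes $\kl(\hmu_{1s},\mu_1-\epsilon)$ of order $\epsilon^2/(2V)$, so $\sum_{s>M}e^{-sb_s\epsilon^2/(2V)}$ is geometric and negligible (the atypical sub-range $\hmu_{1s}<\mu_1-\epsilon$ is handled as in (I)). Summing the two remainders produces the claimed $\Theta(V/\epsilon^2)$.

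The main obstacle is remainder (I). There the prefactor $1/G_{1s}(\epsilon)-1$ grows like $e^{sb_s\kl(\hmu_{1s},\mu_1-\epsilon)}$ exactly in the region $\hmu_{1s}<\mu_1-\epsilon$ where arm $1$ is badly under-estimated, so a naive bound diverges --- this is the difficulty flagged in Section~\ref{sec:challenge}. The resolution is that the density of $\hmu_{1s}$ decays like $e^{-s\kl(\hmu_{1s},\mu_1)}$, and since $\hmu_{1s}<\mu_1-\epsilon<\mu_1$ forces $\kl(\hmu_{1s},\mu_1)>\kl(\hmu_{1s},\mu_1-\epsilon)$, the deflation $b_s=(s-1)/s<1$ makes the net exponent $(s-1)\kl(\hmu_{1s},\mu_1-\epsilon)-s\kl(\hmu_{1s},\mu_1)$ strictly negative, so the blow-up is dominated and the integral over $\hmu_{1s}\le\mu_1-\epsilon-\alpha_s$ converges. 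Carrying out that integral uniformly in $s$ and summing it, together with the $\alpha_s$ cutoff, to exactly $\Theta(V/\epsilon^2)$ is the delicate, calculation-heavy part; the specific choices $b_s=(s-1)/s$ and $M=\lceil 16V\log(T\epsilon^2/V)/\epsilon^2\rceil$ are precisely what make this balance close.
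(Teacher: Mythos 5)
Your reduction step is the same as the paper's: the coupling $\PP(A_t=i,E_{i,\epsilon}(t)\mid\cF_{t-1})\le\frac{1-G_{1s}}{G_{1s}}\PP(A_t=1,\theta_1(t)>\mu_1-\epsilon\mid\cF_{t-1})$ and the collapse of the sum over $t$ to a sum over $s=T_1(t-1)$ is exactly the paper's Case 1 (its equations \eqref{eq:mini-decom-2}--\eqref{eq:decom-T1t<M0}). Where you diverge is in how the two remainder regions are disposed of, and this is where there is a genuine gap. For the bad region $\{\hmu_{1s}\notin L_s\}$ (your remainder (I)) and the atypical part of $s>M$, you keep the factor $1/G_{1s}(\epsilon)-1\approx 2e^{sb_s\kl(\hmu_{1s},\mu_1-\epsilon)}$ and propose to integrate it against the "density" $e^{-s\,\kl(x,\mu_1)}$ of $\hmu_{1s}$, arguing the net exponent is negative. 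Two problems. First, the Chernoff bound controls the tail $\PP(\hmu_{1s}\le x)$, not the density; converting one into the other against an exponentially growing weight is exactly the content of the paper's Lemma \ref{lem:exptofunction} (a rearrangement/summation-by-parts argument), which you do not invoke and which is the non-trivial ingredient. Second, even granting the surrogate density $q(x)=|s\,\kl(x,\mu_1)'|e^{-s\kl(x,\mu_1)}$, the integral over the full lower tail $(R_{\min},\mu_1-\epsilon-\alpha_s]$ picks up the mass $\int|s\,\kl(x,\mu_1)'|\,\dd x$, which is not $O(1)$ uniformly in $s$ (for Gaussian rewards it contributes a factor of order $s$ after the exponential weight), so the sum over $s\le M$ is not visibly $\Theta(V/\epsilon^2)$. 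Note that the paper only ever performs this kind of integral over the \emph{thin shell} $[\mu_1-\epsilon-\alpha_s,\mu_1-\epsilon]\subset L_s$, and even there the answer is $O(V\log(T\epsilon^2/V)/\epsilon^2)$, not $O(V/\epsilon^2)$ --- that is the separate Lemma \ref{lem:minimax-undetestimation-rots}, not part of the present lemma.

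The paper's actual mechanism for the present lemma is blunter and avoids your obstacle entirely. It defines the good event $\cE=\bigcap_{s\le T}\{\hmu_{1s}\in L_s\}$ \emph{before} introducing $1/G_{1s}$, and pays $T\cdot\PP(\cE^c)=T\cdot O(V/(T\epsilon^2))=O(V/\epsilon^2)$ for the entire bad event, where $\PP(\cE^c)$ is controlled by the peeling/maximal-inequality argument of Lemma \ref{lem:book-minimax}; the factor $1/G_{1s}$ is never evaluated on $\cE^c$. Likewise, for $s>M$ it does not use the inverse-probability bound at all: it uses the inclusion $\{A_t=i,E_{i,\epsilon}(t)\}\subseteq\{\theta_1(t)<\mu_1-\epsilon\}$ and bounds the \emph{direct} probability $\PP(\theta_1(t)<\mu_1-\epsilon)$, which is exponentially small once $\hmu_{1s}\ge\mu_1-\epsilon/2$, giving $2V/\epsilon^2$. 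You should restructure your proof around these two moves; as written, the claim that your remainders (I) and (II) sum to $\Theta(V/\epsilon^2)$ is unsupported and, at least by the direct computation you sketch, likely false without further ideas.
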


The first term on the right hand side could be further bounded as follows.
\begin{lemma}\label{lem:minimax-undetestimation-rots}
Let $M$, $G_{1s}(\epsilon)$, and $L_s$ be the same as defined in Lemma \ref{lem:underest-decomp}. Then it holds that
\begin{align*}
\sum_{s=1}^{M}\EE_{\hmu_{1s}} \Bigg[\bigg(\frac{1}{G_{1s}(\epsilon)} \bigg) \cdot \ind\{\hmu_{1s}\in L_s \} \Bigg]=O\bigg(\frac{V\log(T\epsilon^2/V)}{\epsilon^2} \bigg).
\end{align*}
\end{lemma}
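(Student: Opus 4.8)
The plan is to bound the key quantity
\[
S := \sum_{s=1}^{M}\EE_{\hmu_{1s}} \Bigg[\frac{1}{G_{1s}(\epsilon)} \cdot \ind\{\hmu_{1s}\in L_s \} \Bigg]
\]
by first getting a handle on the anti-concentration term $1/G_{1s}(\epsilon)$, using the explicit tail bound \eqref{eq:perpty1} for the sampling distribution $\cP$. Recall $G_{1s}(\epsilon)=\PP(\theta_1(s)\geq \mu_1-\epsilon)$ where $\theta_1(s)\sim\cP(\hmu_{1s},s)$. When $\hmu_{1s}\geq \mu_1-\epsilon$, we trivially have $G_{1s}(\epsilon)\geq 1/2$ (the median of $\cP(\mu,n)$ is $\mu$), so $1/G_{1s}(\epsilon)\leq 2$ and these terms contribute only $O(M)=O(V\log(T\epsilon^2/V)/\epsilon^2)$, which already matches the target order. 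The hard regime is $\hmu_{1s}\in(\mu_1-\epsilon-\alpha_s,\mu_1-\epsilon)$, i.e.\ the empirical mean lies just below the threshold. There, by \eqref{eq:perpty1}, $G_{1s}(\epsilon)=\tfrac12 e^{-s b_s\,\kl(\hmu_{1s},\mu_1-\epsilon)}$, so $1/G_{1s}(\epsilon)=2e^{s b_s\,\kl(\hmu_{1s},\mu_1-\epsilon)}$. Writing $\hmu_{1s}=\mu_1-\epsilon-x$ with $x\in[0,\alpha_s)$, the exponent is controlled by $\alpha_s\leq 4\log(T/s)/s$ from \eqref{eq:alphas}, so that $s b_s\,\kl(\hmu_{1s},\mu_1-\epsilon)\lesssim \log(T/s)$ and hence $1/G_{1s}(\epsilon)\lesssim T/s$ pointwise on the bad set.

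The second and essential ingredient is that the \emph{probability} of landing in this bad set is small because of the concentration of $\hmu_{1s}$ around its true mean $\mu_1$. First I would split the sum into the two regimes above. For the bad regime, I would integrate the product of the pointwise bound $2e^{s b_s\,\kl(\mu_1-\epsilon-x,\mu_1-\epsilon)}$ against the density of $\hmu_{1s}$. The deviation $\hmu_{1s}\leq \mu_1-\epsilon-x$ has probability at most $e^{-s\,\kl(\mu_1-\epsilon-x,\mu_1)}$ by the standard exponential-family lower-tail (Chernoff) bound on the empirical mean. Using the additive structure of the KL (Proposition~\ref{lem:kl2016}, $\kl(\mu,\mu')=\int_\mu^{\mu'}\frac{y-\mu}{V(y)}\dd y$) one sees that $\kl(\mu_1-\epsilon-x,\mu_1)=\kl(\mu_1-\epsilon-x,\mu_1-\epsilon)+(\text{a gain term from the interval }[\mu_1-\epsilon,\mu_1])$, and crucially the gain term is bounded below by $\epsilon^2/(2V)$ via \eqref{eq:pinsk} together with monotonicity \eqref{eq:property-2}. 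Thus the exponent from the concentration bound dominates the exponent from the anti-concentration bound, leaving a net decay of roughly $e^{-s\epsilon^2/(2V)}$ times lower-order factors.

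Concretely, after multiplying the two exponentials the integrand over $x$ collapses to something of the form $\frac{1}{G_{1s}}\cdot(\text{density})\lesssim e^{-c\, s\epsilon^2/V}$ uniformly, so summing over $s=1,\dots,M$ gives a geometric-type sum bounded by $O(V/\epsilon^2)$, which is subsumed in the target. More carefully, I would keep the $s$-dependence: for small $s$ (say $s\lesssim V/\epsilon^2$) the bad-regime contribution is $O(1)$ per term and there are $O(V/\epsilon^2)$ such terms; for large $s$ the factor $e^{-c s\epsilon^2/V}$ forces exponential decay and the tail sums to $O(V/\epsilon^2)$. Combining with the $O(M)=O(V\log(T\epsilon^2/V)/\epsilon^2)$ contribution from the good regime yields the claimed bound $O\!\left(V\log(T\epsilon^2/V)/\epsilon^2\right)$.

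The main obstacle I anticipate is making the interplay between anti-concentration and concentration fully rigorous while staying agnostic to the closed form of the reward distribution. Specifically, I must verify that the anti-concentration exponent $s b_s\,\kl(\mu_1-\epsilon-x,\mu_1-\epsilon)$ is genuinely dominated by the concentration exponent $s\,\kl(\mu_1-\epsilon-x,\mu_1)$ \emph{for every} $x\in[0,\alpha_s)$ and every $s\leq M$, uniformly over the shape of $V(\cdot)$. The choice $b_s=(s-1)/s<1$ is what buys the slack here: the concentration bound carries the full weight $s\,\kl$ whereas the anti-concentration term carries only $s b_s\,\kl$, and the leftover $s(1-b_s)\kl=\kl$ plus the interval gain $\geq \epsilon^2/(2V)$ must be shown to absorb any residual growth. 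I would handle the potential unboundedness of $\kl$ near the support edge $R_{\min}$ through the $\sup$ in the definition of $\alpha_s$ and Proposition~\ref{lem:kl2016}, and I would treat the boundary value $x=\alpha_s$ (where $\hmu_{1s}$ may hit the support edge) separately by a direct tail estimate to avoid any blow-up.
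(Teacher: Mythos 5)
Your overall strategy is the same as the paper's: split according to whether $\hmu_{1s}$ lies above $\mu_1-\epsilon$ (where $G_{1s}(\epsilon)\geq 1/2$ and the contribution is $O(M)$) or in the sliver $(\mu_1-\epsilon-\alpha_s,\mu_1-\epsilon)$, use the explicit tail \eqref{eq:perpty2} to write $1/G_{1s}(\epsilon)=2e^{sb_s\kl(\hmu_{1s},\mu_1-\epsilon)}$ there, and then play this anti-concentration exponent off against the Chernoff exponent $s\,\kl(\hmu_{1s},\mu_1)$ via the three-point-type inequality (the paper's \eqref{eq:kl-underest}), which indeed yields a net factor $e^{-s\,\kl(\mu_1-\epsilon,\mu_1)}\leq e^{-s\epsilon^2/(2V)}$. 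That part of your reasoning is correct, and you correctly identify $b_s\le 1$ and the interval gain as the source of slack.

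The genuine gap is in the step ``integrate the product of the pointwise bound against the density of $\hmu_{1s}$.'' The only control you have on $\hmu_{1s}$ is a bound on its lower tail, $\PP(\hmu_{1s}\leq \mu_1-\epsilon-x)\leq e^{-s\,\kl(\mu_1-\epsilon-x,\mu_1)}$; this does \emph{not} bound the density $p(x)$ pointwise, and since the weight $g(x)=e^{sb_s\kl(x,\mu_1-\epsilon)}$ is increasing in the deviation, you cannot simply replace $p(x)$ by the tail bound inside $\int p(x)g(x)\,\dd x$ — the mass of $\hmu_{1s}$ could in principle be concentrated where $g$ is largest. Converting the CDF bound into a bound on this integral is the main technical content of the paper's proof: Lemma \ref{lem:exptofunction} constructs a surrogate density $q(x)=|(s\kl(x,\mu_1))'|e^{-s\kl(x,\mu_1)}$ whose tails match the Chernoff bound and shows, by a partition/monotonicity (summation-by-parts) argument, that $\int pg\leq\int qg+e^{-s\kl(\mu_1-\epsilon-\alpha_s,\mu_1)}g(\mu_1-\epsilon-\alpha_s)$; only then does the exponent cancellation go through, leaving $\int|s\kl(x,\mu_1)'|\,\dd x\le 1+s\,\kl(\mu_1-\epsilon-\alpha_s,\mu_1)\le 1+4\log(T/s)$. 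Your proposal needs this (or an equivalent layer-cake/peeling argument over level sets of $g$) to be complete. A related bookkeeping slip: because of that $\log(T/s)$ factor, the bad regime contributes $\Theta\big(\sum_{s\le V/\epsilon^2}\log(T/s)\big)=O(V\log(T\epsilon^2/V)/\epsilon^2)$ per the paper's \eqref{ddl-1}, not $O(V/\epsilon^2)$ as you claim; the final bound is unaffected, but the dominant term lives in the bad regime, not the good one.
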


Combining Lemma \ref{lem:underest-decomp} and Lemma \ref{lem:minimax-undetestimation-rots} together, we have the upper bound of term $A$ in \eqref{eq:decomp_num_pull_i}.
\begin{align*}
  A=O\bigg(\frac{V\log(T\epsilon^2/V)}{\epsilon^2} \bigg).
\end{align*}

\paragraph{Bounding Term $B$:}
To bound the second term in \eqref{eq:decomp_num_pull_i}, we first prove the following lemma that bounds the number of time steps when the empirical average reward of arm $i$ deviates from its mean value.
\label{sec:boundingB}
\begin{lemma}
\label{lem:mini-overest-1}
Let $N=\min\{1/(1-\kl(\mu_i+\rho_i,\mu_1-\epsilon)/
\log(T\epsilon^2/V)),2\}$. For any $\rho_i, \epsilon>0$ that satisfies $\epsilon+\rho_i<\Delta_i$,  then
\begin{align*}
 \EE\left[\sum_{t=K+1}^{T} \ind\{A_t=i, E_{i,\epsilon}^c(t) \} \right] \leq 1+\frac{2V}{\rho_i^2}+\frac{V}{\epsilon^2}+\frac{N\log(T\epsilon^2/V)}{\kl(\mu_i+\rho_i,\mu_1-\epsilon)}.
\end{align*}
\end{lemma}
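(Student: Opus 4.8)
The plan is to reindex the sum defining term $B$ by the number of prior plays of arm $i$ and to split according to (a) whether the empirical mean $\hmu_{is}$ of arm $i$ after $s$ plays is well concentrated and (b) whether $s$ has passed a threshold $s_0$ to be tuned. Writing $s=T_i(t-1)$ for the play count just before a round in which arm $i$ is pulled, and abbreviating $d=\kl(\mu_i+\rho_i,\mu_1-\epsilon)$ and $L=\log(T\epsilon^2/V)$, I would start from the pointwise decomposition
\begin{align*}
\ind\{A_t=i,E_{i,\epsilon}^c(t)\}
&\le \ind\{A_t=i,\hmu_{is}>\mu_i+\rho_i\}
+ \ind\{A_t=i,\theta_i(t)>\mu_1-\epsilon,\hmu_{is}\le\mu_i+\rho_i,\ s<s_0\}\\
&\quad+ \ind\{\theta_i(t)>\mu_1-\epsilon,\hmu_{is}\le\mu_i+\rho_i,\ s\ge s_0\},
\end{align*}
where in the last term I have already dropped the constraint $A_t=i$ as an upper bound. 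Since $\epsilon+\rho_i<\Delta_i$, on the middle and last events $\mu_i+\rho_i<\mu_1-\epsilon$, so the tail formula \eqref{eq:perpty1} applies to the sampling distribution of arm $i$.

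For the first (concentration) term I would reindex by play count — each value of $s$ is realized by at most one pull of arm $i$ — and bound the summed expectation by $\sum_{s\ge1}\PP(\hmu_{is}>\mu_i+\rho_i)$. The upper Chernoff bound for exponential families gives $\PP(\hmu_{is}>\mu_i+\rho_i)\le e^{-s\,\kl(\mu_i+\rho_i,\mu_i)}$, and \eqref{eq:pinsk} yields $\kl(\mu_i+\rho_i,\mu_i)\ge\rho_i^2/(2V)$; summing the resulting geometric series and using $e^x-1\ge x$ produces the $2V/\rho_i^2$ contribution. For the middle term I would use no probability at all: the number of pulls of arm $i$ with $s<s_0$ is at most $s_0$, so this term contributes at most $s_0$, which I will set equal to $NL/d$. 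This is the source of the main $N\log(T\epsilon^2/V)/\kl(\mu_i+\rho_i,\mu_1-\epsilon)$ term, and the precise constant $N=\min\{1/(1-d/L),2\}$ is chosen exactly so that the threshold inequality $(s_0-1)d\ge L$ needed in the next step holds while $s_0$ stays at $NL/d$.

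The crux — and the step I expect to be the main obstacle — is the third term, the over-estimation of arm $i$ once it is already well sampled. The tempting idea of reindexing it by play count and summing the single-sample tail $\tfrac12 e^{-(s-1)d}$ fails, because the sample that actually triggers a pull of arm $i$ is conditioned on being the maximum and is therefore biased upward, so its conditional law is not the fresh law $\cP$. I would sidestep this by keeping the sum over all rounds (not just pulls) after dropping $A_t=i$: conditioning on $\mathcal{F}_{t-1}$, the event $\{\hmu_{is}\le\mu_i+\rho_i,\ s\ge s_0\}$ is measurable, and a fresh sample obeys, via \eqref{eq:perpty1}, the monotonicity \eqref{eq:property-2}, and $sb_s=s-1$ increasing in $s$ (recall $b_s=(s-1)/s$),
\[
\PP\big(\theta_i(t)>\mu_1-\epsilon\mid\mathcal{F}_{t-1}\big)\ \le\ \tfrac12 e^{-(s_0-1)d}
\quad\text{on }\{\hmu_{is}\le\mu_i+\rho_i,\ s\ge s_0\},
\]
and the indicator vanishes off this event. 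Summing this bound over the at most $T$ rounds and using $(s_0-1)d\ge L$, i.e.\ $\tfrac12 e^{-(s_0-1)d}\le V/(2T\epsilon^2)$, makes the total at most $V/(2\epsilon^2)$, which yields the $V/\epsilon^2$ contribution.

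The reason the split at $s_0$ is essential is precisely this trade-off: below $s_0$ the single-round tail is too large to afford summing over all $T$ rounds, so I charge those pulls to the trivial count bound $s_0$, whereas above $s_0$ the tail is small enough that discarding the max-conditioning (by dropping $A_t=i$ and summing over every round) costs only $V/\epsilon^2$. The one remaining piece of bookkeeping is to verify that $s_0=NL/d$ with the stated $N$ simultaneously satisfies $s_0\ge 1+L/d$ (so the threshold inequality holds, which is where the min with $2$ caps $N$ when $d$ approaches $L$) and keeps the middle-term contribution exactly at $NL/d$; this pins down the form of $N$. Adding the three pieces together with the warm-start $+1$ then gives the claimed bound $1+2V/\rho_i^2+V/\epsilon^2+N\log(T\epsilon^2/V)/\kl(\mu_i+\rho_i,\mu_1-\epsilon)$.
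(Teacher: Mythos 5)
Your proposal is correct and follows essentially the same route as the paper's proof: the same three contributions arise from (i) the concentration of $\hmu_{is}$ via the maximal inequality and \eqref{eq:pinsk} giving $2V/\rho_i^2$, (ii) a trivial count bound of $N\log(T\epsilon^2/V)/\kl(\mu_i+\rho_i,\mu_1-\epsilon)$ for small play counts, and (iii) the tail bound \eqref{eq:perpty1} with the monotonicity \eqref{eq:property-2} and the choice of $N$ forcing the per-round probability below $V/(T\epsilon^2)$, summed over all $T$ rounds after dropping $A_t=i$ to give $V/\epsilon^2$. The paper merely organizes the split around the random set of rounds where $G_{iT_i(t-1)}(\epsilon)>V/(T\epsilon^2)$ before reindexing by play count, which is an equivalent bookkeeping of your decomposition.
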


Applying Lemma \ref{lem:mini-overest-1}, we have the following bound for term $B$ in \eqref{eq:decomp_num_pull_i}.
\begin{align*}
    \EE\left[\sum_{t=K+1}^{T} \ind\{A_t=i, E_{i,\epsilon}^c(t) \} \right] & \leq 1+\frac{2V}{\rho_i^2}+\frac{V}{\epsilon^2}+\frac{N\log(T\epsilon^2/V)}{\kl(\mu_i+\rho_i,\mu_1-\epsilon)}\notag \\
    & \leq 1+\frac{2V}{\rho_i^2}+\frac{V}{\epsilon^2}+\frac{4V\log(T\epsilon^2/V)}{(\Delta_i-\epsilon-\rho_i)^2},
\end{align*}
where the last inequality is due to \eqref{eq:pinsk} and $N\leq 2$.

\paragraph{Putting It Together:}
Substituting the  bounds of terms $A$ and $B$ back into \eqref{eq:decomp_num_pull_i}, we have
\begin{align*}
    \EE[T_{i}(T)]=O\bigg(1+\frac{V\log(T\epsilon^2/V)}{(\Delta_i-\epsilon-\rho_i)^2} +\frac{V}{\rho_i^2}+\frac{V\log(T\epsilon^2/V)}{\epsilon^2}\bigg).
\end{align*}
Let $\epsilon=\rho_i=\Delta_i/4$, we have
\begin{align*}
    \EE[T_{i}(T)]=O\bigg(1+\frac{V\log(T\Delta_i^2/V)}{\Delta_i^2}\bigg).
\end{align*}
Note that we have assumed $\Delta_i>16\sqrt{V/T}$ at the beginning of the proof. Therefore, the total regret can be decomposed as follows,
\begin{align*}
    R_{\mu}(T)=\sum_{i\in [K]:\Delta_i>\lambda} O\bigg(\Delta_i+\frac{V\log(T\Delta_i^2/V)}{\Delta_i} \bigg)+\max_{i\in [K],\Delta_i\leq \lambda} \Delta_i \cdot T,
\end{align*}
for any $\lambda\geq 16\sqrt{V/T}$.
By choosing $\lambda=16\sqrt{VK\log K/T}$, we obtain the following worst-case regret: $R_{\mu}(T)=O(\sqrt{VKT\log K})$. This completes the proof of the finite-time regret bounds of $\algname$.

\subsection{Proof of Supporting Lemmas}
In this subsection, we prove the lemmas used in the proof of our main results in this section.
\subsubsection{Proof of Lemma \ref{lem:underest-decomp}}
Define $\cE$ to be the event such that  $\hmu_{1s}\in L_s$ holds for all $s\in [T]$. The proof of Lemma \ref{lem:underest-decomp} needs the following lemma, which is used for bounding $\PP(\cE^c)$.

\begin{lemma}
\label{lem:book-minimax}
Let $\epsilon>0$, $b\in[K]$ and $f(\epsilon)=\lceil 16V\log(T\epsilon^2/(bV))/\epsilon^2  \rceil$. Assume $T\geq bf(\epsilon)$. Then,
\begin{align*}
\PP \big(\exists 1\leq s \leq f(\epsilon): \hat{\mu}_{1s}\leq \mu_1-\epsilon, \kl(\hat{\mu}_{1s},\mu_1)\geq 4\log(T/(bs))/s \big) \leq \Theta\bigg(\frac{bV}{T\epsilon^2}\bigg).
\end{align*}
\end{lemma}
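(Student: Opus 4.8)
The plan is to read Lemma~\ref{lem:book-minimax} as a \emph{maximal} (uniform-in-$s$) control of the lower deviations of the optimal arm's empirical mean $\hmu_{1s}$ over $1\le s\le f(\epsilon)$. The first step I would take is to collapse the two defining conditions into a single one-sided threshold. Since $\kl(\cdot,\mu_1)$ is strictly decreasing on $(R_{\min},\mu_1)$, the conjunction $\{\hmu_{1s}\le\mu_1-\epsilon\}\cap\{\kl(\hmu_{1s},\mu_1)\ge 4\log(T/(bs))/s\}$ is exactly $\{\hmu_{1s}\le\mu_1-\gamma_s\}$, where $\gamma_s=\max\{\epsilon,\delta_s\}$ and $\delta_s$ solves $\kl(\mu_1-\delta_s,\mu_1)=4\log(T/(bs))/s$. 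Equivalently, on the bad event the large-deviation rate is $s\,\kl(\hmu_{1s},\mu_1)\ge\max\{4\log(T/(bs)),\,s\,\kl(\mu_1-\epsilon,\mu_1)\}$, i.e.\ the maximum of a ``KL part'' and an ``$\epsilon$ part'', the latter being at least $s\epsilon^2/(2V)$ by \eqref{eq:pinsk}. The hypothesis $T\ge bf(\epsilon)$ ensures $bs/T\le 1$, so $\log(T/(bs))\ge 0$ and all these rates are well defined.

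The heart of the argument is a peeling (geometric slicing) device over $s$, which is essential because a crude union bound over all $s$ would lose a polynomial factor in $V/\epsilon^2$. I would split $\{1,\dots,f(\epsilon)\}$ into blocks $B_j=\{s:2^j\le s<2^{j+1}\}$ for $0\le j\le\lceil\log_2 f(\epsilon)\rceil$. On a fixed block the curved boundary $4\log(T/(bs))/s$ is monotone in $s$, so it can be replaced by its value at a block endpoint, reducing the question to a constant deviation level $\gamma_{(j)}$ on $B_j$. I would then apply the maximal inequality (Lemma~\ref{lem:maximal-inequality}) together with the exponential-family Chernoff bound $\PP(\hmu_{1s}\le x)\le e^{-s\,\kl(x,\mu_1)}$ to bound $\PP(\exists s\in B_j:\hmu_{1s}\le\mu_1-\gamma_{(j)})$ by an exponential in the block rate. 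Since a maximal inequality over a block of length of order $2^j$ costs only a constant factor in the exponent, the factor $4$ in the threshold is exactly what keeps a usable rate: ``KL-regime'' blocks contribute a fixed power of $b2^{j}/T$, while ``$\epsilon$-regime'' blocks (large $s$, where $\gamma_{(j)}=\epsilon$) contribute $\exp(-c\,2^{j}\epsilon^2/V)$.

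The last step is to sum the per-block bounds. The $\epsilon$-regime blocks, which begin at the crossover scale $s^{*}$ of order $V\log(T\epsilon^2/(bV))/\epsilon^2$, form a geometric series in $\exp(-c2^{j}\epsilon^2/V)$ dominated by its first term and negligible against the target; the KL-regime blocks form a geometric series dominated by the block nearest $s^{*}$, and evaluating that block together with $T\ge bf(\epsilon)$ collapses the total to $O(bV/(T\epsilon^2))$. I expect the main obstacle to be exactly this balancing in the peeling: turning the $s$-dependent boundary into per-block constants while retaining enough of the exponent --- this is where the constant $4$, the definition of $f(\epsilon)$, and the bound $\alpha_s\le 4\log(T/s)/s$ in \eqref{eq:alphas} must be matched --- and then checking that the geometric sum lands on the clean order $bV/(T\epsilon^2)$ without picking up a spurious $\log(T\epsilon^2/(bV))$ factor from the crossover block. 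Managing the two competing rates precisely at the crossover is the part demanding the most care.
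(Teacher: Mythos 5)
Your proposal is correct and matches the paper's proof in its essentials: a dyadic peeling of $\{1,\dots,f(\epsilon)\}$, the maximal inequality (Lemma \ref{lem:maximal-inequality}) applied on each block with the curved threshold frozen at a block endpoint so that the factor $4$ yields an exponent $2\log(T/(b\,\cdot))$ and hence a per-block bound of the form $(b\,\cdot/T)^2$, and a geometric sum whose dominant (crossover) term is reduced to $O(bV/(T\epsilon^2))$ exactly via $T\ge bf(\epsilon)$ and $(\log x)^2/x\le 1$, which resolves the ``spurious log'' concern you flag. The only difference is that the paper simply discards the condition $\hmu_{1s}\le\mu_1-\epsilon$ (enlarging the event) and peels the KL threshold alone, so your separate ``$\epsilon$-regime'' bookkeeping, while sound, is unnecessary.
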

The proof of Lemma \ref{lem:book-minimax} could be found in Section \ref{sec:technical-lem}. Now, we are ready to prove Lemma \ref{lem:underest-decomp}.
\begin{proof}[Proof of Lemma \ref{lem:underest-decomp}]
The indicator function can be decomposed based on $\cE$, that is
\begin{align}
\label{eq:mini-decom-f10}
    &\EE\left[ \sum_{t=K+1}^T \ind \{A_t=i, E_{i,\epsilon} (t)\} \right] \notag\\
    & \leq T\cdot \PP(\cE^c)+  \EE\left[ \sum_{t=K+1}^T \big[\ind \{A_t=i, E_{i,\epsilon} (t) \}\cdot \ind\{\hmu_{1T_{i}(t-1)}\in L_{T_{i}(t-1)}\} \big] \right] \notag \\
   &  \leq \Theta\bigg(\frac{V
   }{T\epsilon^2}\bigg)+  \EE\left[ \sum_{t=K+1}^T \big[\ind \{A_t=i, E_{i,\epsilon} (t) \}\cdot \ind\{\hmu_{1T_{i}(t-1)}\in L_{T_{i}(t-1)}\} \big] \right],
\end{align}
where the second inequality is due to Lemma \ref{lem:book-minimax} with $b=1$ and from the fact $\epsilon>8\sqrt{V/T}$, $T\geq f(\epsilon)$.
 Let $\cF_{t}=\sigma(A_1,r_1,\cdots,A_{t},r_t)$ be the filtration. Note that $\theta_i(t)$ is sampled from $\cP(\hat\mu_{i}(t-1),  T_i(t-1))$. Recall the definition, we know that $\hat\mu_i(t-1)=\hat\mu_{is}$ as long as $s=T_i(t-1)$. By the definition of $G_{is}(x)$, it holds that
\begin{align}
\label{eq:mini-decom-10}
    G_{1T_{1}(t-1)}(\epsilon)=\PP(\theta_{1}(t)\geq \mu_1-\epsilon \mid \cF_{t-1}).
\end{align}
Consider the following two cases. \\
\textbf{Case 1: $t:T_1(t-1)\leq M$.}
The proof of this case is similar to that of \citep[Theorem 36.2]{lattimore2018bandit}.  Let $A'_{t}=\arg \max_{i\neq 1}\theta_{i}(t)$. Then 
\begin{align}
\label{eq:mini-decom-2}
    \PP(A_{t}=1\mid \cF_{t-1}) & \geq \PP(\{\theta_{1}(t)\geq \mu_1-\epsilon\}\cap\big\{ A'_{t}=i, E_{i,\epsilon} (t) \big\}\mid \cF_{t-1}) \notag \\
    & = \PP(\theta_{1}(t)\geq \mu_1-\epsilon \mid \cF_{t-1}) \cdot  \PP(A'_{t}=i,E_{i,\epsilon} (t)\mid \cF_{t-1}) \notag \\
    & \geq \frac{G_{1T_{1}(t-1)}(\epsilon)}{1-G_{1T_{1}(t-1)}(\epsilon)} \cdot  \PP(A_{t}=i,E_{i,\epsilon} (t)\mid \cF_{t-1}),
\end{align}
where the first inequality is due to the fact that when both event $\{\theta_{1}(t)\geq \mu_1-\epsilon\}$ and event $\{A'_{t}=i,E_{i,\epsilon} (t) \}$ hold, we must have $\{A_t=1\}$, the first equality is due to $\theta_1(t)$ is conditionally independent of $A_t'$ and $E_{i,\epsilon} (t)$ given $\cF_{t-1}$,   and  the last inequality is from \eqref{eq:mini-decom-10} and the fact that 
\begin{align*}
    \PP(A_{t}=i,E_{i,\epsilon} (t)\mid \cF_{t-1}) \leq (1-\PP(\theta_{1}(t)\geq \mu_1-\epsilon \mid \cF_{t-1}))\cdot  \PP(A_{t}'=i,E_{i,\epsilon} (t)\mid \cF_{t-1}),
\end{align*}
which is due to $\{ A_t=i, E_{i,\epsilon} (t) \ \text{occurs}\}\subseteq \{ A'_{t}=i, E_{i,\epsilon} (t) \ \text{occurs} \}\cap \{\theta_{1}(t)\leq \mu_1-\epsilon\}$ and the two intersected events are conditionally independent given $\cF_{t-1}$. 
Therefore,  we have
\begin{align}
    \label{eq:decom-T1t<M0}
    \EE \left[\sum_{t:T_{1}(t-1)\leq M} \ind \{A_t=i,E_{i,\epsilon}(t) \}
    \right] & \leq \EE \left[ \sum_{t:T_{1}(t-1)\leq M}\bigg(\frac{1}{G_{1T_{1}(t-1)}(\epsilon)}-1 \bigg)\PP(A_t=1 \mid \cF_{t-1})\right]\notag \\
    & \leq \EE \left[ \sum_{t:T_{1}(t-1)\leq M}\bigg(\frac{1}{G_{1T_{1}(t-1)}(\epsilon)}-1 \bigg)\ind\{ A_t=1\}\right] \notag \\
    &\leq \EE \left[\sum_{s=1}^{M} \bigg(\frac{1}{G_{1s}(\epsilon)}-1 \bigg) \right],
\end{align}
where the first inequality is from \eqref{eq:mini-decom-2}. \\
\textbf{Case 2: $t:T-1 \geq T_1(t-1)> M$.}  For this case, we have
\begin{align}
\label{eq:decom-T1t>M0}
\EE  \left[\sum_{t:T_{1}(t-1)> M}^{T-1} \ind \{A_t=i,E_{i,\epsilon}(t) \}\right] & \leq \EE \left[\sum_{t:T_{1}(t-1)> M}^T \ind \{\theta_{1}(t)< \mu_1-\epsilon\}\right] \notag \\
& \leq T\cdot \PP \big(\exists s> M:\hmu_{1s}<\mu_1-\epsilon/2\big) \notag \\
&\quad +\EE \left[\sum_{t:T_{1}(t-1)> M} \ind \{\theta_{1}(t)< \mu_1-\epsilon \mid \hmu_{1T_{1}(t-1)}\geq \mu_1-\epsilon/2\}\right] \notag \\
& \leq T\cdot e^{-M(\mu_1-(\mu_1-\epsilon/2))^2/(2V)} \notag \\
& \qquad +\sum_{t:T_{1}(t-1)>M}\PP \big[\theta_{1}(t)<\mu_1-\epsilon \mid \hmu_{1T_{1}(t-1)}\geq \mu_1-\epsilon/2   \big] \notag \\
& \leq \frac{V}{\epsilon^2}+ T\cdot e^{-Mb_M\epsilon^2/(8V)}\notag\\
&\leq \frac{2V}{\epsilon^2},
\end{align}
where the first inequality is due to the fact that $\{A_t=i,E_{i,\epsilon}(t)\}\subseteq \{ \theta_{1}(t)<\mu_1-\epsilon\}$,  the third inequality is due to Lemma \ref{lem:maximal-inequality},  the fourth inequality is due to \eqref{eq:pinsk}, and the last inequality is due to the fact $b_{M}\geq 1/2$. Combining \eqref{eq:mini-decom-f10}, \eqref{eq:decom-T1t<M0}, and \eqref{eq:decom-T1t>M0} together, we finish the proof of Lemma \ref{lem:underest-decomp}. 
\end{proof}

Note that in order to bound term $A$, we need the following lemma that states the upper bound of the first term in Lemma \ref{lem:underest-decomp}.

\subsubsection{Proof of Lemma \ref{lem:minimax-undetestimation-rots}}
Let $p(x)$ be the PDF of $\hmu_{1s}$ and $\theta_{1s}$ be a sample from $\cP(\hmu_{1s},s)$.  We have
\begin{align}
\label{eq:finite-underest-0}
   &\sum_{s=1}^{M}\EE_{\hmu_{1s}} \Bigg[\bigg(\frac{1}{G_{1s}(\epsilon)} -1\bigg) \cdot \ind\{\hmu_{1s}\in L_s \} \Bigg]\notag\\
   &\leq  \underbrace{\sum_{s=1}^{M}\bigg(\int_{\mu_1-\epsilon/2}^{R_{\max}} p(x)/  \PP(\theta_{1s}\geq \mu_1-\epsilon \mid \hmu_{1s}=x)\dd x-1\bigg)}_{A_1}\notag \\
   &\qquad +\underbrace{\sum_{s=1}^{M}\int_{\mu_1-\epsilon}^{\mu_1-\epsilon/2} p(x)/  \PP(\theta_{1s}\geq \mu_1-\epsilon \mid \hmu_{1s}=x)\dd x}_{A_2}\notag\\ &\qquad+\underbrace{\sum_{s=1}^{M}\int_{\mu_1-\epsilon-\alpha_s}^{\mu_1-\epsilon}\Big[ p(x)/  \PP(\theta_{1s}\geq \mu_1-\epsilon \mid \hmu_{1s}=x) \Big] \dd x}_{A_3},
\end{align}
where the inequality is due to the definition of $L_s$\footnote{For the discrete reward distribution, we can use the Dirac delta function for the integral.}. \\
\textbf{Bounding term $A_1$.}
For term $A_1$, we divide $\sum_{s=1}^{M}$ into two term, i.e., $\sum_{s=1}^{\lfloor 32V/\epsilon^2 \rfloor}$ and $\sum^{M}_{s=\lceil 32V/\epsilon^2 \rceil}$. Intuitively, for $s\geq 32V/\epsilon^2$, $\PP(\theta_{1s}\geq \mu_1-\epsilon\mid \hat{\mu}_{1s}\geq \mu_1-\epsilon/2)$ will be large. We have
\begin{align}
\label{bounding-a_1}
  A_1 &= \sum_{s=1}^{M}\bigg(\int_{\mu_1-\epsilon/2}^{R_{\max}} \frac{p(x)}  {\PP(\theta_{1s}\geq \mu_1-\epsilon \mid \hmu_{1s}=x)} \dd x -1\bigg) \notag \\
  &\leq \frac{32V}{\epsilon^2}+\sum_{s=\lceil 32V/\epsilon^2 \rceil}^{M}\bigg(\int_{\mu_1-\epsilon/2}^{R_{\max}} \frac{p(x)}  {\PP(\theta_{1s}\geq \mu_1-\epsilon \mid \hmu_{1s}=x)} \dd x -1\bigg) \notag \\
   &\leq \frac{32V}{\epsilon^2} +\sum_{s=\lceil 32V/\epsilon^2 \rceil }^{M}\bigg( \frac{1}{1-e^{-s/2\cdot\kl(\mu_1-\epsilon/2,\mu_1-\epsilon)}}-1\bigg)\notag \\
   &\leq \frac{32V}{\epsilon^2}+\sum_{s=\lceil 32V/\epsilon^2\rceil }^{M} \bigg(\frac{1}{1-e^{-s\epsilon^2/(16V)}}-1\bigg) \notag \\
   & =\frac{16V}{\epsilon^2}+ \sum_{s=\lceil {32V}/{\epsilon^2}\rceil }^{M} \frac{1}{e^{s\epsilon^2/(16V)}-1}\notag\\
   &\leq \frac{16V}{\epsilon^2}+ \frac{16V}{\epsilon^2}\sum_{s=1}^{\infty} \frac{1}{e^{1+s}-1} \notag \\
   &\leq \frac{32V}{\epsilon^2}.
\end{align}
For the first inequality, we used the fact $\PP(\theta_{1s}\geq \mu_1-\epsilon\mid \hat{\mu}_{1s}\geq \mu_1-\epsilon)\geq 1/2$, which is due to \eqref{eq:perpty1}. The second inequality is due to \eqref{eq:perpty2} and the fact $b_s\geq 1/2$. The third inequality is due to \eqref{eq:pinsk}. \\
\textbf{Bounding term $A_2$.} 
We have 
\begin{align}
\label{boundinga_2}
  A_2 = \sum_{s=1}^{M}\int^{\mu_1-\epsilon/2}_{\mu_1-\epsilon} \frac{p(x)}  {\PP(\theta_{1s}\geq \mu_1-\epsilon \mid \hmu_{1s}=x)} \dd x \leq 2\sum_{s=1}^{\infty}e^{-s\epsilon^2/(8V)}\leq \frac{2}{e^{\epsilon^2/(8V)}-1}\leq \frac{16V}{\epsilon^2},
\end{align}
where the first inequality is due to $\PP(\theta_{1s}\geq \mu_1-\epsilon\mid \hat{\mu}_{1s}\geq \mu_1-\epsilon)\geq 1/2$ and from Lemma \ref{lem:maximal-inequality}, $\PP( \hat{\mu}_{1s}\leq \mu_1-\epsilon/2)\leq e^{-s\epsilon^2/(8V)}$, and the last inequality is due to $e^{x}-1\geq x$ for all $x>0$. \\
\textbf{Bounding term $A_3$.}
Note that the closed form of the probability density function of $\hmu_{1s}$ is hard to compute. Nevertheless, we only need to find an upper bound of the integration in $A_3$. In the following lemma, we show that it is possible to find such an upper bound with an explicit form. 
\begin{lemma}
\label{lem:exptofunction}
Let  $q(x)=|(s\cdot \kl(x,\mu_1))'|e^{-s\cdot \kl(x,\mu_1)}=s\int_{x}^{\mu_1}1/V(t) \dd t \cdot  e^{-s\cdot \kl(x,\mu_1)}$, $g(x)=e^{sb_s\cdot \kl(x,\mu_1-\epsilon)}$ and $p(x)$ be the PDF of distribution of $\hmu_{1s}$, then 
\begin{align*}
  \int_{\mu_1-\epsilon-\alpha_s}^{\mu_1-\epsilon} q(x) g(x) \dd x +e^{-s\cdot \kl(\mu_1-\epsilon-\alpha_s,\mu_1)}\cdot g(\mu_1-\epsilon-\alpha_s)\geq \int_{\mu_1-\epsilon-\alpha_s}^{\mu_1-\epsilon} p(x) g(x) \dd x.
\end{align*}
\end{lemma}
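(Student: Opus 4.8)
The plan is to recognize the function $q$ as an exact derivative and then reduce the inequality to a one-sided comparison between the true law of $\hmu_{1s}$ and the closed-form tail $Q(x):=e^{-s\cdot\kl(x,\mu_1)}$, which I would exploit through integration by parts. The whole point is to trade the intractable density $p$ of $\hmu_{1s}$ for the explicit function $q$, at the cost of a single boundary term.

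First I would verify that $q=Q'$. By Proposition~\ref{lem:kl2016} we have $\kl(x,\mu_1)=\int_x^{\mu_1}(t-x)/V(t)\,\dd t$, so differentiating in $x$ gives $(\kl(x,\mu_1))'=-\int_x^{\mu_1}1/V(t)\,\dd t\le 0$ for $x\le\mu_1$. Hence
\begin{align*}
Q'(x)=-s\,(\kl(x,\mu_1))'\,e^{-s\cdot\kl(x,\mu_1)}=s\int_x^{\mu_1}\frac{1}{V(t)}\,\dd t\cdot e^{-s\cdot\kl(x,\mu_1)}=q(x)\ge 0,
\end{align*}
so $Q$ is nondecreasing on $(R_{\min},\mu_1]$, increasing from $0$ (where $\kl=\infty$) up to $1$ (where $\kl=0$); it plays the role of a dominating ``CDF'' whose ``density'' is exactly $q$.

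Next I would bring in the only probabilistic input, the lower-tail Cram\'er--Chernoff estimate for empirical means of an exponential family: writing $P(x):=\PP(\hmu_{1s}\le x)$, one has $P(x)\le e^{-s\cdot\kl(x,\mu_1)}=Q(x)$ for every $x\le\mu_1$, which is precisely the concentration underlying Lemma~\ref{lem:maximal-inequality}. On the interval of interest, with $a:=\mu_1-\epsilon-\alpha_s$ and $b:=\mu_1-\epsilon$, every $x\in[a,b]$ satisfies $x\le b<\mu_1$, so the domination $P\le Q$ holds throughout. I would also record that $g$ is decreasing there: since $g(x)=e^{sb_s\cdot\kl(x,\mu_1-\epsilon)}$ and $(\kl(x,\mu_1-\epsilon))'\le 0$ for $x\le\mu_1-\epsilon$, we have $g'(x)\le0$ on $[a,b]$, while $g>0$.

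With these pieces, integration by parts (phrased as a Riemann--Stieltjes integral $\int_a^b g\,\dd P$ so that the discrete case is covered by the footnote's Dirac convention) gives
\begin{align*}
\int_a^b p\,g\,\dd x-\int_a^b q\,g\,\dd x=\big(P-Q\big)g\Big|_a^b-\int_a^b\big(P(x)-Q(x)\big)g'(x)\,\dd x.
\end{align*}
Because $P-Q\le0$, $g>0$, and $-g'\ge0$, both the endpoint contribution $(P(b)-Q(b))g(b)$ and the integral $\int_a^b(P-Q)(-g')\,\dd x$ are $\le0$ and may be dropped to obtain an upper bound; the surviving boundary term is $-(P(a)-Q(a))g(a)=(Q(a)-P(a))g(a)\le Q(a)g(a)=e^{-s\cdot\kl(a,\mu_1)}g(a)$, using only $P(a)\ge0$. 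Rearranging yields exactly the claimed inequality. The main obstacle is the probabilistic step $P\le Q$: since $\hmu_{1s}$ has no closed-form density for a general exponential family, one cannot compare $p$ and $q$ pointwise, so the entire argument hinges on replacing that pointwise comparison by the one-sided tail domination together with careful sign-tracking of the boundary contributions in the integration by parts.
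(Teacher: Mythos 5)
Your proof is correct, and it takes a genuinely cleaner route than the paper's. Both arguments rest on the same two ingredients: the Chernoff/maximal-inequality domination $P(x):=\PP(\hmu_{1s}\le x)\le Q(x):=e^{-s\cdot\kl(x,\mu_1)}$ for $x\le\mu_1$, and the monotone decrease of $g$ on $[\mu_1-\epsilon-\alpha_s,\mu_1-\epsilon]$. The paper executes this by hand: it partitions the interval into $n$ pieces of equal $q$-mass, constructs an augmented density $p_n\ge p$ whose partial masses match those of $q$, introduces matching points $y_i\ge x_i$, performs what is in effect an Abel summation, and then sends $n\to\infty$ (with a separate, parallel treatment of the discrete case). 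Your Riemann--Stieltjes integration by parts
\begin{align*}
\int_a^b p\,g\,\dd x-\int_a^b q\,g\,\dd x=\big(P-Q\big)g\Big|_a^b-\int_a^b\big(P(x)-Q(x)\big)g'(x)\,\dd x
\end{align*}
collapses all of that into one identity, after which the sign-tracking ($P-Q\le 0$, $g>0$, $g'\le 0$) and the crude bound $Q(a)-P(a)\le Q(a)$ give exactly the stated boundary term $e^{-s\cdot\kl(\mu_1-\epsilon-\alpha_s,\mu_1)}g(\mu_1-\epsilon-\alpha_s)$. The identification $q=Q'$ is verified correctly via Proposition \ref{lem:kl2016}, the domination $P\le Q$ is exactly Lemma \ref{lem:maximal-inequality} with $N=M=s$, and phrasing the left-hand integral as $\int g\,\dd P$ handles the continuous and discrete cases uniformly, which the paper instead does in two separate passes. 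What your approach buys is brevity and transparency (the inequality is visibly a stochastic-dominance statement tested against a nonnegative decreasing function); what the paper's buys is that it never invokes Stieltjes integration by parts and so stays entirely elementary. The only point to state explicitly if you wrote this up would be that $\int_a^b Pg'\,\dd x$ exists because $g$ is continuously differentiable, which legitimizes the integration by parts even when $P$ has atoms.
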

The proof of Lemma \ref{lem:exptofunction} could be found in Section \ref{sec:technical-lem}.
Besides, we need the following inequality on kl-divergence, which resembles the three-point identity property. In particular, for $\mu_1-\epsilon>x$, we have
\begin{align}
\label{eq:kl-underest}
    -\kl(x,\mu_1)+\kl(x,\mu_1-\epsilon)&=-\int_{x}^{\mu_1} \frac{t-x}{V(t)} \dd t +\int_{x}^{\mu_1-\epsilon}  \frac{t-x}{V(t)} \dd t \notag \\
    &=-\int_{\mu_1-\epsilon}^{\mu_1} \frac{t-x}{V(t)} \dd t\notag\\
    &\leq -\int_{\mu_1-\epsilon}^{\mu_1} \frac{t-(\mu_1-\epsilon)}{V(t)} \dd t \notag \\
    &= -\kl(\mu_1-\epsilon,\mu_1),
\end{align}
where the first and the last equality is due to \eqref{eq:pro-eq1-1}.
For term $A_3$, we have
\begin{align}
A_3&\leq \sum_{s=1}^{M}\int_{\mu_1-\epsilon-\alpha_s}^{\mu_1-\epsilon} p(x) e^{sb_s\cdot \kl(x,\mu_1-\epsilon)} \dd x \label{eq:rots-minimax-under-0} \\
&\leq \sum_{s=1}^{M}\int_{\mu_1-\epsilon-\alpha_s}^{\mu_1-\epsilon}\Big[ q(x) \cdot e^{s\cdot \kl(x,\mu_1-\epsilon)} \Big] \dd x  +\sum_{s=1}^{M}e^{-s\cdot \kl(\mu_1-\epsilon-\alpha_s,\mu_1)} \cdot e^{s\cdot \kl(\mu_1-\epsilon-\alpha_s,\mu_1-\epsilon)}\notag \\
   & \leq  \sum_{s=1}^{M} \int_{\mu_1-\epsilon-\alpha_s}^{\mu_1-\epsilon}\Big[  |s\cdot \kl(x,\mu_1)'| \cdot e^{-s\cdot \kl(x,\mu_1)} \cdot e^{s\cdot \kl(x,\mu_1-\epsilon)} \Big] \dd x+e^{-s\epsilon^2/(2V)} \cdot M  \notag \\
     & \leq  \sum_{s=1}^{M} e^{-s\cdot \kl(\mu_1-\epsilon,\mu_1)}\cdot \int_{\mu_1-\epsilon-\alpha_s}^{\mu_1}\big[  |s\cdot \kl(x,\mu_1)|'   \big] \dd x + e^{-s\epsilon^2/(2V)} \cdot M  \notag \\
   & \leq  \sum_{s=1}^{M}e^{-s\epsilon^2/(2V)}(1+ s\cdot\kl(\mu_1-\epsilon-\alpha_s,\mu_1)) \notag \\
   & \leq  \sum_{s=1}^{M}e^{-s\epsilon^2/(2V)}(1+ 4\log(T/s)) \label{eq:rots-minimax-under},
\end{align}
where the first inequality is due to \eqref{eq:perpty1}, the second inequality is due to Lemma \ref{lem:exptofunction} and $b_s\leq 1$, the third inequality is due to \eqref{eq:kl-underest}, the fourth inequality is due to \eqref{eq:kl-underest}, and the last inequality is due to Lemma \ref{lem:book-minimax} and the definition of $\alpha_s$.
Let $d=\lceil V/\epsilon^2 \rceil$.
For term $\sum_{s=1}^{d} \log(T/s)$, we have
\begin{align}
\label{ddl-1}
    \sum_{s=1}^{d} \log(T/s) &=d\log T-\sum_{s=1}^{d} \log s \notag \\
    & \leq d\log T - \bigg((s\log s-s)\bigg|^{d}_1-\log d \bigg) \notag \\
    & \leq d\log (T/d)+d+\log d\notag\\
    &=O\bigg( \frac{V\log (T\epsilon^2/V) }{\epsilon^2} \bigg),
\end{align}
where the first inequality is due to $\sum_{x=a}^b f(x)\geq \int_{a}^{b} f(x) \dd x -\max_{x\in [a,b]} f(x)$ for monotone function $f$.
For term $\sum_{s=d}^{M} e^{-s\epsilon^2/(2V)} \log(T/s)$, we have
\begin{align}
\label{ddl-0}
    \sum_{s=d}^{M} e^{-s\epsilon^2/(2V)} \log(T/s) &\leq \log(T/d) \sum_{s=d}^{M}e^{-s\epsilon^2/2V} \notag \\
    &\leq \log(T/d) \sum_{s=1}^{\infty}e^{-s\epsilon^2/2V} \notag \\
    & \leq  \frac{\log(T/d)}{e^{\epsilon^2/(2V)}-1}\notag\\
    &\leq \frac{2V\log(T/d)}{\epsilon^2}\notag\\
    &=O\bigg( \frac{V\log(T\epsilon^2/V)}{\epsilon^2}\bigg),
\end{align}
where the fourth inequality is due to $e^x\geq 1+x$ for $x>0$. Substituting \eqref{ddl-0} and \eqref{ddl-1} to \eqref{eq:rots-minimax-under}, we have $A_3=O\big(V\log(T\epsilon^2/V)/\epsilon^2\big)$. Substituting the bounds of $A_1$, $A_2$, and $A_3$ to \eqref{eq:finite-underest-0},  we have
\begin{align*}
\sum_{s=1}^M \EE_{\hmu_{1s}}\bigg[\bigg(\frac{1}{G_{1s}(\epsilon)}-1\bigg)\cdot \ind\{\hmu_{1s}\in L_s \} \bigg]=O\bigg(\frac{V\log(T\epsilon^2/V)}{\epsilon^2} \bigg),
\end{align*}
which completes the proof.

\subsubsection{Proof of Lemma \ref{lem:mini-overest-1}}
Let $\cT=\{t\in [K+1, T]: 1-F_{iT_{i}(t-1)}(\mu_1-\epsilon)>V/(T\epsilon^2)\}$. Then, 
\begin{align}
\label{eq:rots-asym-r0}
   &\EE\left[\sum_{t=K+1}^{T} \ind\{A_t=i, E_{i,\epsilon}^c(t) \} \right] \notag \\
   &\leq  \EE\left[ \sum_{t\in \cT}\ind\{A_t=i\}\right]+\EE \left[ \sum_{t\notin \cT} \ind \{E_{i,\epsilon}^c(t)\}\right] \notag \\
   &\leq \EE\left[ \sum_{t\geq K+1}\bigg(\ind\{A_t=i\}\cdot \ind\{1-F_{iT_{i}(t-1)}(\mu_1-\epsilon)>V/(T\epsilon^2)\}\bigg)\right]+\EE \left[ \sum_{t\notin \cT} V/(T\epsilon^2)\right] \notag \\
   &\leq  \EE\left[ \sum_{s\in [T]}\ind\{1-F_{is}(\mu_1-\epsilon)>V/(T\epsilon^2)\}\right]+\frac{V}{\epsilon^2} \notag \\
   &\leq  \EE \left[\sum_{s=1}^{T} \ind\{G_{is}(\epsilon)>V/(T\epsilon^2) \} \right]+ \frac{V}{\epsilon^2}.
\end{align}
Let $s\geq N\log(T\epsilon^2/V)/{\kl(\mu_i+\rho_i,\mu_1-\epsilon)}$ and $X_{is}$ be a sample from the distribution $\cP(\hmu_{is},s)$, if $\hmu_{is}\leq \mu_i+\rho_i$, we have
\begin{align}
\label{eq:rots-asym-}
    \PP(X_{is}\geq \mu_1-\epsilon)&\leq \exp\big(-sb_s\kl(\hmu_{is},\mu_1-\epsilon)\big) \leq \exp \big(-sb_s\kl(\mu_i+\rho_i,\mu_1-\epsilon) \big) \leq \frac{V}{T\epsilon^2},
\end{align}
where the first inequality is from \eqref{eq:perpty1}, the second inequality is due to the assumption $\hmu_{is}\leq \mu_i+\rho_i$, and the last inequality is due to $s\geq  N\log(T\epsilon^2/V)/{\kl(\mu_i+\rho_i,\mu_1-\epsilon)}$ and $b_s=1-1/(s+1)\geq 1-{\kl(\mu_i+\rho_i,\mu_1-\epsilon)}/\log(T\epsilon^2/V)\geq 1/N$. Note that when $\PP(X_{is}\geq \mu_1-\epsilon)\leq V/(T\epsilon^2)$ holds,  $\ind\{G_{is}(\epsilon)>V/(T\epsilon^2)\}=0$. Now, we check the assumption $\hat{\mu}_{is}\leq \mu_i+\rho_i$ that is needed for \eqref{eq:rots-asym-}. From Lemma \ref{lem:maximal-inequality}, we have $\PP(\hmu_{is}>\mu_i+\rho_i)\leq \exp(-s\rho_i^2/(2V))$. Furthermore, it holds that
\begin{align}
    \label{eq:rots-asym-0}
    \sum_{s=1}^{\infty} e^{-\frac{s\rho_i^2}{2V}} \leq \frac{1}{e^{\rho_i^2/(2V)}-1} \leq \frac{2V}{\rho_i^2},
\end{align}
where the last inequality is due to the fact $1+x\leq e^x$ for all $x$. Let $Y_{is}$ be the event that $\hmu_{is}\leq \mu_i+\rho_i$ and $m=N\log(T\epsilon^2/V)/{\kl(\mu_i+\rho_i,\mu_1-\epsilon)}$. We further obtain
\begin{align}
\label{eq:rots-asym-r1}
    \EE \left[\sum_{s=1}^{T} \ind\{G_{is}(\epsilon)>V/(T\epsilon^2) \} \right]&\leq \EE \left[\sum_{s=1}^{T} [\ind\{G_{is}(\epsilon)>V/(T\epsilon^2)  \}\mid Y_{is} ] \right]+\sum_{s=1}^{T}(1-\PP[Y_{is}]) \notag \\
    & \leq  \EE \left[\sum_{s=\lceil m \rceil}^{T} [\ind\{\PP(X_{is}>\mu_1-\epsilon)>V/(T\epsilon^2))  \}\mid Y_{is} ] \right] \notag \\
    & \qquad  +\lceil m \rceil+\sum_{s=1}^{T}(1-\PP[Y_{is}]) \notag \\
    & \leq \lceil m \rceil+\sum_{s=1}^{T}(1-\PP[Y_{is}]) \notag \\ 
    &\leq 1+\frac{2V}{\rho_i^2}+\frac{N\log(T\epsilon^2/V)}{\kl(\mu_i+\rho_i,\mu_1-\epsilon)},
\end{align}
where the first inequality is due to the fact that $\PP(A)\leq \PP(A\mid B)+1-\PP(B)$,  the third inequality is due to \eqref{eq:rots-asym-} and the last inequality is due to   \eqref{eq:rots-asym-0}. Substituting \eqref{eq:rots-asym-r1} into \eqref{eq:rots-asym-r0}, we complete the proof.

\section{Proof of the Asymptotic Optimality of $\algname$}
Now we prove the asymptotic regret bound \eqref{eq:regret-expts-3} of $\algname$ presented in Theorem \ref{thm:regret_rots}.

\subsection{Proof of the Main Result}
The proof in this section shares many components with the finite-time regret analysis presented in Section \ref{sec:proof_finite}. 
We reuse the decomposition \eqref{eq:decomp_num_pull_i} by specifying $\epsilon=1/\log \log T$. In what follows, we bound  terms $A$ and $B$, respectively.

\paragraph{Bounding Term $A$:}
We reuse  Lemma \ref{lem:underest-decomp}. Then, it only remains term $\sum_{s=1}^{M}\EE\big[\big(1/G_{1s}(\epsilon)-1\big)\cdot \ind\{\hmu_{1s}\in L_s\} \big]$ to be bounded. We bound this term by the following lemma.
\begin{lemma}
\label{lem:rots-asym-A}
Let $\epsilon=1/\log \log T$. Let $M$, $G_{1s}(\epsilon)$, and $L_s$ be the same as defined in Lemma \ref{lem:underest-decomp}. Then, 
 $$\sum_{s=1}^{M}  \EE_{\hmu_{1s}} \Bigg[\bigg(\frac{1}{G_{1s}(\epsilon)} -1\bigg) \cdot \ind\{\hmu_{1s}\in L_s \} \Bigg] =O(V^2(\log \log T)^6+V(\log \log T)^2+1).$$
\end{lemma}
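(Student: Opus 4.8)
The plan is to follow the same three-way split of the summand used in the proof of Lemma~\ref{lem:minimax-undetestimation-rots}: decompose $\EE_{\hmu_{1s}}[(1/G_{1s}(\epsilon)-1)\cdot\ind\{\hmu_{1s}\in L_s\}]$ according to whether $\hmu_{1s}\ge\mu_1-\epsilon/2$ (term $A_1$), $\hmu_{1s}\in[\mu_1-\epsilon,\mu_1-\epsilon/2]$ (term $A_2$), or $\hmu_{1s}\in[\mu_1-\epsilon-\alpha_s,\mu_1-\epsilon]$ (term $A_3$), exactly as in \eqref{eq:finite-underest-0}. For $A_1$ and $A_2$ nothing changes: the arguments \eqref{bounding-a_1} and \eqref{boundinga_2} still give $A_1,A_2=O(V/\epsilon^2)$, which with $\epsilon=1/\log\log T$ is $O(V(\log\log T)^2)$ and accounts for the middle term of the claimed bound. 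The whole difficulty is concentrated in $A_3$, because the finite-time estimate \eqref{eq:rots-minimax-under}, namely $A_3\le\sum_{s=1}^{M}e^{-s\epsilon^2/(2V)}(1+4\log(T/s))$, is of order $(V/\epsilon^2)\log T=V(\log\log T)^2\log T$ for this $\epsilon$; this is \emph{larger} than $\log T$, so it is useless for asymptotic optimality and must be replaced by a genuinely sharper estimate.

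The key observation is that the only lossy step in the finite-time derivation of $A_3$ is the replacement $b_s\le1$. I would instead keep $b_s=(s-1)/s$ exact. Starting from \eqref{eq:rots-minimax-under-0} and Lemma~\ref{lem:exptofunction}, the exponent of the integrand $q(x)e^{sb_s\kl(x,\mu_1-\epsilon)}$ satisfies
\begin{align*}
-s\kl(x,\mu_1)+sb_s\kl(x,\mu_1-\epsilon)
&=-s\big[\kl(x,\mu_1)-\kl(x,\mu_1-\epsilon)\big]-s(1-b_s)\kl(x,\mu_1-\epsilon)\\
&\le -s\kl(\mu_1-\epsilon,\mu_1)-\kl(x,\mu_1-\epsilon),
\end{align*}
where I used $s(1-b_s)=1$ and the three-point inequality \eqref{eq:kl-underest}. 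Hence, relative to the finite-time bound, there is an \emph{extra} damping factor $e^{-\kl(x,\mu_1-\epsilon)}$ inside the band, on top of the global factor $e^{-s\kl(\mu_1-\epsilon,\mu_1)}\le e^{-s\epsilon^2/(2V)}$ from \eqref{eq:pinsk}. The boundary term produced by Lemma~\ref{lem:exptofunction} is handled identically and again contributes only $\sum_s e^{-s\epsilon^2/(2V)}=O(V/\epsilon^2)$.

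It then remains to bound $\int_{\mu_1-\epsilon-\alpha_s}^{\mu_1-\epsilon}s\,|\kl(x,\mu_1)'|\,e^{-\kl(x,\mu_1-\epsilon)}\,dx$. In the finite-time proof this was estimated by throwing away $e^{-\kl(x,\mu_1-\epsilon)}\le1$ and telescoping $\int s|\kl(x,\mu_1)'|dx$ to $s\,\kl(\mu_1-\epsilon-\alpha_s,\mu_1)=4\log(T/s)$, which is precisely what manufactures the harmful $\log(T/s)$. With the damping factor retained I would split $|\kl(x,\mu_1)'|=|\kl(x,\mu_1-\epsilon)'|+\int_{\mu_1-\epsilon}^{\mu_1}\!\frac{dt}{V(t)}$ and use the substitution $w=\kl(x,\mu_1-\epsilon)$: the first piece collapses to $\int_0^{w}e^{-w}dw\le1$, while the second is controlled through the Gaussian-type integral $\int_{\mu_1-\epsilon-\alpha_s}^{\mu_1-\epsilon}e^{-\kl(x,\mu_1-\epsilon)}dx\le\int_0^\infty e^{-y^2/(2V)}dy=O(\sqrt V)$ obtained from \eqref{eq:pinsk}. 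This replaces the per-$s$ factor $\log(T/s)$ by a quantity at most polynomial in $s$ (with coefficients depending on $V$ and $1/\epsilon$). Summing the resulting series $\sum_{s=1}^{M}e^{-s\epsilon^2/(2V)}\cdot\mathrm{poly}(s)$, where every extra power of $s$ contributes a geometric factor $O(V/\epsilon^2)$, then yields a bound of order $V^2/\epsilon^6=V^2(\log\log T)^6$; adding the $A_1,A_2$ and boundary contributions and an $O(1)$ term gives the claimed $O(V^2(\log\log T)^6+V(\log\log T)^2+1)$.

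The hard part will be the final integral estimate: carrying the variance function $V(\cdot)$ through $|\kl(x,\mu_1)'|$, $\kl(x,\mu_1-\epsilon)$ and the change of variables $w=\kl(x,\mu_1-\epsilon)$ while keeping the bounds tight enough that $\log(T/s)$ is truly converted into a polynomial-in-$s$ factor \emph{without} invoking a lower bound on $V(\cdot)$, which the model does not provide. Equally delicate is the split of the sum over $s$ into a small-$s$ regime, where the new factor $e^{-\kl(x,\mu_1-\epsilon)}$ is indispensable because there $\log(T/s)\approx\log T$, and a large-$s$ regime, where $e^{-s\epsilon^2/(2V)}$ already decays super-polynomially since $s\epsilon^2/(2V)\to\infty$. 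Correctly balancing these two regimes is exactly what turns the $\omega(\log T)$ finite-time estimate into the $o(\log T)$ bound demanded by asymptotic optimality.
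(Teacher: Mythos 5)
Your overall skeleton matches the paper's: reuse the $A_1,A_2,A_3$ split from \eqref{eq:finite-underest-0}, keep the $O(V/\epsilon^2)$ bounds for $A_1,A_2$, and redo only $A_3$; and your central idea of exploiting $b_s=(s-1)/s$ exactly (so that $1-b_s=1/s$ converts the harmful $\log(T/s)$ into an $O(s)$ per-term contribution) is precisely the mechanism the paper relies on. But your execution of $A_3$ has a genuine gap, one you yourself flag without resolving. After writing $sb_s=s-1$ and using \eqref{eq:kl-underest} you are left with $e^{-s\kl(\mu_1-\epsilon,\mu_1)}\int s\,|\kl(x,\mu_1)'|\,e^{-\kl(x,\mu_1-\epsilon)}\,\dd x$, and you split $|\kl(x,\mu_1)'|=|\kl(x,\mu_1-\epsilon)'|+C_\epsilon$ with $C_\epsilon=\int_{\mu_1-\epsilon}^{\mu_1}\dd t/V(t)$. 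The first piece telescopes to $\le s$ as you say, but the second piece is $s\,C_\epsilon\cdot O(\sqrt V)$, and $C_\epsilon$ admits no bound in terms of the paper's assumptions: the model only assumes $0<V(\mu)\le V$, with no lower bound on the variance function, so $\int_{\mu_1-\epsilon}^{\mu_1}\dd t/V(t)$ can be arbitrarily large relative to $V$ and $\epsilon$. Attempts to absorb it (e.g., $C_\epsilon\alpha_s\le\kl(\mu_1-\epsilon-\alpha_s,\mu_1)\le 4\log(T/s)/s$) reintroduce exactly the $\log(T/s)$ you are trying to eliminate, and bounding $C_\epsilon$ by continuity of $V$ near $\mu_1$ makes the constant instance-dependent, which is not what the lemma (stated with explicit dependence on $V$ only) asserts. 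As a minor point, your final arithmetic is also off: $\sum_s s\,e^{-s\epsilon^2/(2V)}=O(V^2/\epsilon^4)$, not $V^2/\epsilon^6$, though that error is in your favor.

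The paper avoids $1/V(\cdot)$ entirely by a different device. For each $s$ it partitions the band $[\mu_1-\epsilon-\alpha_s,\mu_1-\epsilon]$ into cells $[x_i,x_{i+1})$ defined by level sets of $\kl(\cdot,\mu_1)$, namely $\kl(x_i,\mu_1)\approx 4\log(T/e^i)/s$ with $n=\lceil\log T\rceil$ cells. On each cell the integrand $e^{sb_s\kl(x,\mu_1-\epsilon)}$ is bounded by $e^{sb_s\kl(x_i,\mu_1)}$, while the mass $\int_{x_i}^{x_{i+1}}p(x)\dd x$ is bounded by $e^{-s\kl(x_{i+1},\mu_1)}$ via the maximal inequality (Lemma \ref{lem:maximal-inequality}); the product telescopes into the geometric sum $\sum_i (T/e^i)^{b_s}(e^{i+1}/T)=O(1/(1-b_s))=O(s)$ (see \eqref{eq:exp-asym-1}). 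This is where $b_s=(s-1)/s$ is cashed in, with no reference to $V(\cdot)$ from below, and it does not even route through Lemma \ref{lem:exptofunction}. The paper then splits the sum over $s$ at $\lceil 4V(\log\log T)^3\rceil$, using the $O(s)$ bound below the threshold (giving the $V^2(\log\log T)^6$ term) and the finite-time bound \eqref{eq:rots-minimax-under} with the decay $e^{-s\epsilon^2/(2V)}\le 1/\log^2 T$ above it (giving $O(M/\log T)=O(V(\log\log T)^2)$). To repair your argument you would either need to adopt this level-set/maximal-inequality step in place of your change-of-variables estimate, or add an explicit lower-bound assumption on $V(\cdot)$ that the paper does not make.
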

Let $\epsilon=1/\log \log T$.  Combining Lemma \ref{lem:rots-asym-A} and Lemma \ref{lem:underest-decomp} together, we have
\begin{align*}
    A= O\big(V^2(\log \log T)^6+V(\log \log T)^2+1\big).
\end{align*}

\paragraph{Bounding Term $B$:}
 Let $\rho_i=\epsilon=1/\log \log T$. Applying Lemma \ref{lem:mini-overest-1}, we have 
\begin{align}
    B&=\EE\left[\sum_{t=K+1}^{T} \ind\{A_t=i, E_{i,\epsilon}^c(t) \} \right]\notag \\
    &=O(1+V(\log \log T)^2)+\frac{N\log(T/(V(\log \log T)^2))}{\kl(\mu_i+1/\log \log T,\mu_1-1/\log \log T)}.
\end{align}

\paragraph{Putting It Together:}
 Substituting the  bound of term $A$ and $B$ into \eqref{eq:decomp_num_pull_i}, we have
\begin{align*}
    \EE[T_{i}(T)]=O(1+V^2(\log \log T)^6+V(\log \log T)^2)+\frac{N\log(T/(V(\log \log T)^2))}{\kl(\mu_i+1/\log \log T,\mu_1-1/\log \log T)}.
\end{align*}
Note that for  $T\rightarrow +\infty$, $N \rightarrow 1$.
Therefore,
\begin{align*}
    \lim_{T\rightarrow +\infty} \frac{ \EE[T_{i}(T)]}{\log T}= \frac{1}{\kl(\mu_i,\mu_1)}.
\end{align*}
This completes the proof of asymptotic regret.

\subsection{Proof of Lemma \ref{lem:rots-asym-A}}
The proof of this part shares many elements with the proof of Lemma \ref{lem:minimax-undetestimation-rots}. The difference starts at bounding term $A_3$. \\
\textbf{Bounding term $A_3$.}
We need to bound the term $\int_{\mu_1-\epsilon-\alpha_s}^{\mu_1-\epsilon} p(x) e^{s\kl(x,\mu_1-\epsilon)}\dd x$.
We divide the interval $[\mu_1-\epsilon-\alpha_s,\mu_1-\epsilon]$ into $n$ sub-intervals $[x_0,x_1),[x_1,x_2),\cdots, [x_{n-1},x_n]$, such that $ x_0\leq x_1\leq \cdots \leq x_n$. For $i\in[n-1]$, we let 
\begin{align}
\label{eq:asym_xi}
 x_i=\sup_{x: x\leq \mu_1-\epsilon} 4\log(T/ e^{i+1})/s <\kl(x,\mu_1)\leq 4\log(T/ e^{i})/s.
\end{align}
Let $n=\lceil \log T \rceil$ and $x_n=\mu_1$. Then, from definition of $\alpha_s$, $\kl(x_0,\mu_1)\geq \kl(\mu_1-\epsilon-\alpha_s,\mu_1)$. Thus, $x_0\leq \mu_1-\epsilon-\alpha_s$. Now, continue on \eqref{eq:rots-minimax-under-0}, we have 
 \begin{align}
 \label{eq:exp-asym-1}
    \int_{\mu_1-\epsilon-\alpha_s}^{\mu_1-\epsilon} p(x) e^{sb_s\cdot \kl(x,\mu_1-\epsilon)} \dd x&\leq \sum_{i=0}^{n}\int_{x_i}^{x_{i+1}}p(x)e^{sb_s\kl(x,\mu_1-\epsilon)} \dd x  \notag\\
    &\leq \sum_{i=0}^{n} e^{sb_s\kl(x_i,\mu_1)} \int_{x_i}^{x_{i+1}}p(x) \dd x  \notag \\
    & \leq \sum_{i=0}^{n} e^{sb_s\kl(x_i,\mu_1)} e^{-s\cdot\kl(x_{i+1},\mu_1)}  \notag\\
    &\leq \sum_{i=0}^{n} \bigg(\frac{T}{e^i}\bigg)^{b_s} \bigg(\frac{e^{i+1}}{T}\bigg) \notag\\ 
    & =O\bigg(\int_{0}^{\ln T} \bigg(\frac{T}{e^{x}}\bigg)^{b_s}\cdot \frac{e^{x+1}}{T}\dd x+e \bigg)\notag\\
    &=O\bigg(\frac{1}{1-b_s}\bigg)\notag\\
    &=O(s),
 \end{align}
where the first inequality is due to $x_0\leq \mu_1-\epsilon-\alpha_s$ and $x_n=\mu_1\geq \mu_1-\epsilon$, the fourth inequality is due to the definition of $x_i$, and the first equality is due to $\sum_{x=a}^b f(x)\leq \int_{a}^{b} f(x) \dd x +\max_{x\in [a,b]} f(x)$ for monotone function $f$. Now, we bound term $A_3$ as follows.
\begin{align}
\label{eq:rots:asym-A_3}
    A_3&\leq \sum_{s=1}^{M}\int_{\mu_1-\epsilon-\alpha_s}^{\mu_1-\epsilon} p(x) e^{sb_s\cdot \kl(x,\mu_1-\epsilon)} \dd x \notag \\
    & \leq \sum_{s=1}^{\lceil 4V(\log \log  T)^3\rceil }\int_{\mu_1-\epsilon-\alpha_s}^{\mu_1-\epsilon} p(x) e^{sb_s\cdot \kl(x,\mu_1-\epsilon)} \dd x \notag \\
    &\qquad +\sum^{M}_{s=\lceil 4V(\log \log T)^3\rceil }\int_{\mu_1-\epsilon-\alpha_s}^{\mu_1-\epsilon} p(x) e^{sb_s\cdot \kl(x,\mu_1-\epsilon)} \dd x \notag \\
    &\leq \underbrace{O\left(\sum_{s=1}^{\lceil 4V(\log \log T)^3\rceil } s  \right)}_{I_1}+\underbrace{\sum^{M}_{s=\lceil 4V(\log \log T)^3\rceil }e^{-s\epsilon^2/(2V)}(1+4\log T)}_{I_2},
\end{align}
where the first inequality is from \eqref{eq:rots-minimax-under-0} and the last inequality is from \eqref{eq:exp-asym-1} and \eqref{eq:rots-minimax-under}. For term $I_1$, we have $I_1=O(V^2(\log \log T)^{6}+1)$. Let $\epsilon=1/\log \log T$, then $M\leq O(V\log T\cdot (\log \log T)^2)$. For $s\geq 4V(\log \log T)^3$, we have 
$e^{-s\epsilon^2/(2V)}=1/\log^2 T$. Thus, $I_2=O(M/\log T)=O(V (\log\log T)^2)$.  Therefore, 
\begin{align}
\label{eq:rots:asym-A_3-1}
    A_3= O(V^2(\log \log T)^6+V(\log \log T)^2+1).
\end{align}
From \eqref{bounding-a_1} and \eqref{boundinga_2}, we have
\begin{align*}
    A_1+A_2=O(V(\log \log T)^2).
\end{align*}
Substituting the bound of $A_1$, $A_2$ and $A_3$ to  \eqref{eq:finite-underest-0}, we have
$$\sum_{s=1}^{M}  \EE_{\hmu_{1s}} \Bigg[\bigg(\frac{1}{G_{1s}(\epsilon)} -1\bigg) \cdot \ind\{\hmu_{1s}\in L_s \} \Bigg] \leq A_1+A_2+A_3=O(V^2(\log \log T)^6+V(\log \log T)^2+1).$$ This completes the proof.

\section{Proof of Theorem \ref{thm:Gaussian} (Gaussian-TS)}\label{sec:proof_Gaussian_TS}
The proof of Theorem \ref{thm:Gaussian} is similar to that of Theorem \ref{thm:regret_rots}. Thus we reuse the notation in the proofs of Theorem \ref{thm:regret_rots} presented in Sections \ref{sec:proof_finite} and \ref{sec:proof_asym}. However, the sampling distribution $\cP$ in Theorem \ref{thm:Gaussian} is chosen as a Gaussian distribution, and therefore, the concentration and anti-concentration inequalities for Gaussian-TS are slightly different from those used in previous sections. This further affects the results of the supporting lemmas whose proofs depend on the concentration bound of $\cP$. In this section, we will prove the regret bounds of Gaussian-TS by showing the existence of  counterparts of these lemmas for Gaussian-TS.

\subsection{Proof of the Finite-Time Regret Bound}
\label{sec:Gau-finite}
From Lemma \ref{lem:maximal-inequality}, the Gaussian posterior $\cN(\mu,V/n)$ satisfies $\PP(\theta\leq \mu-x)\leq e^{-nx^2/(2V)}$. Hence, \ref{lem:underest-decomp} also holds for Gaussian-TS. The proof of Lemma \ref{lem:minimax-undetestimation-rots} needs to call \eqref{eq:perpty1} and \eqref{eq:perpty2}. However, the tail bound for Gaussian distribution has a different form. We need to replace Lemma \ref{lem:minimax-undetestimation-rots} with the following variant.
\begin{lemma}
\label{lem:Gau-finite}
Let $M$, $G_{1s}(\epsilon)$, and $L_s$ be the same as defined in Lemma \ref{lem:underest-decomp}. Then,
\begin{align*}
\sum_{s=1}^{M}\EE_{\hmu_{1s}} \Bigg[\bigg(\frac{1}{G_{1s}(\epsilon)} -1\bigg) \cdot \ind\{\hmu_{1s}\in L_s \} \Bigg]=O\bigg(\frac{V\log(T\epsilon^2/V)}{\epsilon^2} \bigg).
\end{align*}
\end{lemma}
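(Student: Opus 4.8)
The plan is to follow the three–region decomposition of $\sum_{s=1}^M\EE_{\hmu_{1s}}[(1/G_{1s}(\epsilon)-1)\ind\{\hmu_{1s}\in L_s\}]$ exactly as in \eqref{eq:finite-underest-0}, splitting the integral over $\hmu_{1s}=x$ into the term $A_1$ (for $x\ge\mu_1-\epsilon/2$), the term $A_2$ (for $x\in[\mu_1-\epsilon,\mu_1-\epsilon/2]$), and the term $A_3$ (for $x\in[\mu_1-\epsilon-\alpha_s,\mu_1-\epsilon]$). The only places the argument really depends on the sampling law are the concentration/anti-concentration tails \eqref{eq:perpty1}–\eqref{eq:perpty2}, so I would re-derive those estimates with the Gaussian posterior $\cN(x,V/s)$ in place of the general $\cP(\hmu_{1s},s)$, while keeping the $s$-summation steps \eqref{ddl-1}–\eqref{ddl-0} verbatim. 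Throughout, let $\Phi$ denote the standard normal CDF and $Z\sim\cN(0,1)$.

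For $A_1$ and $A_2$ the estimates transfer almost verbatim. When $x\ge\mu_1-\epsilon$ the posterior mean is at or above the threshold, so $\PP(\theta_{1s}\ge\mu_1-\epsilon\mid\hmu_{1s}=x)\ge 1/2$; and for $x\ge\mu_1-\epsilon/2$ the Gaussian lower tail $\Phi(-t)\le\tfrac12 e^{-t^2/2}$ gives $\PP(\theta_{1s}<\mu_1-\epsilon\mid\hmu_{1s}=x)\le\tfrac12 e^{-s\epsilon^2/(8V)}$, the direct analogue of the bound used in \eqref{bounding-a_1}. Together with the maximal inequality of Lemma \ref{lem:maximal-inequality} (valid for Gaussian rewards), which controls $\PP(\hmu_{1s}\le\mu_1-\epsilon/2)$ in \eqref{boundinga_2}, the same splitting of $\sum_{s=1}^M$ and geometric summation used for \eqref{bounding-a_1} and \eqref{boundinga_2} give $A_1+A_2=O(V/\epsilon^2)$.

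The substantive step is $A_3$. The simplification relative to Lemma \ref{lem:minimax-undetestimation-rots} is that $\hmu_{1s}$ is now exactly Gaussian, so its density is $p(x)=\sqrt{s/(2\pi V)}\,e^{-s\kl(x,\mu_1)}$ in closed form and I do not need the surrogate-density argument of Lemma \ref{lem:exptofunction}. Writing $u=(\mu_1-\epsilon-x)\sqrt{s/V}\ge 0$ on this range, I would lower-bound the denominator by the Gaussian anti-concentration inequality $1-\Phi(u)\ge\tfrac{1}{\sqrt{2\pi}}\tfrac{u}{1+u^2}e^{-u^2/2}$, so that $1/\PP(Z\ge u)\le\sqrt{2\pi}\,\tfrac{1+u^2}{u}e^{u^2/2}$, and then note $e^{u^2/2}=e^{s\kl(x,\mu_1-\epsilon)}$ for the Gaussian kl. Multiplying by $p(x)$ and invoking the three-point bound \eqref{eq:kl-underest}, the two exponentials collapse to $e^{-s\kl(x,\mu_1)+s\kl(x,\mu_1-\epsilon)}\le e^{-s\epsilon^2/(2V)}$, leaving only the polynomial factor $\tfrac{1+u^2}{u}$ inside the integral.

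The main obstacle is precisely this polynomial factor: the anti-concentration bound degrades as $u\to 0$ (i.e.\ as $x\uparrow\mu_1-\epsilon$), where $\tfrac{1+u^2}{u}$ diverges even though $\PP(Z\ge u)\to 1/2$ stays bounded below. I would therefore split the $x$-range at $u=1$. On $\{u\le 1\}$ I discard the anti-concentration bound and simply use $\PP(Z\ge u)\ge\PP(Z\ge 1)$, a universal constant; since $\kl(x,\mu_1)\ge\kl(\mu_1-\epsilon,\mu_1)=\epsilon^2/(2V)$ there and this range has $x$-width $\sqrt{V/s}$, it contributes $O(e^{-s\epsilon^2/(2V)})$ per $s$, summing to $O(V/\epsilon^2)$. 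On $\{u>1\}$ I use $\tfrac{1+u^2}{u}\le 2u$, change variables $\dd x=\sqrt{V/s}\,\dd u$, and bound $\int_1^{u_{\max}}u\,\dd u\le u_{\max}^2/2$ where $u_{\max}=\alpha_s\sqrt{s/V}\le\sqrt{8\log(T/s)}$, using $\kl(\mu_1-\epsilon-\alpha_s,\mu_1)\le 4\log(T/s)/s$ from the definition of $\alpha_s$ in \eqref{eq:alphas}. This reproduces a per-$s$ bound of the form $e^{-s\epsilon^2/(2V)}(1+\log(T/s))$ identical to \eqref{eq:rots-minimax-under}, after which the summation estimates \eqref{ddl-1} and \eqref{ddl-0} give $A_3=O\!\big(V\log(T\epsilon^2/V)/\epsilon^2\big)$. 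Adding the three contributions yields the stated bound.
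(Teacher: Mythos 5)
Your proposal is correct and follows essentially the same route as the paper's own proof of Lemma \ref{lem:Gau-finite}: both reduce everything to the Gaussian anti-concentration bound of Lemma \ref{lem:gaussian-tail}, split the under-estimation region according to whether the normalized deviation $z$ (your $u$) exceeds $1$, and use the cancellation $e^{-s\kl(x,\mu_1)+s\kl(x,\mu_1-\epsilon)}\le e^{-s\epsilon^2/(2V)}$, which the paper performs by completing the square. The remaining differences are bookkeeping only --- the paper evaluates the resulting Gaussian integrals $I_1,I_2$ directly and sums $\sum_s\alpha_s$ by the peeling argument in \eqref{eq:mini-peel-1}, while you integrate the polynomial factor in $u$ and reuse \eqref{ddl-1}--\eqref{ddl-0} --- and both yield the stated $O(V\log(T\epsilon^2/V)/\epsilon^2)$ bound.
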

\noindent In Section \ref{sec:proof_finite}, the proof of Lemma \ref{lem:mini-overest-1} only uses the following property of the sampling distribution: let $X_{is}$ be a sample from $\cP(\hmu_{is},s)$ and if $\hmu_{is}\leq \mu_1-\epsilon$, then
\begin{align*}
    \PP(X_{is}\geq \mu_1-\epsilon)\leq \exp(-sb_s\cdot \kl(\hmu_{is},\mu_1-\epsilon)),
\end{align*}
where the $\kl(\cdot)$ function is defined for Gaussian distribution with variance $V$.
For Gaussian  distribution, let $X_{is}$ be a sample from $\cN(\hmu_{is},V/s)$.  Then from Lemma \ref{lem:maximal-inequality}
\begin{align*}
    \PP(X_{is}\geq \mu_1-\epsilon)\leq \exp(-s\cdot \kl(\hmu_{is},\mu_1-\epsilon)) \leq \exp(-sb_s\cdot \kl(\hmu_{is},\mu_1-\epsilon)),
\end{align*}
where the last inequality is due to $b_s\leq 1$. The other parts of the proof of the finite-time bound are the same as that of Theorem \ref{thm:regret_rots} and thus are omitted.

\subsection{Proof of the Asymptotic Regret Bound}
The proof of Lemma \ref{lem:rots-asym-A} needs to call \eqref{eq:perpty1} and \eqref{eq:perpty2}. However, the tail bound for Gaussian distribution has a different form. We need to replace Lemma \ref{lem:minimax-undetestimation-rots} with the following variant. 
\begin{lemma}
\label{lem:Gau-asym}
Let $M$, $G_{1s}(\epsilon)$, and $L_s$ be the same as defined in Lemma \ref{lem:underest-decomp} and  let $\epsilon=1/\log \log T$. Then, 
\begin{align*}
\lim_{T\rightarrow \infty}\sum_{s=1}^{M} \EE_{\hmu_{1s}} \Bigg[\bigg(\frac{1}{G_{1s}(\epsilon)} -1\bigg) \cdot \ind\{\hmu_{1s}\in L_s \} \Bigg]/\log T =0.
\end{align*}
\end{lemma}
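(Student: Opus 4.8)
The plan is to mirror the proof of Lemma~\ref{lem:rots-asym-A} down to the decomposition of the left-hand side into the three pieces $A_1,A_2,A_3$ of \eqref{eq:finite-underest-0}, and then to re-derive only the estimates that invoke the tail behaviour of the sampling distribution, since that is the single place where the Gaussian posterior differs from $\cP$. Terms $A_1$ and $A_2$ use only (i) the sub-Gaussian concentration of $\hmu_{1s}$ (Lemma~\ref{lem:maximal-inequality}) and (ii) the elementary facts that $\PP(\theta_{1s}\ge\mu_1-\epsilon\mid\hmu_{1s}=x)\ge 1/2$ for $x\ge\mu_1-\epsilon$ (the median of $\cN(x,V/s)$ is $x$) and $\PP(\theta_{1s}\ge\mu_1-\epsilon\mid\hmu_{1s}=x)\ge 1-e^{-s\epsilon^2/(8V)}$ for $x\ge\mu_1-\epsilon/2$; both hold verbatim for the Gaussian posterior. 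I would therefore keep the bounds \eqref{bounding-a_1} and \eqref{boundinga_2} unchanged and conclude $A_1+A_2=O(V(\log\log T)^2)$ as before. All the work is thus in $A_3$.

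For $A_3$ I would exploit that, for Gaussian rewards, the density of $\hmu_{1s}$ is known \emph{exactly}: $\hmu_{1s}\sim\cN(\mu_1,V/s)$, so $p(x)=\sqrt{s/(2\pi V)}\,e^{-s\kl(x,\mu_1)}$ with $\kl(a,b)=(a-b)^2/(2V)$. The conditional anti-concentration is then the Gaussian tail, for which I would use the Mills-ratio lower bound \citep[Formula 7.1.13]{abramowitz1964handbook}: writing $t=(\mu_1-\epsilon-x)\sqrt{s/V}$, so that $t^2/2=s\kl(x,\mu_1-\epsilon)$, one has $\PP(\theta_{1s}\ge\mu_1-\epsilon\mid\hmu_{1s}=x)\ge \frac{1}{\sqrt{2\pi}}\frac{t}{t^2+1}e^{-t^2/2}$, hence $1/\PP(\cdots)\le \sqrt{2\pi}\,\frac{t^2+1}{t}\,e^{s\kl(x,\mu_1-\epsilon)}$. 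The only new feature relative to $\cP$ is the polynomial prefactor $(t^2+1)/t$, which blows up spuriously as $t\to 0$, so I would split the interval $(\mu_1-\epsilon-\alpha_s,\mu_1-\epsilon)$ at the point where $t=1$.

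On the near part $t\le 1$ the tail probability is at least a universal constant, so $1/\PP=O(1)$ and $e^{s\kl(x,\mu_1-\epsilon)}=e^{t^2/2}\le e^{1/2}$; the contribution is $O(1)\cdot\PP(\hmu_{1s}\le\mu_1-\epsilon)=O(e^{-s\epsilon^2/(2V)})$ by Lemma~\ref{lem:maximal-inequality}. On the far part $t>1$ I would use $(t^2+1)/t\le 2t\le 2\sqrt{8\log(T/s)}$, where the last bound comes from $t\le(\epsilon+\alpha_s)\sqrt{s/V}$ and the defining relation $\kl(\mu_1-\epsilon-\alpha_s,\mu_1)\le 4\log(T/s)/s$ in \eqref{eq:alphas}. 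Plugging in the exact Gaussian $p$, the surviving integral collapses, because the Gaussian three-point identity gives $\kl(x,\mu_1)-\kl(x,\mu_1-\epsilon)=\epsilon(\mu_1-x)/V-\epsilon^2/(2V)$, so $p(x)\,e^{s\kl(x,\mu_1-\epsilon)}\propto \sqrt{s}\,e^{s\epsilon^2/(2V)}\,e^{-s\epsilon(\mu_1-x)/V}$ is a pure exponential in $x$; integrating over $(\mu_1-\epsilon-\alpha_s,\mu_1-\epsilon)$ yields $O\big(\sqrt{V}/(\sqrt{s}\,\epsilon)\cdot e^{-s\epsilon^2/(2V)}\big)$. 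Multiplying by the $O(\sqrt{\log(T/s)})$ prefactor and summing over $1\le s\le M$, using $\sum_s s^{-1/2}e^{-s\epsilon^2/(2V)}=O(\sqrt{V}/\epsilon)$ and $\epsilon=1/\log\log T$, gives $A_3=O\big(V(\log\log T)^2\sqrt{\log T}\big)$.

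Putting the three pieces together yields a total of $O\big(V(\log\log T)^2\sqrt{\log T}\big)$, which divided by $\log T$ tends to $0$, proving the claim. The main obstacle is conceptual rather than computational: the Gaussian posterior's anti-concentration is genuinely weaker than the bespoke tail \eqref{eq:perpty1} of $\cP$ — it loses an extra $\sqrt{\log T}$ factor and carries the $1/t$ artifact at the boundary — so the clean $O(s)$ per-term telescoping of \eqref{eq:exp-asym-1} is no longer available. The two features that rescue the argument are that, for Gaussian rewards, $p(x)$ is exactly Gaussian (so the key integral is evaluated in closed form through the Gaussian kl three-point identity, bypassing Lemma~\ref{lem:exptofunction}), and that the target is merely a limit equal to zero, for which the extra $\sqrt{\log T}=o\big(\log T/(\log\log T)^2\big)$ loss is immaterial.
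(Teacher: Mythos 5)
Your proposal is correct and follows essentially the same route as the paper's proof: both use the exact Gaussian law of $\hmu_{1s}$, the Mills-ratio lower bound with a case split at $t=1$ (the paper's $z\le 1$ versus $z>1$ in \eqref{eq:minimax-G1s-1}--\eqref{eq:minimax-G1s-2}), the collapse of $p(x)e^{s\kl(x,\mu_1-\epsilon)}$ into a pure exponential via the Gaussian three-point identity (exactly the computation in \eqref{eq:boundI1}--\eqref{eq:boundi2}), and the observation that the residual $\sqrt{\log T}\cdot\mathrm{poly}(\log\log T)$ loss is $o(\log T)$. The only difference is cosmetic bookkeeping of the polynomial prefactor (you bound $t\le\sqrt{8\log(T/s)}$ upfront, the paper integrates $(-\epsilon-x)e^{s\epsilon x/V}$ directly), yielding the same conclusion.
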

The other parts of asymptotic regret bound are the same as that in Theorem \ref{thm:regret_rots} and are omitted.

\subsection{Proof of Supporting Lemmas}

\subsubsection{Proof of Lemma \ref{lem:Gau-finite}} 
Let $Z$ be a sample from $\cN(\hmu_{1s}, V/s)$ and $\hmu_{1s}=\mu_1+x$.
For $x\leq -\epsilon$, applying  Lemma~\ref{lem:gaussian-tail} with
$z=-\sqrt{s/V}(\epsilon+x)>0$  yields:
for $0<z\leq 1$,
\begin{align}
\label{eq:minimax-G1s-1}
   {G_{1s}(\epsilon)}= \PP(Z>\mu_1-\epsilon) 
   &\geq \frac{1}{2\sqrt{2\pi}}\exp\bigg(-\frac{ s(\epsilon+x)^2}{2V} \bigg).
\end{align}
Besides, for $z>1$,
\begin{align}
\label{eq:minimax-G1s-2}
   {G_{1s}(\epsilon)}&\geq \frac{1}{\sqrt{2\pi}}\frac{z}{z^2+1} e^{-\frac{z^2}{2}}\geq \frac{1}{2\sqrt{2\pi}\cdot z}e^{-\frac{z^2}{2}}= \frac{\sqrt{V}}{-2\sqrt{2\pi}\sqrt{ s}(\epsilon+x)}\exp\bigg(-\frac{s(\epsilon+x)^2}{2V} \bigg).
\end{align}
Since $\hmu_{1s}\sim \cN(\mu_1,V/s)$, $x\sim \cN(0,V/s)$. Let $p(x)$ be the PDF of $\cN(0,V/s)$. Note that $G_{1s}(\epsilon)$ is a random variable with respect to $\hmu_{1s}$ and $\hmu_{1s}=\mu_1+x$.   We have
\begin{align}
\label{eq:finite-underest-Gau0}
   \EE_{\hmu_{1s}} \Bigg[\bigg(\frac{1}{G_{1s}(\epsilon)} -1\bigg) \cdot \ind\{\hmu_{1s}\in L_s \} \Bigg]&\leq  \int_{-\epsilon}^{+\infty} \frac{p(x)}{G_{1s}(\epsilon)} \dd x -1+\int_{-\epsilon-\alpha_s}^{-\epsilon} \frac{ p(x)} {G_{1s}(\epsilon)}\dd x\notag \\
    &\leq  1+\int^{-\epsilon}_{-\epsilon-\alpha_s} \frac{p(x)}{G_{1s}(\epsilon)} \dd x \notag \\
    & \leq 1+ \underbrace{\int_{-\epsilon-\alpha_s}^{-\epsilon} p(x)\left(2\sqrt{2\pi} \cdot\exp\bigg(\frac{s(\epsilon+x)^2}{2V}\bigg)\right) \dd x}_{I_1} \notag \\
    & \quad+\underbrace{\int_{-\epsilon-\alpha_s}^{-\epsilon} p(x)\left(2\sqrt{2\pi} \sqrt{s/V}(-\epsilon-x)\cdot\exp\bigg(\frac{s(\epsilon+x)^2}{2V}\bigg)\right) \dd x}_{I_2}.
 \end{align}
The second inequality is due to the fact that  for $\hmu_{1s}\geq \mu_1-\epsilon$, $G_{1s}(\epsilon)=\PP(Z\geq \mu_1-\epsilon)\geq 1/2$.  The last inequality is due to \eqref{eq:minimax-G1s-1} and \eqref{eq:minimax-G1s-2}.  For term  $I_1$, we have
\begin{align}
\label{eq:boundI1}
  I_1 &=  \int_{-\alpha_s-\epsilon}^{-\epsilon} \left(2\sqrt{\frac{s}{V}} \exp\bigg(\frac{-sx
^2}{2V}\bigg)\exp\bigg(\frac{s(\epsilon+x)^2}{2V}\bigg) \right) \dd x \notag \\
&\leq  2\sqrt{\frac{s}{V}}\exp\bigg(\frac{s\epsilon^2}{2V} \bigg)\int_{-\infty}^{-\epsilon} \exp(s\epsilon x/V) \dd x \notag\\
&= \frac{2\sqrt{V}e^{-s\epsilon^2/(2V)}}{\sqrt{s}\epsilon}.
\end{align}
For term $I_2$, we have
 \begin{align}
 \label{eq:boundi2}
I_2 \leq & \int_{-\alpha_s-\epsilon}^{-\epsilon} \left(2s/V (-\epsilon-x)\exp\bigg(\frac{-sx
^2}{2V}\bigg)\exp\bigg(\frac{s(\epsilon+x)^2}{2V}\bigg) \right) \dd x \notag \\
\leq & 2s/V\exp\bigg(\frac{s\epsilon^2}{2V}\bigg)\int_{-\alpha_s-\epsilon}^{-\epsilon} (-\epsilon-x)\exp(s\epsilon x/V)\dd x\notag\\
 \leq & 2s/V\exp\bigg(\frac{-s\epsilon^2}{2V}\bigg)\int_{-\alpha_s-2\epsilon}^{-2\epsilon} -x\exp(s\epsilon x/V)\dd x  \notag \\
 \leq & 2e\cdot \exp\bigg(\frac{-s\epsilon^2}{2V} \bigg)\alpha_s/\epsilon,
\end{align}
where the last inequality is due to $h(x)=-x\exp(s\epsilon x/V)$ on $x<0$ achieve is maximum at $x=-V/(s\epsilon)$.
We further obtain that   
\begin{align}
\label{eq:mini-s1-M-1}
\sum_{s=1}^{M} \mathbb{E}\bigg[\bigg(\frac{1}{G_{1s}(\epsilon)}-1\bigg) \cdot  \ind\{\hmu_{1s}\in L_s\}\bigg] &=  O\bigg( \sum_{s=1}^{M}\alpha_s/\epsilon + \sum_{s=1}^{M}\frac{\sqrt{V}}{\sqrt{s}\epsilon}+M\bigg) \notag \\
& = O\bigg( \sum_{s=1}^{M}\alpha_s/\epsilon + \int_{s=1}^{M}\frac{\sqrt{V}}{\sqrt{s}\epsilon} \dd s+M\bigg) \notag \\
&= O\bigg( \sum_{s=1}^{M}\alpha_s/\epsilon + \frac{\sqrt{VM}}{\epsilon}+M\bigg)\notag \\
& =O\bigg( \sum_{s=1}^{M}\alpha_s/\epsilon +\frac{V\log(T\epsilon^2/V)}{\epsilon^2}\bigg).
\end{align}
Note that 
\begin{align}
\label{eq:Gau-alpha_s}
    \kl(\mu_1-\epsilon-4\sqrt{V\log(T/s)/s},\mu_1)\geq \kl(\mu_1-\epsilon-4\sqrt{V\log(T/s)/s},\mu_1-\epsilon)=8V\log (T/s)/s,
\end{align}
where the equality is due to \eqref{eq:pinsk}. Thus, from the definition of $\alpha_s$ in \eqref{eq:alphas}, we have $\alpha_s\leq 4\sqrt{V\log(T/s)/s}$. For term $\sum_{s=1}^{M} \alpha_s$, we have
\begin{align}
\label{eq:mini-peel-1}
    \sum_{s=1}^{M} \alpha_s/(4\sqrt{V})& \leq  \sum_{s=1}^{M}\frac{\sqrt{\log (T/s)}}{\sqrt{s}} \notag \\
    &\leq \sum_{j=0}^{\lceil\log M-1\rceil-1}\sum_{s=\lceil e^{j} \rceil}^{\lceil e^{j+1} \rceil} \frac{\sqrt{\log(T/ e^{j})}}{\sqrt{s}} \notag \\
    & \leq  \sum_{j=0}^{\lceil \log M -1 \rceil} \sqrt{\log(T/ e^j)} \int_{e^{j}}^{e^{j+1}} \frac{1}{\sqrt{s}} \dd s+\sum_{j=0}^{\lceil\log M-1\rceil}\frac{\sqrt{\log(T/ e^j)}}{e^{j/2}}  \notag \\
    & \leq  2\sum_{j=0}^{\lceil \log M-1 \rceil} e^{(j+1)/2}\cdot {\log(T/ e^j)} \notag \\
    & \leq  2\sqrt{e}\int_{0}^{\log M} (\log T-x) e^{x/2}  \dd x+2\sqrt{eM}\log(T/M) \notag \\
    & =2\sqrt{e}\bigg( 2\log(e^2T)e^{x/2}-2xe^{x/2} \bigg|_{0}^{\log M} \bigg)+2\sqrt{eM}\log(T/M) \notag \\
    & =O\bigg(\frac{\sqrt{V}\log(T\epsilon^2/V)}{\epsilon}\bigg),
\end{align}
where the third and sixth inequality is due to $\sum_{x=a}^b f(x)\leq \int_{a}^{b} f(x) \dd x +\max_{x\in [a,b]} f(x)$ for monotone function $f$.  Substituting \eqref{eq:mini-peel-1} to \eqref{eq:mini-s1-M-1}, we have
\begin{align*}
 \sum_{s=1}^{M} \mathbb{E}\bigg[\bigg(\frac{1}{G_{1s}(\epsilon)}-1\bigg) \cdot & \ind\{\hmu_{1s}\in L_s\}\bigg]= O\bigg(\frac{V\log(T\epsilon^2/V)}{\epsilon^2}\bigg).
\end{align*}
This completes the proof.

\subsubsection{Proof of Lemma \ref{lem:Gau-asym}} 
The proof of this part is similar to the proof of Lemma \ref{lem:Gau-finite}. We reuse the notation defined in the Lemma \ref{lem:Gau-finite}. Recall $Z$ is a sample from $\cN(\hmu_{1s}, V/s)$.  For $\hmu_{1s}=\mu_1-\epsilon/2$,  from \eqref{eq:kl-1}
\begin{align}
\label{eq:Gau-asym-1}
    \PP(Z\leq \mu_1-\epsilon) \leq \exp\big(-s\epsilon^2/(8V) \big).
\end{align}
We have 
\begin{align}
\label{eq:asym-underest-Gau0}
  & \EE_{\hmu_{1s}} \Bigg[\bigg(\frac{1}{G_{1s}(\epsilon)} -1\bigg) \cdot \ind\{\hmu_{1s}\in L_s \} \Bigg]\notag \\
  & \leq  \int_{-\epsilon/2}^{+\infty} \frac{p(x)}{G_{1s}(\epsilon)} \dd x+ \int^{-\epsilon/2}_{-\epsilon} \frac{p(x)}{G_{1s}(\epsilon)} \dd x -1+\int_{-\epsilon-\alpha_s}^{-\epsilon} \frac{ p(x)} {G_{1s}(\epsilon)}\dd x\notag \\
  &\leq  e^{-s\epsilon^2/(8V)}\int_{-\epsilon/2}^{+\infty} p(x) \dd x +2\int_{-\epsilon}^{-\epsilon/2} {p(x)} \dd x+ \int^{-\epsilon}_{-\epsilon-\alpha_s} {\frac{p(x)}{G_{1s}(\epsilon)}} \dd x \notag \\
  & \leq e^{-s\epsilon^2/(8V)}+ e^{-s\epsilon^2/(8V)}+ \underbrace{\int_{-\epsilon-\alpha_s}^{-\epsilon} p(x)\left(2\sqrt{2\pi} \cdot\exp\bigg(\frac{s(\epsilon+x)^2}{2V}\bigg)\right) \dd x}_{I_1} \notag \\
  & \qquad +\underbrace{\int_{-\epsilon-\alpha_s}^{-\epsilon} p(x)\left(2\sqrt{2\pi} \sqrt{s}(-\epsilon-x)\cdot\exp\bigg(\frac{s(\epsilon+x)^2}{2V}\bigg)\right) \dd x}_{I_2},
\end{align}
where the second inequality is due to \eqref{eq:Gau-asym-1} and the fact that for $x\geq -\epsilon$, $G_{1s}(\epsilon)\geq 1/2$,  the third inequality is due to $x\sim \cN(0, V/s)$ and from \eqref{eq:kl-1}, $\PP(x\leq -\epsilon/2)\leq \exp(-s\epsilon^2/(8V)).$
Further, we have $$\sum_{s=1}^{\infty} \exp(-s\epsilon^2/(8V))\leq \frac{1}{e^{\epsilon^2/(8V)}-1}\leq \frac{8V}{\epsilon^2}.$$ By applying \eqref{eq:boundI1} and \eqref{eq:boundi2} to bound term $I_1$ and $I_2$, we obtain
\begin{align*}
    \sum_{s=1}^{M} \EE_{\hmu_{1s}} \Bigg[\bigg(\frac{1}{G_{1s}(\epsilon)} -1\bigg) \cdot \ind\{\hmu_{1s}\in L_s \} \Bigg] &\leq  \frac{8V}{\epsilon^2}+O\bigg( \sum_{s=1}^{M} \frac{e^{-s\epsilon^2/(2V)}}{\sqrt{s\epsilon}}+\sum_{s=1}^{M}\frac{e^{-s\epsilon^2/(2V)} \alpha_s}{\epsilon}\bigg) \notag \\
    &= O\bigg(\frac{V}{\epsilon^2}+2\sqrt{\log T}/\epsilon\sum_{s=1}^{\infty}e^{-s\epsilon^2/(2V)} \bigg) \notag \\
    & =O\bigg(\frac{V}{\epsilon^2}+\frac{V\sqrt{\log T}}{\epsilon^3} \bigg),
\end{align*}
where the first equality is due to \eqref{eq:Gau-alpha_s}. Let $\epsilon=1/\log \log T$, we have 
\begin{align*}
\lim_{T\rightarrow \infty}\sum_{s=1}^{M} \EE_{\hmu_{1s}} \Bigg[\bigg(\frac{1}{G_{1s}(\epsilon)} -1\bigg) \cdot \ind\{\hmu_{1s}\in L_s \} \Bigg]/\log T =0,
\end{align*}
which competes the proof.

\section{Proof of Theorem \ref{thm:Bernoulli} (Bernoulli-TS)}
Similar to the proof strategy used in Section \ref{sec:proof_Gaussian_TS}, we will prove the regret bounds of Bernoulli-TS via providing a counterpart of the supporting lemma used in the proof of Theorem \ref{thm:Bernoulli}  that depends on the concentration bound of the sampling distribution $\cP$.

\subsection{Proof of the Finite-Time Regret Bound}
Due to the same reason shown in Section \ref{sec:Gau-finite}, we only need to replace Lemma \ref{lem:minimax-undetestimation-rots} with the following variant. The rest of the proof remains the same as that of Theorem \ref{thm:regret_rots}. 
\begin{lemma}
\label{lem:Ber-finite}
Let $M$, $G_{1s}(\epsilon)$, and $L_s$ be the same as defined in Lemma \ref{lem:underest-decomp}. Let  $\epsilon=\Delta/4$. It holds that
\begin{align*}
\sum_{s=1}^{M}\EE_{\hmu_{1s}} \Bigg[\bigg(\frac{1}{G_{1s}(\epsilon)}-1 \bigg) \cdot \ind\{\hmu_{1s}\in L_s \} \Bigg]=O\bigg(\frac{\log(T\epsilon^2)}{\epsilon^2} \bigg).
\end{align*}
\end{lemma}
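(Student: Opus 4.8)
The plan is to follow the three-region template of the proof of Lemma~\ref{lem:minimax-undetestimation-rots}, reusing verbatim the split of \eqref{eq:finite-underest-0} into $A_1$ (the mass of $\hmu_{1s}$ on $[\mu_1-\epsilon/2,R_{\max}]$), $A_2$ (on $[\mu_1-\epsilon,\mu_1-\epsilon/2]$), and $A_3$ (on $[\mu_1-\epsilon-\alpha_s,\mu_1-\epsilon]$), and only replacing the exact tail identities \eqref{eq:perpty1}--\eqref{eq:perpty2} of $\cP$ by the anti-concentration and concentration properties of the Beta posterior $\cB(S_{1s}+1,s-S_{1s}+1)$, where $S_{1s}$ is the number of successes in the first $s$ pulls of arm $1$. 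Since Bernoulli rewards satisfy $V(\cdot)\le 1/4$, every $V$-factor collapses to a constant and the target reduces to $O(\log(T\epsilon^2)/\epsilon^2)$. The law of $\hmu_{1s}$ is a scaled Binomial, so I treat its ``density'' $p(x)$ as a point mass, exactly as anticipated by the footnote to \eqref{eq:finite-underest-0}.

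For $A_1$ and $A_2$ only a crude anti-concentration estimate is needed. Whenever $\hmu_{1s}\ge\mu_1-\epsilon$ the threshold $\mu_1-\epsilon$ lies at or below the posterior mean, so a mean/median comparison for the Beta posterior gives $G_{1s}(\epsilon)=\PP(\theta_{1s}\ge\mu_1-\epsilon)\ge 1/2$ up to a negligible $O(1/s)$ shift. Then $A_1$ telescopes as in \eqref{bounding-a_1}: for $s\gtrsim 1/\epsilon^2$ the Beta anti-concentration forces $\PP(\theta_{1s}\ge\mu_1-\epsilon\mid\hmu_{1s}\ge\mu_1-\epsilon/2)$ close to $1$, leaving a geometric series summing to $O(1/\epsilon^2)$, while $A_2$ is dominated as in \eqref{boundinga_2} by $\PP(\hmu_{1s}\le\mu_1-\epsilon/2)\le e^{-s\epsilon^2/(8V)}$ from the maximal inequality (Lemma~\ref{lem:maximal-inequality}), again summing to $O(1/\epsilon^2)$.

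The term $A_3$ is the crux, since here $\hmu_{1s}<\mu_1-\epsilon$ and $G_{1s}(\epsilon)$ can be exponentially small. The key input is the Beta anti-concentration inequality \citep{jevrabek2004dual}, which I would use to establish $G_{1s}(\epsilon)\ge c_s\,e^{-s\,\kl(x,\mu_1-\epsilon)}$ for $\hmu_{1s}=x$, with $\kl$ the Bernoulli relative entropy and $c_s$ a prefactor at most polynomial in $s$. Substituting $1/G_{1s}(\epsilon)\le c_s^{-1}e^{s\,\kl(x,\mu_1-\epsilon)}$ reduces $A_3$ to controlling $\sum_s\int_{\mu_1-\epsilon-\alpha_s}^{\mu_1-\epsilon} p(x)\,e^{s\,\kl(x,\mu_1-\epsilon)}\dd x$, which is exactly the integral handled in \eqref{eq:rots-minimax-under-0}--\eqref{eq:rots-minimax-under}: Lemma~\ref{lem:exptofunction} dominates the unknown $p(x)$ by $q(x)=|(s\,\kl(x,\mu_1))'|e^{-s\,\kl(x,\mu_1)}$, and the three-point identity \eqref{eq:kl-underest} converts the exponent into $-s\,\kl(\mu_1-\epsilon,\mu_1)\le -s\epsilon^2/(2V)$ via Pinsker \eqref{eq:pinsk}, producing $\sum_s e^{-s\epsilon^2/(2V)}(1+4\log(T/s))$. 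The two elementary sums \eqref{ddl-1} and \eqref{ddl-0} then yield $A_3=O(\log(T\epsilon^2)/\epsilon^2)$, and combining $A_1+A_2+A_3$ finishes the proof.

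The main obstacle is precisely this third step: establishing the Beta-specific lower bound on $G_{1s}(\epsilon)$ with exactly the Bernoulli-$\kl$ exponent and a prefactor $c_s$ tame enough that $c_s^{-1}$ does not spoil the summation, together with adapting Lemma~\ref{lem:exptofunction} to the discrete (Binomial) law of $\hmu_{1s}$, where the integration-by-parts argument behind that lemma becomes an Abel summation against the Chernoff bound $\PP(\hmu_{1s}\le x)\le e^{-s\,\kl(x,\mu_1)}$. Everything else is a transcription of the exponential-family argument with $V=1/4$.
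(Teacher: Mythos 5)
Your high-level plan (the $A_1/A_2/A_3$ split with $V=1/4$) is reasonable, and $A_1$, $A_2$ can indeed be handled roughly as you sketch. But the step you yourself flag as ``the main obstacle'' is exactly where the proof lives, and the way you propose to resolve it does not work as stated. You want a Beta anti-concentration bound $G_{1s}(\epsilon)\ge c_s\,e^{-s\,\kl(x,\mu_1-\epsilon)}$ with $c_s^{-1}$ ``at most polynomial in $s$,'' and then to feed $c_s^{-1}e^{s\kl(x,\mu_1-\epsilon)}$ into the machinery of \eqref{eq:rots-minimax-under-0}--\eqref{eq:rots-minimax-under}. A genuinely polynomial prefactor is not tame enough: already $c_s^{-1}=\Theta(s)$ turns $\sum_s e^{-2s\epsilon^2}\bigl(1+4\log(T/s)\bigr)$ into a quantity of order $\log(T\epsilon^2)/\epsilon^4$, which exceeds the target $O(\log(T\epsilon^2)/\epsilon^2)$ and would destroy the downstream minimax bound. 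What is actually available (via \eqref{eq:ber-fact} and the Jer\'abek estimate) is $G_{1s}(\epsilon)=F^B_{s+1,y}(j)$ with $y=\mu_1-\epsilon$, whose lower bound carries the prefactor $\frac{y(s+1-j)}{y(s+1)-j}\binom{s+1}{j}y^j(1-y)^{s+1-j}$; the whole difficulty is to show that, after multiplying by the Binomial mass $f_{s,\mu_1}(j)$ and summing over the $O(\sqrt{s\log(T/s)})$ relevant values of $j$, these prefactors collapse to $O(1/(\epsilon(s+1)))$ per term. This is the content of the paper's estimate $(y(s+1)-j)R^{j-ys}\le \tfrac{2R}{R-1}+e/\ln 2$ with $R=\tfrac{\mu_1(1-y)}{y(1-\mu_1)}$, proved by a three-way case analysis on $R$, followed by $\tfrac{R}{R-1}=\tfrac{\mu_1(1-y)}{\mu_1-y}$ and $y(1-y)\ge\epsilon/2$ --- and it is also the only place the hypothesis $\epsilon=\Delta/4$ is actually used. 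Your proposal never engages with this computation; it asserts the conclusion and defers the work.

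A secondary point: the paper does not route the Bernoulli case through Lemma \ref{lem:exptofunction} at all. Because $\hmu_{1s}$ is an explicit scaled-Binomial law and $G_{1s}(\epsilon)$ is an explicit Binomial CDF by \eqref{eq:ber-fact}, the proof bounds $\sum_j f_{s,\mu_1}(j)/F^B_{s+1,y}(j)$ directly, splitting $j$ into four ranges and invoking Lemma 2.9 of \citet{agrawal2017near} both for the small-$s$ regime $s<8/\epsilon$ (which your sketch does not treat separately) and for two of the four ranges. If you insist on the Lemma \ref{lem:exptofunction} route, you would additionally have to verify that its hypotheses survive an $x$-dependent prefactor in $g$, which is a further unaddressed gap. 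In short: the skeleton is fine, but the decisive Beta/Binomial estimate is missing, and the tolerance you allow yourself for the prefactor is quantitatively wrong.
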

\subsection{Proof of the Asymptotic Regret Bound}
We note that \citet{agrawal2017near} has proved the asymptotic optimality for Beta posteriors under Bernoulli rewards. One can find the details therein and we omit the proofs here.

\subsection{Proof of Lemma \ref{lem:Ber-finite}}
We first define some notations. Let $F_{n,p}^{B}(\cdot)$ denote the CDF, and $f_{n,p}^{B}(\cdot)$ denote the probability mass function of binomial distribution with parameters $n,p$ respectively. We also let $F_{\alpha,\beta}^{beta}(\cdot)$ denote the CDF of the beta distribution with parameters $\alpha,\beta$. The following equality gives the  relationship between $F_{\alpha,\beta}^{beta}(\cdot)$ and $F_{n,p}^{B}(\cdot)$. 
\begin{align}
    \label{eq:ber-fact}
    F_{\alpha,\beta}^{beta}(y) =1-F^{B}_{\alpha+\beta-1,y}(\alpha-1).
\end{align}
Let $y=\mu_1-\epsilon$. Let $j=S_{i}(t)$ and $s=T_{i}(t)$. 
From \eqref{eq:ber-fact}, we have $G_{1s}(\epsilon)=\PP(\theta_{1}(t)>y)=F_{s+1,y}^{B}(j)$. Note that for Bernoulli distribution, we can set $V=1/4$. Besides, \begin{align*}
    \kl(\mu_1-\epsilon-4\sqrt{V\log(T/s)/s}),\mu_1)\geq \kl(\mu_1-\epsilon-4\sqrt{V\log(T/s)/s}),\mu_1-\epsilon)\geq 8V\log (T/s)/s,
\end{align*}
where the inequality is due to \eqref{eq:pinsk}.
Thus, from the definition of $\alpha_s$ in \eqref{eq:alphas}, we have $\alpha_s\leq 2\sqrt{\log(T/s)/s}$. For $j/s\in L_s$, we have $j/s \geq \mu_1-\epsilon-\sqrt{\frac{2\log (T/s)}{s}}$. Hence,
\begin{align*}
    j\geq ys-2\sqrt{s\log (T/s)}.
\end{align*}
Let $\gamma_s= \lceil ys-2\sqrt{s\log (T/s)} \rceil $.
Therefore, 
\begin{align*}
    \EE_{\hmu_{1s}} \Bigg[\bigg(\frac{1}{G_{1s}(\epsilon)} \bigg) \cdot \ind\{\hmu_{1j}\in L_s \} \Bigg]\leq \sum_{j=\gamma_s}^{s} \frac{f_{s,\mu_1}(j)}{F_{s+1,y}^{B}(j)}.
\end{align*}
In the derivation below, we abbreviate $F^{B}_{s+1,y}(j)$ as $F_{s+1,y}(j)$.  \\
\textbf{Case $s<8/\epsilon$}. From Lemma 2.9 of \citet{agrawal2017near}, we have 
\begin{align}
\label{eq:case<8}
    \sum_{j=1}^{s} \frac{f_{s,\mu_1}(j)}{F_{s+1,y}^{B}(j)} \leq \frac{3}{\epsilon}.
\end{align}
\noindent \textbf{Case $s\geq 8/\epsilon$}. We divide the $Sum(\gamma_s,s)=\sum_{j=\gamma_s}^{s}\frac{f_{s,\mu_1}(j)}{F_{s+1,y}^{B}(j)}$  into four partial
sums: $Sum(\gamma_s,\lfloor ys \rfloor )$, $Sum(\lfloor ys \rfloor,\lfloor ys \rfloor )$, $Sum(\lceil ys \rceil,\lfloor \mu_1 s-\frac{\epsilon}{2}s \rfloor )$, and $Sum(\lceil \mu_1 s-\frac{\epsilon}{2}s \rceil,\lfloor s \rfloor )$ and bound them respectively. We need the following bounds on the CDF of Binomial distribution \citep{jevrabek2004dual} [Prop. A.4].  \\
For $j\leq y(s+1)-\sqrt{(s+1)y(1-y)}$,
\begin{align}
    F_{s+1,y}(j)=\Theta\left(\frac{y(s+1-j)}{y(s+1)-j}\binom{s+1}{j} y^j(1-y)^{s+1-j}\right).
\end{align}
For $j\geq y(s+1)-\sqrt{(s+1)y(1-y)}$,
\begin{align*}
    F_{s+1,y}(j)=\Theta(1).
\end{align*}
\noindent \textbf{Bounding $Sum(\gamma_s,\lfloor ys \rfloor )$.}
Let $R=\frac{\mu_1(1-y)}{y(1-\mu_1)}$. Then we have $R>1$. Using the bounds above, we have for any $j$ that
\begin{align*}
    \frac{f_{s,\mu_1}(j)}{F_{s+1,y}(j)}&\leq \Theta \left( \frac{f_{s,\mu_1}(j)}{\frac{y(s+1-j)}{y(s+1)-j}\binom{s+1}{j} y^j(1-y)^{s+1-j}} \right)+\Theta(1) f_{s,\mu_1}(j) \notag \\
    & =\Theta \bigg(\bigg(1-\frac{j}{y(s+1)}\bigg)R^j \frac{(1-\mu_1)^s}{(1-y)^{s+1}} \bigg) \bigg)+\Theta(1) f_{s,\mu_1}(j).
\end{align*}
This applies that for $s\leq \lfloor ys \rfloor$, 
\begin{align}
\label{eq:finite-ber-1}
    \bigg(1-\frac{j}{y(s+1)}\bigg)R^j \frac{(1-\mu_1)^s}{(1-y)^{s+1}} & =  \frac{y(s+1)-j}{y(1-y)(s+1)} R^{j-ys}R^{ys} \frac{(1-\mu_1)^s}{(1-y)^{s}}\notag \\
   & =\frac{e^{-s\cdot \kl(y,\mu_1)}}{y(1-y)(s+1)} (y(s+1)-j) R^{j-ys},
\end{align}
where the last equality is due to the fact for Bernoulli distribution, $\kl(y,\mu_1)=y\log(y/\mu_1)+(1-y)\log((1-y)/(1-\mu_1))$. Next, we prove $(y(s+1)-j)R^{j-ys}\leq \frac{2R}{R-1}+e/\ln 2$. Consider the following two cases. \\ 
\textbf{Case 1: $1/\ln R \leq y$}.  We have
\begin{align*}
    (y(s+1)-j)R^{j-ys}=  (y(s+1)-j)R^{j-y(s+1)} R^{y} \leq yR^{-y}R^{y} \leq 1,
\end{align*}
where the inequality is due to $xR^{-x}$  is monotone increasing on $x\in (0,1/\ln R)$ and $y(s+1)-j\geq y(s+1)-ys=y\geq 1/\ln R$. \\
\textbf{Case 2: $1/\ln R\geq y$.}  We will divide it into the following three intervals of $R$:  \\
For $R\geq e^2$,  we have
\begin{align*}
(y(s+1)-j)R^{j-ys}&=  (y(s+1)-j)R^{j-y(s+1)} R^{y} \notag \\
& \leq \frac{1}{\ln R} R^{-1/\ln R} R^y \notag \\
& \leq \frac{1}{\ln R} R^{-y}R^{y} \notag \\
& \leq \frac{1}{\ln R}\notag\\
&\leq 1,
\end{align*}
where the first inequality is due to $xa^{-x}$ achieve its maximum at $1/\ln a$. \\
For $2<R<e^2$, 
we have
\begin{align*}
    R^{-1/\ln R}/ \ln R\leq  1/(e\ln 2) \Leftrightarrow  -1 \leq \ln (\ln R /(e\ln 2)) \Leftrightarrow R\geq 2.
\end{align*}
Therefore, 
\begin{align*}
    (y(s+1)-j)R^{j-ys}&=  (y(s+1)-j)R^{j-y(s+1)} R^{y} \notag \\
    & \leq \frac{1}{\ln R} R^{-1/\ln R} R \notag \\
    & \leq R/(e\ln 2)\\
    &\leq e/\ln 2.
\end{align*}
For $1<R<2$, we have $\ln R \geq (R-1)-(R-1)^2/2$. Further, 
\begin{align*}
    \frac{R^{-1/\ln R}}{\ln R} \leq \frac{1}{\ln R} \leq \frac{1}{(R-1)-(R-1)^2/2} \leq \frac{1}{(R-1)(1-(R-1)/2)} \leq \frac{2}{R-1}.
\end{align*} 
 We have
\begin{align*}
    (y(s+1)-j)R^{j-ys}&\leq   (y(s+1)-j)R^{j-y(s+1)} R \notag \\
    & \leq \frac{R}{\ln R} R^{-1/\ln R}  \notag \\
    & \leq  \frac{2R}{R-1}.
\end{align*}
Combining \textbf{Case 1} and \textbf{Case 2} together,  we have $(y(s+1)-j)R^{j-ys}\leq \frac{2R}{R-1}+e/\ln 2$. Substituting this into \eqref{eq:finite-ber-1}, we have
\begin{align}
\label{eq:main-bernoulli}
    \bigg(1-\frac{j}{y(s+1)}\bigg)R^j \frac{(1-\mu_1)^s}{(1-y)^{s+1}}&\leq \frac{e^{-s\cdot \kl(y,\mu_1)}}{y(1-y)(s+1)} \bigg(\frac{2R}{R-1}+e/\ln 2 \bigg) \notag \\
   & \leq \frac{2\mu_1e^{-s\cdot \kl(y,\mu_1)}}{y(\mu_1-y)(s+1)}+ \frac{8e^{-s\cdot \kl(y,\mu_1)}}{y(1-y)(s+1)} \notag \\
   & \leq \frac{20e^{-s\cdot \kl(y,\mu_1)}}{\epsilon(s+1)}.
\end{align}
The second inequality is due to  $\frac{R}{R-1}=\frac{\mu_1(1-y)}{\mu_1-y}$. The last inequality is due to
\begin{align*}
    \frac{\mu_1}{y}=\frac{\mu_1}{\mu_1-\epsilon}= \frac{\mu_1}{\mu_1-\Delta_i/4}\leq 4/3<2,  
\end{align*}
and 
\begin{align*}
y(1-y)\geq\Delta_i/4(1-\Delta_i/4)=\epsilon(1-\epsilon)\geq \epsilon/2,
\end{align*}
where the first inequality is because $y(1-y)$ is decreasing for $y\geq 1/2$ and increasing for $y\leq 1/2$ and $y=\mu_1-\Delta_i/4\in [3/(4\Delta_i),1-\Delta_i/4]$, since $\mu_1\in[0,1]$ and $\mu_1\geq\Delta_i$ by definition, the last inequality is due to the fact $\epsilon=\Delta_i/4\leq 1/4$.
Therefore, we have
\begin{align}
\label{eq:sum1}
    Sum(\gamma_s,\lfloor ys \rfloor)&= \sum_{j=\gamma_s}^{\lfloor ys \rfloor} \frac{f_{s,\mu_1}(j)}{F^B_{s+1,y}(j)} \notag \\
   & =\Theta \bigg(\sum_{j=\gamma_s}^{\lfloor ys \rfloor}\bigg(1-\frac{j}{y(s+1)}\bigg)R^j \frac{(1-\mu_1)^s}{(1-y)^{s+1}} \bigg) \bigg)+\Theta(1) \sum_{j=1}^s f_{s,\mu_1}(j)\notag \\ 
    & =  O \bigg( \frac{e^{-s\cdot \kl(y,\mu_1)}(ys-\gamma_s)}{\epsilon(s+1)} \bigg) +\Theta(1) \notag \\
    & = O\bigg(\frac{\sqrt{\log(T/s)/s}}{\epsilon}\bigg)+ \Theta(1) ,
\end{align}
where the second equality is due to \eqref{eq:main-bernoulli}. \\
\textbf{Bounding $Sum(\lfloor ys \rfloor,\lfloor ys \rfloor )$ and $Sum(\lceil ys \rceil,\lfloor \mu_1 s-\frac{\epsilon}{2}s \rfloor )$.} From Lemma 2.9 of \citet{agrawal2017near}, we have 
\begin{align}
\label{eq:sum2}
    Sum(\lfloor ys \rfloor,\lfloor ys \rfloor )\leq 3e^{-s\kl(y,\mu_1)}\leq 3e^{-2s\epsilon^2},
\end{align}
and
\begin{align}
\label{eq:sum3}
    Sum\bigg(\lceil ys \rceil,\bigg\lfloor \mu_1 s-\frac{\epsilon}{2}s \bigg\rfloor \bigg)=\Theta(e^{-s\epsilon^2/2}).
\end{align}
\textbf{Bounding $Sum(\lceil \mu_1s-\frac{\epsilon}{2}s \rceil, s )$.}
For $j\in [\lceil \mu_1s-\frac{\epsilon}{2}s \rceil, s]$, $F_{s+1,y}(j)=\Theta(1)$.  Hence, 
\begin{align}
\label{eq:sum4}
    Sum(\lceil \mu_1s-\frac{\epsilon}{2}s \rceil, s )=\Theta(1). 
\end{align}\\
Combining \eqref{eq:sum1}, \eqref{eq:sum2}, \eqref{eq:sum3} and \eqref{eq:sum4} together, we have that for $s\geq 8/\epsilon$,
\begin{align}
\label{eq:case>8}
     \EE_{\hmu_{1s}} \Bigg[\bigg(\frac{1}{G_{1s}(\epsilon)}-1 \bigg) \cdot \ind\{\hmu_{1j}\in L_s \} \Bigg]\leq \Theta\bigg(1+e^{-s\epsilon^2/2}+e^{-2s\epsilon^2}+\frac{\sqrt{\log(T/s)/s}}{\epsilon}\bigg).
\end{align}
Combining \eqref{eq:case<8} and \eqref{eq:case>8} together, we have
\begin{align*}
   &\sum_{s=1}^{M} \EE_{\hmu_{1s}} \Bigg[\bigg(\frac{1}{G_{1s}(\epsilon)}-1 \bigg) \cdot \ind\{\hmu_{1s}\in L_s \} \Bigg] \notag \\
   &= \sum_{s: 1\leq s <8/\epsilon} \EE_{\hmu_{1s}} \Bigg[\bigg(\frac{1}{G_{1s}(\epsilon)}-1 \bigg) \cdot \ind\{\hmu_{1s}\in L_s \} \Bigg] \notag \\
   & \qquad+\sum_{s:  s \geq 8/\epsilon} \EE_{\hmu_{1s}} \Bigg[\bigg(\frac{1}{G_{1s}(\epsilon)}-1 \bigg) \cdot \ind\{\hmu_{1s}\in L_s \} \Bigg] \notag \\
   &\leq   \Theta\bigg(\frac{1}{\epsilon^2} \bigg)+\Theta \bigg(M+\sum_{s=1}^{\infty}e^{-2s\epsilon^2}+\sum_{s=1}^{\infty}e^{-s\epsilon^2/2}+\sum_{s=1}^{M} \frac{\sqrt{\log(T/s)/s}}{\epsilon} \bigg) \notag \\
   &\leq   \Theta \bigg(\frac{\log(T\epsilon^2)}{\epsilon^2} \bigg),
\end{align*}
where the last inequality is due to the fact $\sum_{s=1}^{\infty}e^{-2s\epsilon^2}\leq 1/(e^{2\epsilon^2}-1) \leq \frac{1}{2\epsilon^2}$ and $\sum_{s=1}^{M}\sqrt{\frac{\log (T/s)}{s}} \leq \Theta (\epsilon^{-1}\log(T\epsilon^2))$ from \eqref{eq:mini-peel-1}.

\section{Proof of the Minimax Optimality of $\algname^+$}
In this section, we prove the worst case regret bound of $\algname^+$ presented in Theorem \ref{thm:expTS_plus_minimax}.

\subsection{Proof of the Main Result}
\paragraph{Regret Decomposition:}
For simplicity, we reuse the notations in Section \ref{sec:proof_finite}.
Let $S_j=\{i\in[K] \mid 2^{-(j+1)} \leq \Delta_i<2^{-j}\}$ be the set of arms whose gaps from the optimal arm are bounded in the interval $[2^{-(j+1)}, 2^{-j})$. 
Define $\gamma=1/2\log_{2}(T/(VK))-3$. Then we know that for any arm $i\in[K]$ that $\Delta_i>4\sqrt{VK/T}=2^{-(\gamma+1)}$, there must exist some $j\leq\gamma$ such that $i\in S_j$. Therefore, the regret of $\algname^+$ can be decomposed as follows.
\begin{align}
    R_{\mu}(T)&= \sum_{i:\Delta_i>0} \Delta_i \cdot \EE[T_{i}(T)] \notag \\
    & \leq\sum_{i:\Delta_i>4\sqrt{VK/T}}\Delta_i \cdot \EE[T_{i}(T)]+\max_{i:\Delta_i<4\sqrt{VK/T}}\Delta_i\cdot T \label{eq:++decomp_num_pull_sj} \\
    &< \sum_{j<\gamma}\sum_{i\in S_{j}} 2^{-j} \cdot  \EE[T_{i}(T)]+4\sqrt{VKT} \label{eq:++decomp_num_pull_sj+},
\end{align}
where in the first inequality we used the fact that $\sum_{i}\EE[T_i(T)]=T$, and in the last inequality we used the fact that $\Delta_i<2^{-j}$ for $\Delta_i\in S_j$.  The expected number of times that Algorithm~\ref{alg:rots} plays arms in set $S_j$ with $j<\gamma$  is bounded as follows.
\begin{align}\label{eq:+decomp_num_pull_sj}
    \sum_{i\in S_{j}}\mathbb{E}[T_i(T)]
     & =  |S_j|+ \sum_{i\in S_{j}}\EE\left[\sum_{t=K+1}^T \ind \{A_t=i, E_{i,\epsilon_j} (t) \}+ \sum_{t=K+1}^T \ind \{A_t=i, E_{i,\epsilon_j}^c (t) \} \right] \notag \\
    & =  |S_j|
   +\underbrace{\sum_{i\in S_{j}} \EE\left[ \sum_{t=K+1}^T \ind \{A_t=i, E_{i,\epsilon_j} (t) \} \right]}_{A}   + \underbrace{\sum_{i\in S_{j}}\EE\left[ \sum_{t=K+1}^T \ind \{A_t=i, E_{i,\epsilon_j}^c (t) \} \right]}_{B},
\end{align}
where  $\epsilon_j>\sqrt{8VK/T}$ is an arbitrary constant.

\paragraph{Bounding Term $A$:}
Define
\begin{align}
\label{eq:alphas+}
    \alpha_s=\sup_{x\in [0,\mu_1-\epsilon-R_{\min})} \kl(\mu_1-\epsilon-x,\mu_1)\leq 4\log^+(T/(Ks))/s, 
\end{align}
where $\log^+(x)=\max\{0, \log x\}$. We decompose the term $\sum_{i\in S_j}\EE\left[ \sum_{t=K+1}^T \ind \{A_t=i, E_{i,\epsilon} (t) \} \right]$ by the following lemma.
\begin{lemma}
\label{lem:thom-bound}
Let $\epsilon_j=2^{-j-2}$.
Let  $M_j=\lceil 16V\log(T\epsilon_j^2/(KV))/\epsilon_j^2 \rceil$.
\begin{align*}
   \sum_{i\in S_j}\EE\left[ \sum_{t=K+1}^T \ind \{A_t=i, E_{i,\epsilon_j} (t) \} \right]& \leq \sum_{s=1}^{M_j}\mathbb{E}\Bigg[\left(\frac{1}{G_{1s}(\epsilon_j)}-1\right) \cdot \ind\{\hmu_{1s}\in L_s \} \Bigg]+\Theta\bigg(\frac{VK}{\epsilon_j^2}\bigg), \notag \\
\end{align*}
where $G_{is}(\epsilon)=1-F_{is}(\mu_1-\epsilon)$, $F_{is}$ is the CDF of $\cP(\hmu_{is},s)$, 
 and    $L_s=\Big(\mu_1-\epsilon-\alpha_s, R_{\max} \Big]$.  \end{lemma}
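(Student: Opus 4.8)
The plan is to run the argument of Lemma~\ref{lem:underest-decomp} almost verbatim, with two forced changes: the concentration step is invoked at scale $b=K$, and the arm-$1$ anti-concentration now refers to the mixture distribution $\cP^+$. First I would condition on the good event $\cE=\{\hmu_{1s}\in L_s\ \text{for all}\ s\in[T]\}$ exactly as in \eqref{eq:mini-decom-f10}, which gives
\[
\sum_{i\in S_j}\EE\Big[\sum_{t=K+1}^T\ind\{A_t=i,E_{i,\epsilon_j}(t)\}\Big]\le T\,\PP(\cE^c)+\sum_{i\in S_j}\EE\Big[\sum_{t=K+1}^T\ind\{A_t=i,E_{i,\epsilon_j}(t)\}\ind\{\hmu_{1T_1(t-1)}\in L_{T_1(t-1)}\}\Big].
\]
The only difference from the single-arm case is that Lemma~\ref{lem:book-minimax} is applied with $b=K$, so that $f(\epsilon_j)=M_j$ and the threshold defining $\alpha_s$ carries the extra $1/K$ inside the logarithm, matching \eqref{eq:alphas+}; this yields $T\,\PP(\cE^c)=\Theta(VK/\epsilon_j^2)$, i.e. the additive term in the claim.

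For the conditioned sum I would split on $T_1(t-1)\le M_j$ versus $T_1(t-1)>M_j$ as in \eqref{eq:decom-T1t<M0}--\eqref{eq:decom-T1t>M0}. The structural point that prevents a union-bound blow-up over $S_j$ is that the events $\{A_t=i\}$, and likewise $\{A'_t=i\}$ for $A'_t=\arg\max_{k\ne1}\theta_k(t)$, are disjoint in $i$; hence summing the inclusion $\{A_t=i,E_{i,\epsilon_j}(t)\}\subseteq\{A'_t=i,\theta_i(t)\le\mu_1-\epsilon_j,\theta_1(t)\le\mu_1-\epsilon_j\}$ over $i\in S_j$ collapses to a single-arm event. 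Writing $\widetilde G_{1s}:=\PP(\theta_1(t)\ge\mu_1-\epsilon_j\mid\cF_{t-1})$ with $s=T_1(t-1)$, the conditional-independence manipulation of \eqref{eq:mini-decom-2} then gives
\[
\sum_{i\in S_j}\PP\big(A_t=i,E_{i,\epsilon_j}(t)\mid\cF_{t-1}\big)\le\Big(\frac{1}{\widetilde G_{1s}}-1\Big)\PP\big(A_t=1\mid\cF_{t-1}\big),
\]
and summing over $t$ with $T_1(t-1)\le M_j$ (each value of $s$ is met by at most one pull of arm $1$) bounds the first regime by $\EE\big[\sum_{s=1}^{M_j}(1/\widetilde G_{1s}-1)\ind\{\hmu_{1s}\in L_s\}\big]$.

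It then remains to trade $\widetilde G_{1s}$ for the $\cP$-quantity $G_{1s}(\epsilon_j)$. By the definition of $\cP^+$, $\widetilde G_{1s}=(1-\tfrac1K)\ind\{\hmu_{1s}\ge\mu_1-\epsilon_j\}+\tfrac1K G_{1s}(\epsilon_j)$. On the slice $\hmu_{1s}\ge\mu_1-\epsilon_j$ this is $\ge1-\tfrac1K$, so $1/\widetilde G_{1s}-1\le2/K$ and $\sum_{s\le M_j}2/K=O(M_j/K)$ is dominated by the main term; on the complementary slice $\hmu_{1s}\in(\mu_1-\epsilon_j-\alpha_s,\mu_1-\epsilon_j)$ one has $\widetilde G_{1s}=\tfrac1K G_{1s}(\epsilon_j)$, which reduces the estimate to the ``$A_3$'' integral of Lemma~\ref{lem:minimax-undetestimation-rots} (now with the $K$-shifted $\alpha_s$). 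The second regime $T_1(t-1)>M_j$ is treated exactly as in \eqref{eq:decom-T1t>M0}: it is dominated by $T\cdot\PP(\exists s>M_j:\hmu_{1s}<\mu_1-\epsilon_j/2)$ plus a tail of $\theta_1(t)$, each $O(V/\epsilon_j^2)$ by the maximal inequality (Lemma~\ref{lem:maximal-inequality}) and \eqref{eq:perpty1}, and is absorbed into $\Theta(VK/\epsilon_j^2)$.

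The step I expect to be the real obstacle is precisely the passage from $\widetilde G_{1s}$ to $G_{1s}(\epsilon_j)$ on the under-estimated slice: the greedy step deflates the effective arm-$1$ anti-concentration by a factor $K$ there, so this slice carries an extra $K$ relative to the ExpTS bound, and one must check that, after the $\alpha_s$ peeling, the $K$-shifted logarithm $\log^+(T/(Ks))$ keeps its contribution within the stated budget (equivalently, that once the gap-grouped bound is summed against $2^{-j}$ in \eqref{eq:++decomp_num_pull_sj+} the geometric tail near $j=\gamma$ cancels the logarithm and leaves $O(\sqrt{VKT})$ rather than $O(\sqrt{VKT\log K})$). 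The disjointness collapse over $S_j$ and the $b=K$ substitution are the routine ingredients once this balance is secured.
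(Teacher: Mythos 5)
Your proof follows the paper's argument for this lemma essentially verbatim: conditioning on the good event via Lemma~\ref{lem:book-minimax} with $b=K$ to produce the $\Theta(VK/\epsilon_j^2)$ term, collapsing the sum over $i\in S_j$ through the disjointness of the events $\{A'_t=i\}$ together with conditional independence given $\cF_{t-1}$, and splitting on $T_1(t-1)\le M_j$ versus $T_1(t-1)>M_j$. The only divergence is your final passage from $\widetilde G_{1s}$ to the $\cP$-only quantity: in the paper, $G_{1s}$ in this lemma is just the anti-concentration probability of the algorithm's actual sample $\theta_1(t)$ (i.e., of $\cP^+$, despite the statement's literal wording), so the lemma ends where your third paragraph begins, and the mixture decomposition you sketch (the greedy slice versus the $K$-inflated under-estimation slice) is precisely the content deferred to Lemma~\ref{lem:+minimax-undetestimation-rots}.
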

Now, we bound the remaining term in Lemma \ref{lem:thom-bound}.
\begin{lemma}\label{lem:+minimax-undetestimation-rots}
 Let $M_j$, $G_{1s}(\epsilon_j)$, and $L_s$ be the same as defined in Lemma \ref{lem:thom-bound}. It holds that
\begin{align*}
    \sum_{s=1}^{M_j}\mathbb{E}_{\hmu_{1s}}\left[\bigg(\frac{1}{G_{1s}(\epsilon_j)}-1\bigg) \cdot \ind\{\hmu_{1s}\in L_s\}\right] =O\bigg(\frac{VK\log(T\epsilon_j^2/(KV))}{\epsilon_j^2}\bigg).
\end{align*}
\end{lemma}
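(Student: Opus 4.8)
The plan is to follow the proof of Lemma~\ref{lem:minimax-undetestimation-rots} almost line for line, introducing exactly one structural change that both reflects the greedy exploration step of $\algname^+$ and is responsible for the extra factor $K$. Recall that a sample $\theta_1(t)$ from $\cP^+(\hmu_{1s},s)$ equals $\hmu_{1s}$ with probability $1-1/K$ and is drawn from $\cP(\hmu_{1s},s)$ with probability $1/K$. Consequently the anti-concentration $G_{1s}(\epsilon_j)=\PP(\theta_1(t)>\mu_1-\epsilon_j)$ splits into two regimes. When $\hmu_{1s}\geq\mu_1-\epsilon_j$, the deterministic component already places the sample above $\mu_1-\epsilon_j$, so $G_{1s}(\epsilon_j)\geq 1-1/K\geq 1/2$ and hence $1/G_{1s}(\epsilon_j)\leq 2$. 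When $\hmu_{1s}<\mu_1-\epsilon_j$, the greedy component contributes nothing, leaving $G_{1s}(\epsilon_j)=(1/K)\,\PP_{\theta\sim\cP(\hmu_{1s},s)}(\theta>\mu_1-\epsilon_j)$, and therefore $1/G_{1s}(\epsilon_j)=K/\PP_{\cP}(\theta>\mu_1-\epsilon_j)$. Recognizing this dichotomy is the conceptual heart of the argument.

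First I would split the expectation exactly as in \eqref{eq:finite-underest-0} into three pieces $A_1,A_2,A_3$ according to whether $x=\hmu_{1s}$ lies in $[\mu_1-\epsilon_j/2,R_{\max}]$, in $[\mu_1-\epsilon_j,\mu_1-\epsilon_j/2]$, or in $[\mu_1-\epsilon_j-\alpha_s,\mu_1-\epsilon_j]$. On the first two ranges we have $x\geq\mu_1-\epsilon_j$, so the first regime applies, $1/G_{1s}(\epsilon_j)\leq 2$, and the estimates behind \eqref{bounding-a_1} and \eqref{boundinga_2} carry over: using the maximal inequality (Lemma~\ref{lem:maximal-inequality}) to control $\PP(\hmu_{1s}\leq\mu_1-\epsilon_j/2)\leq e^{-s\epsilon_j^2/(8V)}$ and summing the resulting geometric series yields $A_1+A_2=O(V/\epsilon_j^2)$. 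No factor of $K$ enters these two terms; in fact the greedy component makes them a factor $1/K$ smaller than in $\algname$.

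The factor $K$ appears only in $A_3$, where $x\leq\mu_1-\epsilon_j$ and hence $1/G_{1s}(\epsilon_j)=K/\PP_{\cP}(\theta>\mu_1-\epsilon_j\mid\hmu_{1s}=x)$. Pulling the $K$ out front, the integrand becomes (up to a constant) $K$ times the integrand of the original $A_3$, so I would reuse the entire chain \eqref{eq:rots-minimax-under-0}--\eqref{eq:rots-minimax-under}: invoke the tail bound \eqref{eq:perpty1} together with $b_s\leq 1$, apply Lemma~\ref{lem:exptofunction} to replace the unknown density $p$ by the explicit $q$, and use the KL three-point inequality \eqref{eq:kl-underest}. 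With the $\algname^+$ definition of $\alpha_s$ in \eqref{eq:alphas+}, this gives $s\cdot\kl(\mu_1-\epsilon_j-\alpha_s,\mu_1)\leq 4\log^+(T/(Ks))$ and hence
\begin{align*}
A_3\leq K\sum_{s=1}^{M_j}e^{-s\epsilon_j^2/(2V)}\bigl(1+4\log^+(T/(Ks))\bigr).
\end{align*}
Splitting the sum at $d_j=\lceil V/\epsilon_j^2\rceil$ and repeating the elementary estimates \eqref{ddl-1} and \eqref{ddl-0}, but now with $T$ replaced by $T/K$ inside the logarithm, bounds $\sum_{s=1}^{d_j}\log^+(T/(Ks))$ by $d_j\log(T/(Kd_j))+O(d_j)=O\bigl(V\log(T\epsilon_j^2/(KV))/\epsilon_j^2\bigr)$ and the exponentially weighted tail by the same order, so that $A_3=O\bigl(VK\log(T\epsilon_j^2/(KV))/\epsilon_j^2\bigr)$. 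Combining $A_1,A_2,A_3$ gives the claimed bound.

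The main obstacle I anticipate is the bookkeeping near the boundary where $Ks$ approaches $T$: the truncation $\log^+$ in \eqref{eq:alphas+} is precisely what keeps $\alpha_s$, and therefore the length of the integration interval in $A_3$, well defined and nonnegative once $Ks>T$, and one must verify $d_j\leq M_j$ so the split of the sum is legitimate (this follows from $\epsilon_j^2>8VK/T$, which forces $T\epsilon_j^2/(KV)>8$ and hence $M_j\geq 16d_j$). The payoff is that the extra $K$ is unavoidable here yet harmless downstream: when it is multiplied by $2^{-j}$ and summed over the gap scales $j<\gamma$ in \eqref{eq:++decomp_num_pull_sj+}, the geometric sum is dominated by $j\approx\gamma$, where $\log(T\epsilon_j^2/(KV))=O(1)$, so the total under-estimation cost is $O(\sqrt{VKT})$ rather than $O(\sqrt{VKT\log K})$.
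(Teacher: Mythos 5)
Your proposal is correct and follows essentially the same route as the paper: the same three-way split into $A_1,A_2,A_3$, the same observation that the greedy component keeps $G_{1s}(\epsilon_j)\geq 1-1/K\geq 1/2$ on the first two ranges while contributing only the factor $K$ in $A_3$, and the same reuse of the ExpTS chain (tail bound \eqref{eq:perpty1}, Lemma \ref{lem:exptofunction}, and \eqref{eq:kl-underest}) with $\alpha_s$ from \eqref{eq:alphas+} so that the logarithms carry $T/(Ks)$. The extra remarks on $\log^+$, on verifying $d_j\leq M_j$, and on how the $K$ is absorbed downstream are sound but not needed for the lemma itself.
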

Combining Lemma \ref{lem:thom-bound} and Lemma \ref{lem:+minimax-undetestimation-rots} together, we have
\begin{align*}
  A=O\bigg(\frac{VK\log(T\epsilon_j^2/(KV))}{\epsilon_j^2} \bigg).
\end{align*}
\paragraph{Bounding Term $B$:} We have the following lemma that bounds the second term in \eqref{eq:+decomp_num_pull_sj}.
\begin{lemma}
\label{lem:+mini-overest-1}
Let $N_i=\min\{1/(1-(\kl(\mu_i+\rho_i,\mu_1-\epsilon_j))/\log(T\epsilon_j^2/V)),2\}$. For any $\rho_i, \epsilon_j>0$ that satisfies $\epsilon_j+\rho_i<\Delta_i$,  then
\begin{align*}
 \EE\left[\sum_{t=K+1}^{T} \ind\{A_t=i, E_{i,\epsilon_j}^c(t) \} \right] \leq 1+\frac{2V}{\rho_i^2}+\frac{V}{\epsilon_j^2}+\frac{N_i\log(T\epsilon_j^2/(VK))}{\kl(\mu_i+\rho_i,\mu_1-\epsilon_j)}.
\end{align*}
\end{lemma}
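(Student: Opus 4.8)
The plan is to follow the structure of the proof of Lemma~\ref{lem:mini-overest-1} verbatim and to isolate the single place where the greedy exploitation step of $\algname^+$ enters. As there, I would first introduce the threshold set $\cT=\{t\in[K+1,T]: 1-F_{iT_i(t-1)}(\mu_1-\epsilon_j)>V/(T\epsilon_j^2)\}$ and split
\begin{align*}
\EE\Big[\sum_{t=K+1}^{T}\ind\{A_t=i,E_{i,\epsilon_j}^c(t)\}\Big]
\le \EE\Big[\sum_{t\in\cT}\ind\{A_t=i\}\Big]+\EE\Big[\sum_{t\notin\cT}\ind\{E_{i,\epsilon_j}^c(t)\}\Big].
\end{align*}
The second sum is at most $T\cdot V/(T\epsilon_j^2)=V/\epsilon_j^2$ by the definition of $\cT$, while the first, after collapsing repeated pulls into distinct sample indices $s=T_i(t-1)$, is bounded by $\EE[\sum_{s=1}^{T}\ind\{G_{is}(\epsilon_j)>V/(T\epsilon_j^2)\}]$ with $G_{is}(\epsilon_j)=1-F_{is}(\mu_1-\epsilon_j)$. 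This reduction is identical to the one for $\algname$ and accounts for the $V/\epsilon_j^2$ term.

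Next I would control the remaining counting term by conditioning on the concentration event $Y_{is}=\{\hmu_{is}\le\mu_i+\rho_i\}$. Writing $\PP(A)\le\PP(A\mid Y_{is})+1-\PP(Y_{is})$ and invoking the maximal inequality (Lemma~\ref{lem:maximal-inequality}) gives $1-\PP(Y_{is})\le e^{-s\rho_i^2/(2V)}$, whose sum over $s$ is at most $2V/\rho_i^2$ by $1+x\le e^x$; this produces the $2V/\rho_i^2$ term. The only genuinely new step — and the crux of removing the $\sqrt{\log K}$ factor — is the evaluation of $G_{is}(\epsilon_j)$ under $Y_{is}$ for the mixture distribution $\cP^+$. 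Since $\epsilon_j+\rho_i<\Delta_i$ forces $\mu_i+\rho_i<\mu_1-\epsilon_j$, on $Y_{is}$ the point mass at $\hmu_{is}$ (selected with probability $1-1/K$) lies strictly below $\mu_1-\epsilon_j$ and contributes nothing to the over-estimation event. Hence only the $\algname$-component (selected with probability $1/K$) can exceed $\mu_1-\epsilon_j$, so with $X_{is}\sim\cP(\hmu_{is},s)$, the tail bound \eqref{eq:perpty1}, and the monotonicity of the KL divergence (Proposition~\ref{prop:kl_ineq}, \eqref{eq:property-2}),
\begin{align*}
G_{is}(\epsilon_j)=\frac{1}{K}\,\PP(X_{is}\ge\mu_1-\epsilon_j)
\le \frac{1}{K}\exp\!\big(-sb_s\,\kl(\mu_i+\rho_i,\mu_1-\epsilon_j)\big).
\end{align*}
The extra factor $1/K$ is exactly what turns the threshold condition $G_{is}(\epsilon_j)\le V/(T\epsilon_j^2)$ into $sb_s\,\kl(\mu_i+\rho_i,\mu_1-\epsilon_j)\ge\log(T\epsilon_j^2/(VK))$, i.e.\ it replaces $\log(T\epsilon_j^2/V)$ by $\log(T\epsilon_j^2/(VK))$ in the final bound.

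Finally I would solve this inequality for $s$: taking $m=N_i\log(T\epsilon_j^2/(VK))/\kl(\mu_i+\rho_i,\mu_1-\epsilon_j)$ and using $b_s\ge 1/N_i$ for $s\ge m$, the indicator vanishes for $s\ge m$, so the counting term is at most $\lceil m\rceil\le 1+N_i\log(T\epsilon_j^2/(VK))/\kl(\mu_i+\rho_i,\mu_1-\epsilon_j)$; summing the three contributions then yields the claim. I expect the main obstacle to be the bookkeeping in this last step: since $b_s$ is close to but strictly below $1$, one must check that the calibration of $N_i$ (whose definition still carries $\log(T\epsilon_j^2/V)$ rather than $\log(T\epsilon_j^2/(VK))$) indeed guarantees $s b_s\,\kl\ge\log(T\epsilon_j^2/(VK))$ for all $s\ge m$, paying particular attention to the small-divergence regime where $b_s$ has not yet saturated. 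Everything else is a faithful transcription of the $\algname$ argument with the $1/K$ improvement inserted at the tail bound.
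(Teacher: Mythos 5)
Your proposal matches the paper's proof essentially step for step: the same decomposition via the threshold set $\cT$, the same conditioning on $Y_{is}=\{\hmu_{is}\le\mu_i+\rho_i\}$ with the maximal inequality giving the $2V/\rho_i^2$ term, and the same key observation that on $Y_{is}$ the point-mass component of $\cP^+$ cannot trigger the over-estimation event, so the tail bound picks up the extra $1/K$ that converts $\log(T\epsilon_j^2/V)$ into $\log(T\epsilon_j^2/(VK))$. The calibration subtlety you flag at the end (that $N_i$ is defined with $\log(T\epsilon_j^2/V)$ while the threshold $m$ uses $\log(T\epsilon_j^2/(VK))$, so $b_s\ge 1/N_i$ needs checking in the small-divergence regime) is a real loose end, but it is present in the paper's own proof as well and does not distinguish your argument from theirs.
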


\paragraph{Putting it Together:}
Let $\rho_i=\epsilon_j$. Substituting Lemma \ref{lem:thom-bound} and Lemma \ref{lem:+mini-overest-1}  to the regret decomposition \eqref{eq:++decomp_num_pull_sj+}, we obtain
\begin{align*}
     R_{\mu}(T)&\leq  \sum_{j<\gamma}\sum_{i\in S_{j}} \epsilon_j \cdot  \EE[T_{i}(T)]+4\sqrt{VKT} \notag \\
     & = O\left( \sum_{j<\gamma}\frac{KV\log(T\epsilon_j^2/(VK))}{\epsilon_j}+\sqrt{VKT}+\sum_{i\geq 2} \Delta_i\right) \notag \\
     & = O\bigg(8\sqrt{VKT} \cdot \sum_{n=0}^{\infty}\frac{\log 64+ n \log 2}{2^n} +\sum_{i\geq 2} \Delta_i  \bigg) \notag \\
     & =O\bigg(\sqrt{VKT}+\sum_{i\geq 2} \Delta_i\bigg),
\end{align*}
which completes the proof of the minimax optimality.

\subsection{Proof of Supporting Lemmas}
\subsubsection{Proof of Lemma \ref{lem:thom-bound}}
The proof of this lemma shares many element with that of Lemma \ref{lem:underest-decomp}. Let $\cF_{t}=\sigma(A_1,r_1,\cdots,A_{t},r_t)$ be the filtration.  By the definition of $G_{is}(x)$, it holds that
\begin{align}
\label{eq:mini-decom-1}
    G_{1T_{1}(t-1)}(\epsilon_j)=\PP(\theta_{1}(t)\geq \mu_1-\epsilon_j \mid \cF_{t-1}).
\end{align}
 Define $\cE$ to be the event such that  $\hmu_{1s}\in L_s$ holds for all $s\in [T]$. The indicator function can be decomposed based on $\cE$.
\begin{align}
\label{eq:+mini-decom-f1}
     &\sum_{i\in S_j}\EE\left[ \sum_{t=K+1}^T \ind \{A_t=i, E_{i,\epsilon_j} (t)\} \right] \notag\\
     & \leq T\cdot \PP(\cE^c)+  \sum_{i\in S_j}\EE\left[ \sum_{t=K+1}^T \big[\ind \{A_t=i, E_{i,\epsilon_j} (t) \}\cdot \ind\{\hmu_{1T_{1}(t-1)}\in L_{T_{i}(t-1)}\} \big] \right] \notag \\
     & \leq \Theta\bigg(\frac{VK}{\epsilon_j^2}\bigg)+\sum_{i\in S_j}\EE\left[ \sum_{t=K+1}^T \big[\ind \{A_t=i, E_{i,\epsilon_j} (t) \}\cdot \ind\{\hmu_{1T_{1}(t-1)}\in L_{T_{i}(t-1)}\} \big] \right],
\end{align}
where the second inequality is from Lemma \ref{lem:book-minimax} with $b=K$.  Let $A'_{t}=\arg \max_{i\neq 1}\theta_{i}(t)$. Then 
\begin{align}
\label{eq:mini-decom-22}
    \PP(A_{t}=1\mid \cF_{t-1}) & \geq \PP(\{\theta_{1}(t)\geq \mu_1-\epsilon_j\}\cap\big\{\exists i \in S_j: A'_{t}=i, E_{i,\epsilon_j} (t) \big\}\mid \cF_{t-1}) \notag \\
    &= \PP(\theta_{1}(t)\geq \mu_1-\epsilon_j \mid \cF_{t-1})\PP\bigg(\bigcup_{i\in S_j}\{A'_{t}=i, E_{i,\epsilon_j} (t)\}\bigg)\notag\\
    & = \PP(\theta_{1}(t)\geq \mu_1-\epsilon_j \mid \cF_{t-1}) \cdot \sum_{i\in S_j} \PP(A'_{t}=i,E_{i,\epsilon_j} (t)\mid \cF_{t-1}) \notag \\
    & \geq \frac{G_{1T_{1}(t-1)}}{1-G_{1T_{1}(t-1)}} \cdot \sum_{i\in S_j} \PP(A_{t}=i,E_{i,\epsilon_j} (t)\mid \cF_{t-1}),
\end{align}
where the first inequality is due to the fact when both event $\{\theta_{1}(t)\geq \mu_1-\epsilon\}$ and event $\{\exists i \in S_j: A'_{t}=i,E_{i,\epsilon_j} (t) \}$ hold, we must have $\{A_t=1\}$, the first equality is due to $\theta_1(t)$ is conditionally independent of $A_t'$ and $E_{i,\epsilon_j} (t)$ given $\cF_{t-1}$, the second equality is due to that these events are mutually exclusive,  and  the last inequality is from \eqref{eq:mini-decom-1} and the fact that 
\begin{align*}
    \sum_{i\in S_j} \PP(A_{t}=i,E_{i,\epsilon_j} (t)\mid \cF_{t-1}) \leq (1-\PP(\theta_{1}(t)\geq \mu_1-\epsilon_j \mid \cF_{t-1}))\cdot \sum_{i\in S_j} \PP(A_{t}'=i,E_{i,\epsilon_j} (t)\mid \cF_{t-1}),
\end{align*}
which is due to $\{\exists i \in S_{j}: A_t=i, E_{i,\epsilon_j} (t) \ \text{occurs}\}\subseteq \{\exists i \in S_{j}: A'_{t}=i, E_{i,\epsilon_j} (t) \ \text{occurs} \}\cap \{\theta_{1}(t)\leq \mu_1-\epsilon_j\}$ and the two intersected events are conditionally independent given $\cF_{t-1}$. \\
Consider two cases. \textbf{Case 1: $t:T_1(t-1)\leq M_j$.}
We have
\begin{align}
    \label{eq:decom-T1t<M}
   \EE \left[ \sum_{t:T_{1}(t-1)\leq M_j} \sum_{i\in S_j} \PP (A_t=i,E_{i,\epsilon_j}(t))\right] & \leq \EE \left[ \sum_{t:T_{1}(t-1)\leq M_j}\bigg(\frac{1}{G_{1T_{1}(t-1)}(\epsilon_j)}-1 \bigg)\PP(A_t=1 \mid \cF_{t-1})\right]\notag \\
    & \leq \EE \left[ \sum_{t:T_{1}(t-1)\leq M_j}\bigg(\frac{1}{G_{1T_{1}(t-1)}(\epsilon_j)}-1 \bigg)\ind\{ A_t=1\}\right] \notag \\
    &\leq \EE \left[\sum_{s=1}^{M_j} \bigg(\frac{1}{G_{1s}(\epsilon_j)}-1 \bigg) \right].
\end{align}
where the first inequality is from \eqref{eq:mini-decom-22}. \\
 \textbf{Case 2: $t:T \geq T_1(t-1)> M_j$.}  For this case, we have
 \begin{align}
 \label{eq:decom-T1t>M}
 &\EE \left[\sum_{t:T_{1}(t-1)> M_j}^T \ind \{A_t=i,E_{i,\epsilon_j}(t) \}\right] \notag\\
 &\leq \EE \left[\sum_{t:T_{1}(t-1)> M_j}^T \ind \{\theta_{1}(t)< \mu_1-\epsilon_j\}\right] \notag \\
 & \leq T\cdot \PP \big(\exists s> M_j:\hmu_{1s}<\mu_1-\epsilon_j/2\big)\notag \\
 &\qquad +\EE \left[\sum_{t:T_{1}(t-1)> M_j} \ind \{\theta_{1}(t)< \mu_1-\epsilon_j \mid \hmu_{1T_{1}(t-1)}\geq \mu_1-\epsilon_j/2\}\right] \notag \\
 & \leq T\cdot e^{-M_j(\mu_1-(\mu_1-\epsilon_j/2))^2/(2V)}+T\cdot e^{-M_jb_{M_j}\epsilon^2/(8V)}\notag \\
 & \leq \frac{VK}{\epsilon_j^2},
 \end{align}
 where the first inequality is due to the fact that $\{A_t=i,E_{i,\epsilon_j}(t)\}\subseteq \{ \theta_{1}(t)<\mu_1-\epsilon_j\}$,  the third inequality is due to Lemma \ref{lem:maximal-inequality} and \eqref{eq:perpty1}, and the last inequality is due to the fact $M_j\geq 16VK\log(T\epsilon_j^2/(VK))/\epsilon_j^2$ and $b_{M_j}\geq 1/2$. Combining \eqref{eq:+mini-decom-f1}, \eqref{eq:decom-T1t<M}, and \eqref{eq:decom-T1t>M} together, we complete the proof of this lemma.

\subsubsection{Proof of Lemma \ref{lem:+minimax-undetestimation-rots}}
 Let $p(x)$ be the PDF of $\hmu_{1s}$ and $\theta_{1s}$ be a sample from $\cP(\hmu_{1s},s)$.  We have
\begin{align}
\label{eq:finite-underest-00}
   &\sum_{s=1}^{M_j}\EE_{\hmu_{1s}} \Bigg[\bigg(\frac{1}{G_{1s}(\epsilon)} -1\bigg) \cdot \ind\{\hmu_{1s}\in L_s \} \Bigg]\notag\\
   &\leq  \underbrace{\sum_{s=1}^{M_j}\bigg(\int_{\mu_1-\epsilon_j/2}^{R_{\max}} p(x)/  \PP(\theta_{1s}\geq \mu_1-\epsilon_j \mid \hmu_{1s}=x)\dd x-1}_{A_1}\bigg)\notag \\
   &\qquad +\underbrace{\sum_{s=1}^{M_j}\bigg(\int^{\mu_1-\epsilon_j/2}_{\mu_1-\epsilon_j} p(x)/  \PP(\theta_{1s}\geq \mu_1-\epsilon_j \mid \hmu_{1s}=x)\dd x}_{A_2}\bigg)\notag\\
   &\qquad +\underbrace{\sum_{s=1}^{M_j}\int_{\mu_1-\epsilon_j-\alpha_s}^{\mu_1-\epsilon_j}\Big[ p(x)/  \PP(\theta_{1s}\geq \mu_1-\epsilon_j \mid \hmu_{1s}=x) \Big] \dd x}_{A_2},
\end{align}
where the inequality is due to the definition of $L_s$.  \\
\textbf{Bounding term $A_1$.}
Similar to the bounding term $A_1$ in Lemma \ref{lem:minimax-undetestimation-rots}, we divide $\sum_{s=1}^{M_j}$ into two term, i.e., $\sum_{s=1}^{\lfloor 32V/\epsilon_j^2 \rfloor}$ and $\sum^{M_j}_{s=\lceil 32V/\epsilon_j^2 \rceil}$.  We have
\begin{align}
\label{bounding-a_11}
  A_1 &= \sum_{s=1}^{M_j}\bigg(\int_{\mu_1-\epsilon_j/2}^{R_{\max}} \frac{p(x)}  {\PP(\theta_{1s}\geq \mu_1-\epsilon_j \mid \hmu_{1s}=x)} \dd x -1 \bigg)\notag \\
  &\leq \frac{32V}{\epsilon_j^2}+\sum_{s=\lceil 32V/\epsilon_j^2 \rceil}^{M_j}\bigg(\int_{\mu_1-\epsilon_j/2}^{R_{\max}} \frac{p(x)}  {\PP(\theta_{1s}\geq \mu_1-\epsilon_j \mid \hmu_{1s}=x)} \dd x -1\bigg) \notag \\
   &\leq \frac{32V}{\epsilon_j^2} +\sum_{s=\lceil 32V/\epsilon_j^2 \rceil }^{M_j} \bigg(\frac{1}{1-e^{-s/2\cdot\kl(\mu_1-\epsilon_j/2,\mu_1-\epsilon_j)}}-1\bigg)\notag \\
   &\leq \frac{32V}{\epsilon_j^2}+\sum_{s=\lceil 32V/\epsilon_j^2\rceil }^{M_j} \bigg(\frac{1}{1-e^{-s\epsilon_j^2/(16V)}}-1\bigg) \notag \\
   & =\frac{16V}{\epsilon_j^2}+ \sum_{s=\lceil {32V}/{\epsilon_j^2}\rceil }^{M_j} \frac{1}{e^{s\epsilon_j^2/(16V)}-1}\notag\\
   &\leq \frac{32V}{\epsilon_j^2},
\end{align}
For the first inequality, we use the fact that with probability at least $1-1/K\geq 1/2$, $\theta_{1s}=\hat{\mu}_{1s}\geq \mu_1-\epsilon$. For second inequality we use the fact that for $\theta_{1s}=\hat{\mu}_{1s}$, $\theta_{1s}\geq \mu_1-\epsilon$; for $\theta_{1s}\sim \cP$, from \eqref{eq:perpty2}, $\PP(\theta_{1s}\geq \mu_1-\epsilon \mid \hmu_{1s}=x)\geq 1-e^{-sb_s\epsilon^2/(16V)}$. The third inequality is due to \eqref{eq:pinsk}. \\
\textbf{Bounding term $A_2$.} This part is the same as the bounding term $A_2$ in Lemma \ref{lem:minimax-undetestimation-rots}, thus we omit the details. \\
\textbf{Bounding term $A_3$.} The proof of bounding term  $A_3$ is  similar to that of proofs in Lemma \ref{lem:minimax-undetestimation-rots}. We also omit
the details. The results are as follows. 
\begin{align}
   A_3 &\leq K\sum_{s=1}^{M_j}\int_{\mu_1-\epsilon_j-\alpha_s}^{\mu_1-\epsilon_j} p(x) e^{sb_s\cdot \kl(x,\mu_1-\epsilon_j)} \dd x \label{eq:exp+A_3} \\
   &\leq  K\sum_{s=1}^{M_j}e^{-s\epsilon_j^2/(2V)}\Big(1+ 4\log(T/(Ks))\Big) \\
   &=O\bigg(\frac{VK\log(T\epsilon_j^2/(KV))}{\epsilon_j^2}\bigg),
\end{align}
where the first inequality is due to the fact that with probability $1/K$, we sample from $\cP$.
Substituting the bound of $A_1$, $A_1$ and $A_3$ to  \eqref{eq:finite-underest-00}, we have
\begin{align}
\sum_{s=1}^{M_j} \EE_{\hmu_{1s}}\bigg[\bigg(\frac{1}{G_{1s}(\epsilon_j)}-1\bigg)\cdot \ind\{\hmu_{1s}\in L_s \} \bigg]=O\bigg(\frac{VK\log(T\epsilon_j^2/(VK))}{\epsilon_j^2} \bigg) \label{eq:rots-minimax-under-},
\end{align}
which completes the proof.

\subsubsection{Proof of Lemma \ref{lem:+mini-overest-1}}
Similar to \eqref{eq:rots-asym-r0}, we can obtain
\begin{align}
\label{eq:+rots-asym-r0}
   &\EE\left[\sum_{t=K+1}^{T} \ind\{A_t=i, E_{i,\epsilon_j}^c(t) \} \right] 
   \leq  \EE \left[\sum_{s=1}^{T} \ind\{G_{is}(\epsilon_j)>V/(T\epsilon_j^2) \} \right]+ \frac{V}{\epsilon_j^2}.
\end{align}
Let $s\geq N_i\log(T\epsilon_j^2/(VK))/{\kl(\mu_i+\rho_i,\mu_1-\epsilon_j)}$ and $X_{is}$ be a sample from the distribution $\cP^+(\hmu_{is},s)$. Assume $\hmu_{is}\leq \mu_i+\rho_i$. Then from definition of $\cP^+(\hmu_{is},s)$, with probability $1-1/K$, $\hmu_{is}\leq \mu_i+\rho_i$; with probability $1/K$, $X_{is}$ is a random sample from $\cP(\hmu_{is},s)$. Therefore if $\hmu_{is}\leq \mu_i+\rho_i$ and $s\geq N_i$,  we have
\begin{align}
\label{eq:+rots-asym-}
    \PP(X_{is}\geq \mu_1-\epsilon_j)&\leq \exp(-sb_s\kl(\hmu_{is},\mu_1-\epsilon_j))/K \notag\\
    &\leq \exp (-sb_s\kl(\mu_i+\rho_i,\mu_1-\epsilon_j) )/K \notag \\
    & \leq \frac{V}{T\epsilon_j^2},
\end{align}
where the first inequality is from \eqref{eq:perpty1} and the definition of $\cP^+(\mu,n)$, the second inequality is due to the assumption $\hmu_{is}\leq \mu_i+\rho_i$, and  the last inequality is due to $s\geq N_i\log(T\epsilon_j^2/(VK))/{\kl(\mu_i+\rho_i,\mu_1-\epsilon_j)}$ and $b_s\geq 1/N_i$. The rest of proofs are similar to the proofs in Theorem \ref{thm:regret_rots}. Note that when $\PP(X_{is}\geq \mu_1-\epsilon_j)\leq V/(T\epsilon_j^2)$ holds, term $\ind\{G_{is}(\epsilon_j)>V/(T\epsilon_j^2)\}=0$. Now, we check the assumption $\hat{\mu}_{is}\leq \mu_i+\rho_i$ that is needed for \eqref{eq:+rots-asym-}.
 From Lemma \ref{lem:maximal-inequality}, we have $\PP(\hmu_{is}>\mu_i+\rho_i)\leq \exp(-s\rho_i^2/(2V))$. Furthermore, it holds that
\begin{align}
    \label{eq:+rots-asym-0}
    \sum_{s=1}^{\infty} e^{-\frac{s\rho_i^2}{2V}} \leq \frac{1}{e^{\rho_i^2/(2V)}-1} \leq \frac{2V}{\rho_i^2},
\end{align}
where the last inequality is due to the fact $1+x\leq e^x$ for all $x$. Let $Y_{is}$ be the event that $\hmu_{is}\leq \mu_i+\rho_i$ and $m=N_i\log(T\epsilon_j^2/(VK))/{\kl(\mu_i+\rho_i,\mu_1-\epsilon_j)}$. We further obtain
\begin{align}
\label{eq:+rots-asym-r1}
    \EE \left[\sum_{s=1}^{T} \ind\{G_{is}(\epsilon)>V/(T\epsilon_j^2) \} \right]&\leq \EE \left[\sum_{s=1}^{T} [\ind\{G_{is}(\epsilon)>V/(T\epsilon_j^2)  \}\mid Y_{is} ] \right]+\sum_{s=1}^{T}(1-\PP[Y_{is}]) \notag \\
    & \leq  \EE \left[\sum_{s=\lceil m \rceil}^{T} [\ind\{\PP(X_{is}>\mu_1-\epsilon_j)>V/(T\epsilon_j^2))  \}\mid Y_{is} ] \right] \notag \\
    & \qquad  +\lceil m \rceil+\sum_{s=1}^{T}(1-\PP[Y_{is}]) \notag \\
    & \leq \lceil m \rceil+\sum_{s=1}^{T}(1-\PP[Y_{is}]) \notag \\ 
    &\leq 1+\frac{2V}{\rho_i^2}+\frac{N_i\log(T\epsilon_j^2/V)}{\kl(\mu_i+\rho_i,\mu_1-\epsilon_j)},
\end{align}
where the first inequality is due to the fact that $\PP(A)\leq \PP(A\mid B)+1-\PP(B)$,  the third inequality is due to \eqref{eq:+rots-asym-} and the last inequality is due to   \eqref{eq:+rots-asym-0}. Substituting \eqref{eq:+rots-asym-r1} into \eqref{eq:+rots-asym-r0}, we complete the proof.

\section{Proof of the Asymptotic Optimality of $\algname^+$}\label{sec:proof_asym}
Now we prove the asymptotic regret bound of $\algname^+$ presented in Theorem \ref{thm:expTS_plus_minimax}. 

\subsection{Proof of the Main Result}
The proof of the this part shares many elements with finite time regret analysis. In what follows, we bound terms $A$ and $B$, respectively. 
\paragraph{Bounding Term $A$:}
We reuse the Lemma \ref{lem:thom-bound}. Then, it only remains term $\sum_{s=1}^{M_j}\EE\big[\big(1/G_{1s}(\epsilon)-1\big)\cdot \ind\{\hmu_{1s}\in L_s\} \big]$ to be bounded. We bound this term by the following lemma.
\begin{lemma}
\label{lem:+rots-asym-A}
 Let $M_j$, $G_{1s}(\epsilon_j)$, and $L_s$ be the same as defined in Lemma \ref{lem:thom-bound}.
 $$\sum_{s=1}^{M_j}  \EE_{\hmu_{1s}} \Bigg[\bigg(\frac{1}{G_{1s}(\epsilon_j)} -1\bigg) \cdot \ind\{\hmu_{1s}\in L_s \} \Bigg] =O(V^2K(\log \log T)^6+V(\log \log T)^2+K).$$
\end{lemma}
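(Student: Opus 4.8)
The plan is to mirror the proof of Lemma~\ref{lem:rots-asym-A} (the asymptotic under-estimation bound for $\algname$) and insert the single structural change caused by the greedy exploration step of $\algname^+$. Recall that under $\cP^+(\hmu_{1s},s)$ a sample equals $\hmu_{1s}$ with probability $1-1/K$ and is drawn from $\cP(\hmu_{1s},s)$ with probability $1/K$, so that
$$G_{1s}(\epsilon_j)=\PP(\theta_{1s}\geq\mu_1-\epsilon_j)=(1-1/K)\,\ind\{\hmu_{1s}\geq\mu_1-\epsilon_j\}+(1/K)\,\PP_{\cP}\big(\theta\geq\mu_1-\epsilon_j\mid\hmu_{1s}\big).$$
This is the only quantity in the decomposition \eqref{eq:finite-underest-0} that differs from the $\algname$ analysis. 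I would set $\epsilon_j=1/\log\log T$ throughout, so that $M_j=O(V\log T\,(\log\log T)^2)$, and split the expectation exactly as in \eqref{eq:finite-underest-0} into the three pieces $A_1$ (over $\hmu_{1s}\in[\mu_1-\epsilon_j/2,R_{\max}]$), $A_2$ (over $[\mu_1-\epsilon_j,\mu_1-\epsilon_j/2)$), and $A_3$ (over $[\mu_1-\epsilon_j-\alpha_s,\mu_1-\epsilon_j)$).

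First I would handle $A_1$ and $A_2$. In both regions $\hmu_{1s}\geq\mu_1-\epsilon_j$, so the greedy branch already delivers $\theta_{1s}=\hmu_{1s}\geq\mu_1-\epsilon_j$ and hence $G_{1s}(\epsilon_j)\geq 1-1/K\geq 1/2$; consequently $1/G_{1s}(\epsilon_j)-1\leq 1$. The estimates are then no worse than in the $\algname$ case: using the maximal inequality (Lemma~\ref{lem:maximal-inequality}) to control $\PP(\hmu_{1s}\leq\mu_1-\epsilon_j/2)\leq e^{-s\epsilon_j^2/(8V)}$ exactly as in \eqref{bounding-a_1}--\eqref{boundinga_2} gives $A_1+A_2=O(V/\epsilon_j^2)=O(V(\log\log T)^2)$, which accounts for the middle term of the claimed bound.

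The main work, and the only place the factor $K$ enters, is the term $A_3$. On this region $\hmu_{1s}<\mu_1-\epsilon_j$, so the greedy branch contributes nothing to $\{\theta_{1s}\geq\mu_1-\epsilon_j\}$ and $G_{1s}(\epsilon_j)=(1/K)\PP_{\cP}(\theta\geq\mu_1-\epsilon_j\mid\hmu_{1s})$; by the tail formula \eqref{eq:perpty1} this yields
$$A_3\leq K\sum_{s=1}^{M_j}\int_{\mu_1-\epsilon_j-\alpha_s}^{\mu_1-\epsilon_j} p(x)\,e^{sb_s\,\kl(x,\mu_1-\epsilon_j)}\,\dd x,$$
i.e.\ exactly $K$ times the integral already treated in the $\algname$ proof. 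I would then reuse verbatim the peeling argument of \eqref{eq:exp-asym-1}: partition $[\mu_1-\epsilon_j-\alpha_s,\mu_1-\epsilon_j]$ into $\lceil\log T\rceil$ sub-intervals on which $\kl(\cdot,\mu_1)$ is controlled through \eqref{eq:asym_xi}, bound each inner integral by $\int_{x_i}^{x_{i+1}}p(x)\,\dd x\leq e^{-s\kl(x_{i+1},\mu_1)}$ (Lemma~\ref{lem:book-minimax}) together with the three-point inequality \eqref{eq:kl-underest}, and obtain $O(s)$ per fixed $s$. Splitting $\sum_{s=1}^{M_j}$ at $s=\lceil 4V(\log\log T)^3\rceil$ as in \eqref{eq:rots:asym-A_3} then produces $I_1=O\big(\sum_{s\leq 4V(\log\log T)^3}s\big)=O(V^2(\log\log T)^6+1)$ and $I_2=O(V(\log\log T)^2)$, so that $A_3=O\big(K(V^2(\log\log T)^6+V(\log\log T)^2+1)\big)$. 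Since $(\log\log T)^4\to\infty$, the term $VK(\log\log T)^2$ is absorbed into $V^2K(\log\log T)^6$, leaving $A_3=O(V^2K(\log\log T)^6+K)$.

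Adding $A_1+A_2+A_3$ gives the claimed $O(V^2K(\log\log T)^6+V(\log\log T)^2+K)$. The only genuinely delicate step is the $A_3$ peeling: one must verify that the $s$-dependent width $\alpha_s\leq 4\log^+(T/(Ks))/s$ keeps the per-$s$ contribution at $O(s)$ for small $s$ while forcing it to be exponentially small once $s\gtrsim V(\log\log T)^3$, and that the single multiplicative $K$ carried through from the $1/K$ greedy probability does not interact with the $\log T$ appearing in $I_2$ (it does not, because $I_2$ is already $o(\log T)$). Everything else is a direct transcription of the $\algname$ computation.
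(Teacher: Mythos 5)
Your proposal is correct and follows essentially the same route as the paper: it reuses the $A_1+A_2$ bounds from the minimax lemma for $\algname^+$, extracts the multiplicative factor $K$ in $A_3$ from the $1/K$ sampling probability exactly as in \eqref{eq:exp+A_3}, and then applies the peeling bound $O(Ks)$ per term with the split at $s=\lceil 4V(\log\log T)^3\rceil$ as in \eqref{eq:rots:asym-A_3}. Your explicit remark that $VK(\log\log T)^2$ is absorbed into $V^2K(\log\log T)^6$ for large $T$ is a slightly more careful reconciliation of the proof's intermediate bound with the stated lemma than the paper itself provides.
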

Combining Lemma \ref{lem:thom-bound} and Lemma \ref{lem:+rots-asym-A} together and let $\epsilon=1/\log \log T$, we have
\begin{align*}
  A=O\big((V^2K(\log \log T)^6+V(\log \log T)^2+K\big).
\end{align*}

\paragraph{Bounding Term $B$:}
  Let $\rho_i=\epsilon_j=1/\log \log T$. Applying \eqref{lem:+mini-overest-1}, we have 
\begin{align}
   \EE\left[\sum_{t=K+1}^{T} \ind\{A_t=i, E_{i,\epsilon_j}^c(t) \} \right]=O(V\log^2 \log T)+\frac{N_i\log(T\epsilon_j^2/V)}{\kl(\mu_i+1/\log \log T,\mu_1-1/\log \log T)}.
\end{align}
Therefore, we have
\begin{align*}
    B&=\sum_{i\in S_{j}}\EE\left[ \sum_{t=K+1}^T \ind \{A_t=i, E_{i,\epsilon_j}^c (t) \} \right] \notag \\
    &\leq O(VK\log^2 \log T)+\sum_{i\in S_j}\frac{N_i\log(T/(V\log^2 \log T))}{\kl(\mu_i+1/\log \log T,\mu_1-1/\log \log T)}.
\end{align*}

\paragraph{Putting It Together:}
 Substituting the  bound of term $A$ and $B$ into \eqref{eq:++decomp_num_pull_sj+}, we have
\begin{align*}
    \sum_{i\in S_j}\EE[T_{i}(T)]=O(V^2K\log^6 \log T+VK\log^2 \log T+K)+\sum_{i\in S_j}\frac{N_i\log(T/(V\log^2 \log T))}{\kl(\mu_i+1/\log \log T,\mu_1-1/\log \log T)}.
\end{align*}
Note that for $T\rightarrow \infty$, $N_i\rightarrow 1$. Therefore,
\begin{align*}
    \lim_{T\rightarrow \infty} \sum_{i\in S_j}\frac{ \EE[T_{i}(T)]}{\log T}= \sum_{i\in S_j}\frac{1}{\kl(\mu_i,\mu_1)}.
\end{align*}
This completes the proof of the asymptotic regret.

\subsection{Proof of Lemma \ref{lem:+rots-asym-A}}
The proof of this Lemma shares many elements with the proof of Lemma \ref{lem:+minimax-undetestimation-rots}. We can use the bound of $A_1$ and $A_2$ in Lemma \ref{lem:+minimax-undetestimation-rots}. Let $\epsilon=1/\log \log T$, we have
\begin{align*}
    A_1+A_2=O(V(\log\log T)^2).
\end{align*}For term $A_3$,  from \eqref{eq:exp+A_3}, we have 
\begin{align*}
    A_3 &\leq K\sum_{s=1}^{M_j}\int_{\mu_1-\epsilon_j-\alpha_s}^{\mu_1-\epsilon_j} p(x) e^{sb_s\cdot \kl(x,\mu_1-\epsilon_j)} \dd x.
\end{align*}
Then, similar to \eqref{eq:exp-asym-1}, we have
\begin{align}
 \label{eq:+exp-asym-1}
    K\int_{\mu_1-\epsilon_j-\alpha_s}^{\mu_1-\epsilon_j} p(x) e^{sb_s\cdot \kl(x,\mu_1-\epsilon_j)} \dd x=O(Ks).
 \end{align}
Let $\epsilon=1/\log \log T$. By dividing $\sum_{s=1}^{M_j}$ into two terms $\sum_{s=1}^{\lceil 4V(\log \log T)^3\rceil}$ and $\sum_{s=\lceil 4V(\log \log T)^3\rceil}^{M_j}$, from \eqref{eq:rots:asym-A_3} and \eqref{eq:rots:asym-A_3-1}, we have
\begin{align*}
  A_3=O(V^2K(\log\log T)^6+VK(\log \log T)^2+K).
\end{align*}
Substituting the bound of $A_1$, $A_1$ and $A_3$ to  \eqref{eq:finite-underest-00}, we have
\begin{align*}
    A=O(V^2K(\log\log T)^6+VK(\log \log T)^2+K).
\end{align*}
This completes the proof.

\section{Proof of Technical Lemmas}
\label{sec:technical-lem}
In this section, we present the proofs of the remaining lemmas used in previous sections.

\subsection{Proof of Lemma \ref{lem:book-minimax}}

Let $\kl_{+}(x,y)=\kl(x,y)\ind(x\leq y)$. We only need to prove
$$\PP \bigg(\exists s\leq  f(\epsilon): \kl_+(\hat{\mu}_{1s},\mu_1)\geq 4\log(T/(bs))/s \bigg) =O\bigg(\frac{bV}{T\epsilon^2}\bigg).$$ 
The proof of this step relies on the standard ``peeling technique''. We have
\begin{align}
\label{eq:peeling-1}
    &\PP \bigg(\exists s\leq f(\epsilon): \kl_+(\hat{\mu}_{1s},\mu_1)\geq 4\log(T/(bs))/s \bigg) \notag \\
    &\leq  \sum_{n=0}^{\infty} \PP \bigg(\exists \frac{f(\epsilon)}{2^{n+1}}\leq s\leq \frac{f(\epsilon)}{2^{n}}: \kl_+(\hat{\mu}_{1s},\mu_1)\geq 4\log(T/(bs))/s  \bigg) \notag \\
    &\leq   \sum_{n=0}^{\infty} \PP \bigg(\exists \frac{f(\epsilon)}{2^{n+1}}\leq s\leq \frac{f(\epsilon)}{2^{n}}: \kl_+(\hat{\mu}_{1s},\mu_1)\geq \frac{4\log(T/(b\cdot f(\epsilon)/2^n))}{M/2^{n}}  \bigg) \notag \\
    & \leq \sum_{n=0}^{\infty} \exp \bigg(-\frac{f(\epsilon)}{2^{n+1}} \cdot \frac{4\log(T/(b\cdot f(\epsilon)/2^n))}{f(\epsilon)/2^{n}}  \cdot \bigg) \notag \\
    &= \sum_{n=0}^{\infty}\exp\big(-2\log(T/(b\cdot f(\epsilon)/2^{n}))\big) \notag \\
   &\leq   \sum_{n=0}^{\infty} \bigg(\frac{bf(\epsilon)}{T\cdot 2^{n}}\bigg)^2,
\end{align}
where the third inequality is due to Lemma \ref{lem:maximal-inequality}. Note that $f(\epsilon)\leq 32V\log(T\epsilon^2/(bV))/\epsilon^2$. We have
\begin{align*}
    \frac{bf(\epsilon)}{T\cdot 2^n}\leq  \frac{bf(\epsilon)}{T} \leq 32\log\bigg(\frac{T\epsilon^2}{bV}\bigg)\cdot\frac{bV}{T\epsilon^2}.
\end{align*}\
Continue on equation \eqref{eq:peeling-1}, we have
\begin{align*}
    \sum_{n=0}^{\infty} \bigg(\frac{bf(\epsilon)}{T\cdot 2^{n}}\bigg)^2&\leq \sum_{n=0}^{\infty} \bigg(\frac{bf(\epsilon)}{T\cdot 2^{n}}\cdot 32\log\bigg(\frac{T\epsilon^2}{bV}\bigg)\cdot\frac{bV}{T\epsilon^2}\bigg) \notag \\
    &\leq32^2\sum_{n=0}^{\infty}\bigg(\frac{bV}{T\epsilon^2\cdot 2^{n}}\cdot \bigg(\log\bigg(\frac{T\epsilon^2}{bV}\bigg)\bigg)^2\cdot\frac{bV}{T\epsilon^2}\bigg) \notag \\
    & \leq O\bigg(\frac{bV}{T\epsilon^2} \bigg),
\end{align*}
where the last inequality is due to $(\log(x))^2/x\leq 1$ for $x\geq 1$.

\subsection{Proof of Lemma \ref{lem:exptofunction}}

We decompose the proof of Lemma \ref{lem:exptofunction} into two cases:  Case 1: $p(x)$ is in continues form, and Case 2: $p(x)$ is in discrete form. We first focus on the case that $p(x)$ is in continues form.

Divide the interval $[x_{0}, x_n]$ into $n$ sub-intervals $[x_0,x_{1})$,$[x_{1},x_{2}),\cdots, [x_{n-1},x_n]$, such that $\mu_1-\epsilon-\alpha_s= x_0 \leq x_{1} \leq  \cdots \leq x_{n-1}\leq x_n=\mu_1-\epsilon-b_s$, $\int_{x_0}^{x_1}q(x) \dd x =\int_{x_{i-1}}^{x_{i}}q(x) \dd x $ for all $i\in [n]$. 
We now define a new function $p_n(x)$. Assume $p_{n}(x)$ has been defined on $[x_{i},x_n]$. We define $p_{n}(x)$ on $[x_{i-1},x_{i})$ in the following way.   We consider two cases. \\ \textbf{Case 1:} $\int_{x_{i}}^{x_n} p_{n}(x) \dd x +\int_{x_{i-1}}^{x_{i}} p(x) \dd x \geq e^{-s\cdot \kl(x_n, \mu_1)}-e^{-s\cdot \kl(x_{i-1},\mu_1)}$. Then, we define the function  $p_n(x)=p(x)$ for $x\in [x_{i-1},x_{i})$. \\
\textbf{Case 2:} $ \int_{x_{i}}^{x_{n}} p_{n}(x) \dd x +\int_{x_{i-1}}^{x_{i}} p(x) \dd x< e^{-s\cdot \kl(x_n, \mu_1)}-e^{-s\cdot \kl(x_{i-1},\mu_1)}$. Let $\beta=e^{-s\cdot \kl(x_n, \mu_1)}-e^{-s\cdot \kl(x_{i-1},\mu_1)}-\int_{x_{i}}^{x_{n}} p_{n}(x) \dd x -\int_{x_{i-1}}^{x_{i}} p(x) \dd x$.  Then, define $p_n(x)=p(x)+\beta/(x_{i}-x_{i-1})$.  Hence, for case 2, it holds that
\begin{align}
\label{eq:defp'x}
    \int^{x_n}_{x_{i-1}} p_n(x) \dd x=e^{-s\cdot \kl(x_n, \mu_1)}-e^{-s\cdot \kl(x_{i-1},\mu_1)}.
\end{align}
Let $y_n=x_n=\mu_1-\epsilon$.  For all $i\in [n]$, define $y_i=x_i$ if  $\int^{y_n}_{x_i} p_n(x) \dd x=e^{-s\cdot \kl(\mu_1-\epsilon-b_s, \mu_1)}-e^{-s\cdot \kl(x_i,\mu_1)}$. Otherwise, define $y_i$ such that
\begin{align*}
    \int_{y_i}^{y_n} p_n(x) \dd x=e^{-s\cdot \kl(\mu_1-\epsilon-b_s, \mu_1)}-e^{-s\cdot \kl(x_i,\mu_1)}.
\end{align*}
From the definition,  we know
\begin{align}
\label{eq:full-exp-dis-0}
   x_i\leq y_i.
\end{align}
Since $p_n(x)\geq p(x)$ holds for any $x\in [x_0,x_n]$ and $g(x)\geq 0$, we have 
\begin{align}
\label{eq:full-exp-dis}
    \int_{\mu_1-\epsilon-\alpha_s}^{\mu_1-\epsilon-b_s}p(x) g(x) \dd x \leq  \int_{\mu_1-\epsilon-\alpha_s}^{\mu_1-\epsilon-b_s}p_n(x) g(x) \dd x.
\end{align}
Note that $g(x)$ is monotone decreasing 
\begin{align*}
g'(x)&=\big( e^{sb_s\cdot \kl(x,\mu_1-\epsilon)}\big)'=sb_s (\kl(x,\mu_1-\epsilon))' e^{sb_s\cdot \kl(x,\mu_1-\epsilon)} \notag \\
&=sb_s e^{sb_s\cdot \kl(x,\mu_1-\epsilon)} \bigg( \int_{x}^{\mu_1-\epsilon} \frac{t-x}{V(t)} \dd t \bigg)' \notag \\
    & =sb_s e^{sb_s\cdot \kl(x,\mu_1-\epsilon)} \cdot \int_{x}^{\mu_1-\epsilon}  \frac{-1}{V(t)} \dd t \notag \\
    & \leq 0.
    \end{align*}
We have 
\begin{align}
\label{eq:y0-yn}
    \sum_{i=0}^{n-1}\int^{y_{i+1}}_{y_{i}} p_n(x) g(x)\dd x &\leq  \sum_{i=0}^{n-1} g(y_{i})\int_{y_i}^{y_{i+1}} p_n(x)  \dd x \notag \\
    & \leq \sum_{i=0}^{n-1} g(x_{i})\int_{y_i}^{y_{i+1}} p_n(x)  \dd x=\sum_{i=0}^{n-1} g(x_{i})\int_{x_i}^{x_{i+1}} q(x)  \dd x \notag \\
    & \leq \sum_{i=0}^{n-1} \int_{x_i}^{x_{i+1}} q(x) g(x) \dd x + \sum_{i=0}^{n-1} (g(x_{i})-g(x_{i+1}))\int_{x_i}^{x_{i+1}} q(x)  \dd x \notag \\
    & =  \sum_{i=0}^{n-1} \int_{x_i}^{x_{i+1}} q(x) g(x) \dd x + (g(x_0)-g(x_n])\int_{x_0}^{x_1} q(x) \dd x \notag \\
    & \leq \sum_{i=0}^{n-1} \int_{x_i}^{x_{i+1}} q(x) g(x) \dd x + (g(x_0)-g(x_n))/n.
\end{align}
In the first inequality, we use the fact that  $g(x)$ is monotone decreasing. The second inequality is due to \eqref{eq:full-exp-dis-0}. The first equality is from the fact that 
\begin{align*}
    \int_{x_i}^{x_{i+1}} q(x) \dd x = e^{-s\cdot \kl(x,\mu_1)} \bigg|_{x_i}^{x_{i+1}}=e^{-s\cdot \kl(x_{i+1}, \mu_1)}-e^{-s\cdot \kl(x_i,\mu_1)},
\end{align*}
and the definition of $y_i$ such that 
\begin{align*}
    \int_{y_i}^{y_{i+1}} p_n(x)  \dd x=\int_{y_i}^{y_{n}} p_n(x)  \dd x-\int_{y_{i+1}}^{y_{n}} p_n(x)  \dd x= e^{-s\cdot \kl(x_{i+1}, \mu_1)}-e^{-s\cdot \kl(x_i,\mu_1)}=\int_{x_i}^{x_{i+1}} q(x) \dd x.
\end{align*} The third inequality is due to $\sum_{i=0}^{n-1} \int^{x_i}_{x_{i+1}} q(x) g(x) \dd x\geq  \sum_{i=0}^{n-1} g(x_{i})\int^{x_i}_{x_{i+1}} q(x)  \dd x$.  
Now, we focus on bounding term $\int_{x_0}^{y_0} p_{n}(x) g(x) \dd x$. Note that
\begin{align*}
  \int_{x_0}^{y_0} p_{n}(x) g(x) \dd x \leq g(x_0)  \int_{x_0}^{y_0} p_{n}(x) \dd x.
\end{align*}
Hence, we only need to bound $ \int_{x_0}^{y_0} p_{n}(x) \dd x$. Let
\begin{align*}
    n'=\min\big\{j\in\{0,\cdots,n\}: p_{n}(x)=p(x) \text{\ for all \ } x \in [x_0,x_j)  \big\}.
\end{align*}
From the definition,  for $x< x_{n'}$, $p_{n}(x)=p(x)$. Besides, for $x\in [x_{n'},x_{n'+1})$, it must belong to case 2 in the definition of $p_{n}(x)$. Hence, 
\begin{align*}
    \int_{x_{n'}}^{x_n}p_{n}(x) \dd x=e^{-s\kl(x_n,\mu_1)}-e^{-s\kl(x_{n'},\mu_1)}.
\end{align*} 
Therefore, $y_{n'}=x_{n'}$.
Further, from Lemma \ref{lem:maximal-inequality}, we have 
\begin{align}
\label{eq:yn'}
 \int_{x_0}^{y_{n'}} p_n(x) \dd x = \int_{x_0}^{y_{n'}} p(x) \dd x \leq \Pr(\hmu_{1s}\leq y_{n'})\leq e^{-s\cdot \kl(y_{n'},\mu_1)}.    
\end{align}
Now, we have 
\begin{align}
\label{eq:yn-xn}
    \int^{y_0}_{x_0}p_{n}(x) \dd x &=  \int^{y_{n'}}_{x_0}p_{n}(x) \dd x -\int_{y_{0}}^{y_{n'}}p_{n}(x) \dd x \notag \\
    & \leq   e^{-s\kl({y_{n'}},\mu_1)}-\bigg( \int^{x_n}_{y_{0}}p_{n}(x) \dd x-\int^{x_n}_{y_{n'}}p_{n}(x) \dd x \bigg) \notag \\
    & = e^{-s\kl({y_{n'}},\mu_1)}-  \big(e^{-s\kl(\mu_1-\epsilon,\mu_1)}-e^{-s\kl(x_{n},\mu_1)}-e^{-s\kl(\mu_1-\epsilon,\mu_1)}+e^{-s\kl(x_{n'},\mu_1)} \big) \notag \\
    & = e^{-s\kl(x_0,\mu_1)},
\end{align}
where the first inequality is due to \eqref{eq:yn'}, and  the last inequality we use the fact $y_{n'}=x_{n'}$.
Finally, we have
\begin{align*}
     &\int^{\mu_1-\epsilon}_{\mu_1-\epsilon-\alpha_s}  p(x) g(x) \dd x\\
     &\leq \int^{\mu_1-\epsilon}_{\mu_1-\epsilon-\alpha_s} p_n(x) g(x) \dd x \notag \\
     &= \sum_{i=0}^{n-1}\int^{y_{i+1}}_{y_{i}} p_n(x) g(x) \dd x + \int^{y_0}_{x_0} p_{n}(x) g(x) \dd x  \notag \\
     & \leq \sum_{i=0}^{n-1} \int_{x_i}^{x_{i+1}} q(x) g(x) \dd x + (g(x_0)-g(x_n])/n+ g(x_0)\int^{y_0}_{x_0} p_{n}(x) \dd x \notag \\
     & \leq \sum_{i=0}^{n-1} \int^{x_{i+1}}_{x_{i}} q(x) g(x) \dd x + (g(x_0)-g(x_n])/n+ g(\mu_1-\epsilon-\alpha_s) e^{-s\kl(\mu_1-\epsilon-\alpha_s,\mu_1)} \notag \\
     & =\int_{\mu_1-\epsilon-\alpha_s}^{\mu_1-\epsilon-b_s} q(x) g(x) \dd x+g(\mu_1-\epsilon-\alpha_s) e^{-s\kl(\mu_1-\epsilon-\alpha_s,\mu_1)} + (g(x_0)-g(x_n))/n,
\end{align*}
where the second inequality is due to \eqref{eq:y0-yn}, and the third inequality is due to \eqref{eq:yn-xn}.
Note that
\begin{align*}
   \lim_{n\rightarrow \infty} (g(x_0)-g(x_n))/n =0.
\end{align*}
Therefore, it holds that
\begin{align*}
     \int^{\mu_1-\epsilon}_{\mu_1-\epsilon-\alpha_s}  p(x) g(x) \dd x &\leq \int_{\mu_1-\epsilon-\alpha_s}^{\mu_1-\epsilon} q(x) g(x) \dd x+g(\mu_1-\epsilon-\alpha_s) e^{-s\kl(\mu_1-\epsilon-\alpha_s,\mu_1)} \notag\\
     &\qquad+ (g(x_0)-g(x_n))/n,
\end{align*}
which completes the Lemma for continues form $p(x)$.\\  Next, we assume $p(x)$ is in discrete form. Let $z_0,\cdots, z_k$ be all the points such that $p(z_i)>0$ and $z_i\in[\mu_1-\epsilon-\alpha_s,\mu_1-\epsilon]$. We need to prove
\begin{align*}
\int_{\mu_1-\epsilon-\alpha_s}^{\mu_1-\epsilon-b_s}  q(x)g(x) \dd x + e^{-s\cdot \kl(\mu_1-\epsilon-\alpha_s,\mu_1)}\cdot g(\mu_1-\epsilon-\alpha_s)\geq \sum_{i=0}^{k} p(z_i) g(z_i). 
\end{align*}
 We assume $z_0\leq z_1\leq\cdots\leq z_k$. Define $h(z_i)=\int_{z_{i-1}}^{z_{i}} q(x) \dd x$ for $i\in [K]$ and $h(z_0)=\int_{\mu_1-\epsilon-\alpha_s}^{z_0} q(x) \dd x$.
We have 
\begin{align}
\label{eq:discrete0}
    \int_{\mu_1-\epsilon-\alpha_s}^{\mu_1-\epsilon}  q(x)g(x) \dd x -\sum_{i=0}^{k} p(z_i) g(z_i) &\geq \int_{\mu_1-\epsilon-\alpha_s}^{z_k}  q(x)g(x) \dd x -\sum_{i=0}^{k} p(z_i) g(z_i) \notag \\
    & \geq \sum_{i=1}^{k} g(z_{i})\int_{z_{i-1}}^{z_i} q(x) \dd x+ g(z_0)\int_{\mu_1-\epsilon-\alpha_s}^{z_0} q(x)  \dd x \notag\\
    &\qquad-\sum_{i=0}^{k} p(z_i) g(z_i) \notag \\
    & = \sum_{i=0}^{k} (h(z_i)-p(z_i)) g(z_i).
\end{align}
We define $k'=\min \{j\in \{ 0,\cdots, k\}: p(z_i)-h(z_i) \geq 0 \text{ for all } i\leq j\}$.
If such $k'$ does not exist, then  $p(z_i)-h(z_i) < 0 $ always holds. Hence,  $\sum_{i=0}^{k} (h(z_i)-p(z_i)) g(z_i)\geq 0$.  Otherwise, $k'$ exists. From lemma \ref{lem:maximal-inequality}, we have
\begin{align}
\label{eq:discrete1}
    \sum_{i=0}^{k'} p(z_i)  \leq \Pr(\hmu_{1s}\leq z_{k'})\leq e^{-s\cdot \kl(z_{n'},\mu_1)}.  
\end{align}
Continue on \eqref{eq:discrete0}, we have
\begin{align}
\label{eq:discrete3}
    \sum_{i=0}^{k} (h(z_i)-p(z_i)) g(z_i) & \geq -\sum_{i=0}^{k'} (p(z_i)-h(z_i)) g(z_i) \notag \\
    & \geq -g(\mu_1-\epsilon-\alpha_s) \sum_{i=0}^{k'} p(z_i)-h(z_i) \notag \\
    & =-g(\mu_1-\epsilon-\alpha_s) \bigg(\sum_{i=0}^{k'}p(z_i)- \sum_{i=0}^{k'}h(z_i)\bigg) \notag \\
    & \geq -g(\mu_1-\epsilon-\alpha_s) \bigg(e^{-s\kl(z_{k'},\mu_1)}- \int_{\mu_1-\epsilon-\alpha_s}^{z_{k'}} q(x) \dd x\bigg) \notag \\
    & =  -g(\mu_1-\epsilon-\alpha_s) \cdot e^{-s\kl(\mu_1-\epsilon-\alpha_s,\mu_1)},
\end{align}
where the first inequality is from the definition of $k'$, the second inequality is due to that $g(\cdot)$ is monotone decreasing, the first equality is due to $p(z_i)-h(z_i)\geq 0$ for all $i\in \{0,1,\cdots, k' \}$,  the third inequality is due to \eqref{eq:discrete1}, and the last equality is due to 
\begin{align*}
    \int_{\mu_1-\epsilon-\alpha_s}^{z_{k'}} q(x) \dd x=e^{-s\kl(x,\mu_1)} \bigg|_{\mu_1-\epsilon-\alpha_s}^{z_{k'}}= e^{-s\kl(z_{k'},\mu_1)}-e^{-s\kl(\mu_1-\epsilon-\alpha_s,\mu_1)}.
\end{align*}
Combining \eqref{eq:discrete1} and \eqref{eq:discrete3}, we complete the proof.

\section{Useful Inequalities}
\begin{lemma}[Maximal Inequality~\citep{menard2017minimax}]\label{lem:maximal-inequality}
Let $N$ and $M$ be two real numbers in $\RR^+\times \overline{\RR^+}$, let $\gamma>0$, and $\hmu_{n}$ be the empirical mean of $n$ random variables i.i.d.\ according to the distribution $\nu_{b'^{-1}(\mu)}$. Then, for $x\leq \mu$,
\begin{align}\label{eq:kl-1}
\begin{split}
    \PP(\exists N\leq n \leq M, \hmu_n\leq x)&\leq e^{-N\cdot \kl(x,\mu)},  \\
    \PP(\exists N\leq n \leq M, \hmu_n\leq x)&\leq e^{-N(x-\mu)^2/(2V)}.
\end{split}
\end{align}
Meanwhile, for every $x\geq \mu$, 
\begin{align}\label{eq:kl-2}
    \PP(\exists N\leq n \leq M, \hmu_n\geq x)&\leq e^{-N(x-\mu)^2/(2V)}.
\end{align}
\end{lemma}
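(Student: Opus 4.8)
The plan is to establish the maximal inequality through an exponential (Wald) martingale together with Ville's maximal inequality for nonnegative supermartingales. Write $\theta=b'^{-1}(\mu)$ for the true parameter, let $Y_1,Y_2,\dots$ be the i.i.d.\ samples drawn from $\nu_\theta$, and set $S_n=\sum_{i=1}^n Y_i$ so that $\hmu_n=S_n/n$. For the lower tail I fix $x\le\mu$ and choose $\lambda=b'^{-1}(x)-\theta$; since $b''>0$ makes $b'$ increasing, $\lambda\le 0$. With $\psi(\lambda)=b(\theta+\lambda)-b(\theta)$ I define
\[
W_n=\exp\!\big(\lambda S_n-n\,\psi(\lambda)\big).
\]
A one-line computation from the density $p_\theta(y)=\exp(y\theta-b(\theta)+c(y))$ shows $\EE[e^{\lambda Y_1}]=e^{\psi(\lambda)}$, so each factor $e^{\lambda Y_i-\psi(\lambda)}$ has unit mean and $(W_n)$ is a nonnegative martingale with $\EE[W_0]=1$.

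The crux is a deterministic identity linking the exponent to the KL rate: writing $\theta_x=b'^{-1}(x)$ so that $x=b'(\theta_x)$ and $\lambda=\theta_x-\theta$, the definition \eqref{eq:KL-def} gives
\[
\lambda x-\psi(\lambda)=b(\theta)-b(\theta_x)-b'(\theta_x)(\theta-\theta_x)=\kl(x,\mu).
\]
On the event $\{\hmu_n\le x\}$ I have $S_n\le nx$, and since $\lambda\le 0$ this gives $\lambda S_n\ge \lambda nx$, hence $W_n\ge\exp\!\big(n(\lambda x-\psi(\lambda))\big)=\exp\!\big(n\,\kl(x,\mu)\big)$. Because $\kl(x,\mu)\ge 0$, for any $n\ge N$ this is at least $\exp(N\,\kl(x,\mu))$, so the event $\{\exists\,N\le n\le M:\hmu_n\le x\}$ is contained in $\{\sup_{n\ge 0}W_n\ge \exp(N\,\kl(x,\mu))\}$.

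Applying Ville's inequality $\PP(\sup_{n\ge0}W_n\ge c)\le\EE[W_0]/c=1/c$ with $c=\exp(N\,\kl(x,\mu))$ then yields the first bound in \eqref{eq:kl-1}, and the second follows at once from the Pinsker-type estimate \eqref{eq:pinsk}, $\kl(x,\mu)\ge(x-\mu)^2/(2V)$, which only weakens the exponent. The upper tail \eqref{eq:kl-2} is handled by the same construction for $x\ge\mu$, where $\lambda=\theta_x-\theta\ge 0$ and $\{\hmu_n\ge x\}$ again forces $\lambda S_n\ge\lambda nx$, so the identical martingale bound plus Pinsker applies. The main obstacle is explaining why the threshold carries the factor $N$ rather than $1$: this rests on the observation that any deviation necessarily occurs at some time $n\ge N$, where $W_n\ge\exp(n\,\kl)\ge\exp(N\,\kl)$, so restricting to $n\ge N$ sharpens the threshold $c$ in Ville's inequality, while the cutoff $n\le M$ is harmless since it only shrinks the event.
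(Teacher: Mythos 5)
Your proof is correct. Note that the paper does not prove Lemma \ref{lem:maximal-inequality} at all---it imports it verbatim from \citet{menard2017minimax}---and your argument (the exponential Wald martingale $W_n=\exp(\lambda S_n - n\psi(\lambda))$, the tilting identity $\lambda x-\psi(\lambda)=\kl(x,\mu)$ obtained from \eqref{eq:KL-def}, Ville's maximal inequality with threshold $e^{N\kl(x,\mu)}$, and then \eqref{eq:pinsk} to pass to the sub-Gaussian form) is essentially the standard proof given in that cited reference, so there is nothing to fix.
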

\begin{lemma}[Tail Bound for Gaussian Distribution]
\label{lem:gaussian-tail} 
 For a random variable $Z\sim\cN(\mu,\sigma^2)$, 
\begin{equation}
\label{eq:lowerbound-G}
  \frac{e^{-z^2/2}}{z\cdot \sqrt{2\pi}} \geq \PP(Z>\mu+z\sigma)\geq \frac{1}{\sqrt{2\pi}}\frac{z}{z^2+1} e^{-\frac{z^2}{2}}.
\end{equation} 
Besides, for $0\leq z\leq 1$,
\begin{align*}
    \PP(Z>\mu+z\sigma)\geq \frac{1}{\sqrt{8\pi}} e^{-\frac{z^2}{2}}. 
\end{align*}
\end{lemma}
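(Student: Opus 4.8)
The plan is to reduce everything to the standard normal tail and then establish all three bounds by elementary calculus. First I would normalize: writing $W=(Z-\mu)/\sigma\sim\cN(0,1)$, we have $\PP(Z>\mu+z\sigma)=\PP(W>z)$, so it suffices to bound the standard Gaussian tail $\Psi(z):=\int_z^\infty \tfrac{1}{\sqrt{2\pi}}e^{-t^2/2}\,\dd t$ for $z>0$. For the upper bound I would use that $t/z\ge 1$ whenever $t\ge z>0$, so that
\[
\Psi(z)\le \int_z^\infty \frac{t}{z}\cdot\frac{1}{\sqrt{2\pi}}e^{-t^2/2}\,\dd t = \frac{1}{z\sqrt{2\pi}}\Big[-e^{-t^2/2}\Big]_z^\infty=\frac{e^{-z^2/2}}{z\sqrt{2\pi}},
\]
which is exactly the claimed upper bound.

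For the lower bound $\Psi(z)\ge \tfrac{1}{\sqrt{2\pi}}\tfrac{z}{z^2+1}e^{-z^2/2}$, I would introduce the auxiliary function
\[
g(z):=\sqrt{2\pi}\,\Psi(z)-\frac{z}{z^2+1}\,e^{-z^2/2},
\]
and show that $g$ is strictly decreasing with $\lim_{z\to\infty}g(z)=0$, which forces $g(z)\ge 0$. The key is the derivative: since $\tfrac{\dd}{\dd z}[\sqrt{2\pi}\,\Psi(z)]=-e^{-z^2/2}$ and a direct computation gives $\tfrac{\dd}{\dd z}\big[\tfrac{z}{z^2+1}e^{-z^2/2}\big]=e^{-z^2/2}\tfrac{1-2z^2-z^4}{(z^2+1)^2}$, the two contributions combine into $g'(z)=-e^{-z^2/2}\tfrac{(z^2+1)^2+1-2z^2-z^4}{(z^2+1)^2}$, whose numerator collapses to the constant $2$, so that $g'(z)=-\tfrac{2}{(z^2+1)^2}e^{-z^2/2}<0$. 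Combined with $g(\infty)=0$ this yields the lower bound for all $z>0$.

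For the final bound on $[0,1]$, I would first note $\tfrac{1}{\sqrt{8\pi}}=\tfrac{1}{2\sqrt{2\pi}}$, so it suffices to show $\phi(z):=\sqrt{2\pi}\,\Psi(z)-\tfrac12 e^{-z^2/2}\ge 0$ on $[0,1]$. Differentiating gives $\phi'(z)=e^{-z^2/2}\tfrac{z-2}{2}<0$ for $z\in[0,1]$, so $\phi$ is decreasing on $[0,1]$ and attains its minimum at $z=1$; hence $\phi(z)\ge\phi(1)$ throughout $[0,1]$. Finally I would invoke the Komatsu-type lower bound just proven, evaluated at $z=1$, namely $\sqrt{2\pi}\,\Psi(1)\ge\tfrac{1}{1^2+1}e^{-1/2}=\tfrac12 e^{-1/2}$, which is precisely $\phi(1)\ge 0$, completing the proof.

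The only non-routine step is the derivative computation in the lower-bound argument: one must verify that after combining $-e^{-z^2/2}$ with the derivative of the rational-times-Gaussian term, the numerator $(z^2+1)^2+1-2z^2-z^4$ telescopes to the constant $2$, which is what makes $g'$ negative uniformly in $z$. Everything else—the $t/z$ majorization for the upper bound, the monotonicity of $\phi$, and the chaining of the $[0,1]$ estimate to the $z=1$ instance of the Komatsu bound—is straightforward.
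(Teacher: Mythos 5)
Your proposal is correct, and it is both more self-contained and more careful than the paper's own treatment. The paper does not actually prove the two-sided bound \eqref{eq:lowerbound-G}: it cites it directly from \citet{abramowitz1964handbook} (Formula 7.1.13), whereas you derive the upper bound by the standard $t/z$ majorization under the integral and the Komatsu-type lower bound by a monotonicity argument; your key computation, that the numerator $(z^2+1)^2+1-2z^2-z^4$ collapses to the constant $2$ so that $g'(z)=-2e^{-z^2/2}/(z^2+1)^2<0$, checks out, and $g$ strictly decreasing with $g(z)\to 0$ as $z\to\infty$ indeed forces $g\geq 0$. The more significant difference is in the bound on $[0,1]$. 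The paper's one-line argument is the chain $\PP(Z>\mu+z\sigma)\geq \PP(Z>\mu+\sigma)\geq \frac{1}{\sqrt{8\pi}}e^{-z^2/2}$, but the second inequality in that chain is false for small $z$: at $z=0$ it asserts $\PP(Z>\mu+\sigma)\geq 1/\sqrt{8\pi}\approx 0.199$, while $\PP(Z>\mu+\sigma)\approx 0.159$. Discarding the $z$-dependence on the probability side while keeping it inside the exponential makes the reduced claim \emph{stronger} than the original for $z<1$, not weaker, so the paper's reduction does not go through as written (the lemma itself is still true). Your argument repairs exactly this step: you compare $\sqrt{2\pi}\,\PP(Z>\mu+z\sigma)$ with $\frac{1}{2}e^{-z^2/2}$ as functions of $z$, show the difference $\phi$ is decreasing on $[0,1]$ since $\phi'(z)=e^{-z^2/2}(z-2)/2<0$ there, and hence reduce the claim to the single endpoint $z=1$, where it follows from the Komatsu bound $\sqrt{2\pi}\,\PP(Z>\mu+\sigma)\geq \frac{1}{2}e^{-1/2}$. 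This is the correct way to make the intended reduction to $z=1$ work, so your proof not only takes a different, fully elementary route but also fixes a genuine gap in the paper's own proof of the second statement.
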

\begin{proof}
 \eqref{eq:lowerbound-G} is from \citet{abramowitz1964handbook}. For the second statement, we have that for $0\leq z\leq 1$,
\begin{align*}
    \PP(Z>\mu+z\sigma) \geq \PP(Z>\mu+\sigma) \geq \frac{1}{\sqrt{8\pi}} e^{-\frac{z^2}{2}},
\end{align*}
where the last inequality is due to \eqref{eq:lowerbound-G}.
\end{proof}

\bibliographystyle{ims}
\bibliography{reference}

\end{document}